\newtheorem{theorem}{Theorem}
\newtheorem{lemma}{Lemma}
\newtheorem{corollary}{Corollary}
\newcommand{\reals}{\mathbb{R}}
\newcommand{\E}{\mathbb{E}}
\newcommand{\be}{\mathbf{e}}
\newcommand{\bx}{\mathbf{x}}
\newcommand{\bw}{\mathbf{w}}
\newcommand{\bg}{\mathbf{g}}
\newcommand{\bu}{\mathbf{u}}
\newcommand{\bv}{\mathbf{v}}
\newcommand{\bz}{\mathbf{z}}
\newcommand{\bc}{\mathbf{c}}
\newcommand{\bs}{\mathbf{s}}
\newcommand{\Ocal}{\mathcal{O}}
\newcommand{\norm}[1]{\|#1\|}
\newcommand{\inner}[1]{\langle#1\rangle}
\newcommand{\secref}[1]{Section~\ref{#1}}
\newcommand{\subsecref}[1]{Subsection~\ref{#1}}
\renewcommand{\eqref}[1]{Eq.~(\ref{#1})}
\newcommand{\lemref}[1]{Lemma~\ref{#1}}
\newcommand{\thmref}[1]{Theorem~\ref{#1}}
\DeclareMathOperator{\Tr}{Tr}
\title{Fast Stochastic Algorithms for SVD and PCA:\\ Convergence Properties and Convexity}
\author{Ohad Shamir\\Weizmann Institute of Science\\\texttt{ohad.shamir@weizmann.ac.il}}
\date{}
\begin{document}

\maketitle

\begin{abstract}
We study the convergence properties of the VR-PCA algorithm introduced by
\cite{shamir2015stochastic} for fast computation of leading singular
vectors. We prove several new results, including a formal analysis of a
block version of the algorithm, and convergence from random initialization. We also make a few observations of independent interest, such as how
pre-initializing with just a single exact power iteration can significantly improve the runtime of stochastic methods, and what are the convexity and
non-convexity properties of the underlying optimization problem.
\end{abstract}

\section{Introduction}

We consider the problem of recovering the top $k$ left singular
vectors of a $d\times n$ matrix $X=(\bx_1,\ldots,\bx_n)$, where $k\ll d$.
This is equivalent to recovering the top $k$ eigenvectors of $XX^\top$, or
equivalently, solving the optimization problem
\begin{equation}\label{eq:optproblem}
\min_{W\in\reals^{d\times k}:W^\top
W=I}-W^\top\left(\frac{1}{n}\sum_{i=1}^{n}\bx_i\bx_i^\top\right)W.
\end{equation}
This is one of the most fundamental matrix computation problems, and has
numerous uses (such as low-rank matrix approximation and principal component
analysis).

For large-scale matrices $X$, where exact eigendecomposition is infeasible,
standard deterministic approaches are based on power iterations or variants
thereof (e.g. the Lanczos method) \cite{golub2012matrix}. Alternatively, one
can exploit the structure of \eqref{eq:optproblem} and apply stochastic
iterative algorithms, where in each iteration we update a current $d\times k$
matrix $W$ based on one or more randomly-drawn columns $\bx_i$ of $X$. Such
algorithms have been known for several decades
(\cite{krasulina1969method,oja1982simplified}), and enjoyed renewed interest
in recent years, e.g.
\cite{ACLS12,balsubramani2013fast,arora2013stochastic,hardt2014noisy,de2015global}.
Another stochastic approach is based on random projections, e.g.
\cite{halko2011finding,woodruff2014sketching}.

Unfortunately, each of these algorithms suffer from a
different disadvantage: The deterministic algorithms are accurate (runtime
logarithmic in the required accuracy $\epsilon$, under an eigengap condition), but require a full pass
over the matrix for each iteration, and in the worst-case many such passes
would be required (polynomial in the eigengap). On the other hand, each iteration of the stochastic
algorithms is cheap, and their number is independent of the size of the
matrix, but on the flip side, their noisy stochastic nature means they are
not suitable for obtaining a high-accuracy solution (the runtime scales
polynomially with $\epsilon$).

Recently, \cite{shamir2015stochastic} proposed a new practical algorithm,
VR-PCA, for solving \eqref{eq:optproblem}, which has a
``best-of-both-worlds'' property: The algorithm is based on cheap stochastic
iterations, yet the algorithm's runtime is logarithmic in the required
accuracy $\epsilon$. More precisely, for the case $k=1$, $\bx_i$ of bounded norm, and when there is an
eigengap of $\lambda$ between the first and second leading eigenvalues
of the covariance matrix $\frac{1}{n}XX^\top$, the required runtime was shown to be on the order of
\begin{equation}\label{eq:runtimeme}
d\left(n+\frac{1}{\lambda^2}\right)\log\left(\frac{1}{\epsilon}\right).
\end{equation}
The algorithm is therefore suitable for obtaining high accuracy solutions
(the dependence on $\epsilon$ is logarithmic), but essentially at the cost of
only $\Ocal(\log(1/\epsilon))$ passes over the data. The algorithm is based
on a recent variance-reduction technique designed to speed up stochastic
algorithms for \emph{convex} optimization problems
(\cite{johnson2013accelerating}), although the optimization problem in
\eqref{eq:optproblem} is inherently non-convex. See \secref{sec:alg} for a
more detailed description of this algorithm, and \cite{shamir2015stochastic}
for more discussions as well as empirical results.

The results and analysis in \cite{shamir2015stochastic} left several issues
open. For example, it is not clear if the quadratic dependence on $1/\lambda$
in \eqref{eq:runtimeme} is necessary, since it is worse than the linear (or
better) dependence that can be obtained with the deterministic algorithms
mentioned earlier, as well as analogous results that can be obtained with
similar techniques for convex optimization problems (where $\lambda$ is the strong convexity parameter). Also, the analysis was
only shown for the case $k=1$, whereas often in practice, we may want to
recover $k>1$ singular vectors simultaneously. Although
\cite{shamir2015stochastic} proposed a variant of the algorithm for that
case, and studied it empirically, no analysis was provided. Finally, the
convergence guarantee assumed that the algorithm is
initialized from a point closer to the optimum than what is attained with
standard random initialization. Although one can use some other, existing
stochastic algorithm to do this ``warm-start'', no end-to-end analysis of the
algorithm, starting from random initialization, was provided.

In this paper, we study these and related questions, and make the following
contributions:
\begin{itemize}
  \item We propose a variant of VR-PCA to handle the $k>1$ case, and
      formally analyze its convergence (\secref{sec:alg}). The extension to
      $k>1$ is non-trivial, and requires tracking the evolution of the
      subspace spanned by the current solution at each iteration.
  \item In \secref{sec:warmstart}, we study the convergence of VR-PCA
      starting from a random initialization. And show that with a slightly
      smarter initialization -- essentially, random initialization followed
      by a \emph{single} power iteration -- the convergence results can be
      substantially improved. In fact, a similar initialization scheme
      should assist in the convergence of other stochastic algorithms for
      this problem, as long as a single power iteration can be performed.
  \item In \secref{sec:convex}, we study whether functions similar to
      \eqref{eq:optproblem} have hidden convexity properties, which would
      allow applying existing convex optimization tools as-is, and improve
      the required runtime. For the $k=1$ case, we show that this is in
      fact true: Close enough to the optimum, and on a suitably-designed
      convex set, such a function is indeed $\lambda$-strongly convex.
      Unfortunately, the distance from the optimum has to be
      $\Ocal(\lambda)$, and this precludes a better runtime in most
      practical regimes. However, it still indicates that a better
      runtime and dependence on $\lambda$ should be possible.
\end{itemize}

\section{Some Preliminaries and Notation}\label{sec:prelim}

We consider a $d\times n$ matrix $X$ composed of $n$ columns
$(\bx_1,\ldots,\bx_n)$, and let
\[
A=\frac{1}{n}XX^\top = \frac{1}{n}\sum_{i=1}^{n}\bx_i\bx_i^\top.
\]
Thus, \eqref{eq:optproblem} is equivalent to finding the $k$ leading
eigenvectors of $A$.

We generally use bold-face letters to denote vectors, and capital letters to
denote matrices. We let $\Tr(\cdot)$ denote the trace of a matrix,
$\norm{\cdot}_F$ to denote the Frobenius norm, and $\norm{\cdot}_{sp}$ to
denote the spectral norm. A symmetric $d\times d$ matrix $B$ is positive
semidefinite, if $\inf_{\bz\in \reals^d}\bz^\top B \bz\geq 0$. $A$ is
positive definite if the inequality is strict. Following standard notation,
we write $B\succeq 0$ to denote that $A$ is positive semidefinite, and
$B\succeq C$ if $B-C\succeq 0$. $B\succ 0$ means that $B$ is positive
definite.

A twice-differentiable function $F$ on a subset of $\reals^d$ is convex, if
its Hessian is alway positive semidefinite. If it is always positive
definite, and $\succ \lambda I$ for some $\lambda>0$, we say that the
function is $\lambda$-strongly convex. If the Hessian is always $\prec sI$
for some $s\geq 0$, then the function is $s$-smooth.

\section{The VR-PCA Algorithm and a Block Version}\label{sec:alg}

We begin by recalling the algorithm of \cite{shamir2015stochastic} for the
$k=1$ case (Algorithm \ref{alg:algvec}), and then discuss its generalization
for $k>1$.

\begin{center}
\begin{minipage}{0.7\textwidth}
\begin{algorithm}[H]
\caption{VR-PCA: Vector version ($k=1$)} \label{alg:algvec}
\begin{algorithmic}[1]
\STATE \textbf{Parameters:} Step size $\eta$, epoch length $m$ \STATE
\textbf{Input:} Data matrix $X=(\bx_1,\ldots,\bx_n)$; Initial unit vector
$\tilde{\bw}_0$ \FOR{$s=1,2,\ldots$}
  \STATE $\tilde{\bu}=\frac{1}{n}\sum_{i=1}^{n}\bx_i\left(\bx_i^\top \tilde{\bw}_{s-1}\right)$
  \STATE $\bw_0=\tilde{\bw}_{s-1}$
  \FOR{$t=1,2,\ldots,m$}
    \STATE Pick $i_t\in \{1,\ldots,n\}$ uniformly at random
    \STATE $\bw'_{t}=\bw_{t-1}+\eta\left(\bx_{i_t}\left(\bx_{i_t}^\top\bw_{t-1}-\bx_{i_t}^\top\tilde{\bw}_{s-1}\right)+\tilde{\bu}\right)$
    \STATE $\bw_{t}=\frac{1}{\norm{\bw'_t}}\bw'_{t}$
  \ENDFOR
  \STATE $\tilde{\bw}_{s}=\bw_m$
\ENDFOR
\end{algorithmic}
\end{algorithm}
\end{minipage}
\end{center}

The basic idea of the algorithm is to perform stochastic updates using
randomly-sampled columns $\bx_i$ of the matrix, but interlace them with
occasional exact power iterations, and use that to gradually reduce the
variance of the stochastic updates. Specifically, the algorithm is split into
epochs $s=1,2,\ldots$, where in each epoch we do a single exact power
iteration with respect to the matrix $A$ (by computing $\tilde{\bu}$), and then perform $m$ stochastic
updates, which can be re-written as
\[
\bw'_{t} = (I+\eta A)\bw_{t-1}+\eta\left(\bx_{i_t}\bx_{i_t}^\top-A\right)\left(\bw_{t-1}-\tilde{\bw}_{s-1}\right)~~,~~
\bw_{t} = \frac{1}{\norm{\bw'_{t}}}\bw_{t},
\]
The first term is essentially a power iteration (with a finite step size
$\eta$), whereas the second term is zero-mean, and with variance dominated by
$\norm{\bw_{t-1}-\tilde{\bw}_{s-1}}^2$. As the algorithm progresses,
$\bw_{t-1}$ and $\tilde{\bw}_{s-1}$ both converge toward the same optimal
point, hence $\norm{\bw_{t-1}-\tilde{\bw}_{s-1}}^2$ shrinks, eventually
leading to an exponential convergence rate.

To handle the $k>1$ case (where more than one eigenvector should be recovered), one simple technique is deflation, where we recover the leading eigenvectors $\bv_1,\bv_2,\ldots,\bv_k$ one-by-one, each time using the $k=1$ algorithm. However, a disadvantage
of this approach is that it requires a positive eigengap between all top $k$
eigenvalues, otherwise the algorithm is not guaranteed to converge. Thus, an algorithm which simultaneously recovers all $k$ leading eigenvectors is  preferable.

We will study a block version of Algorithm \ref{alg:algvec}, presented as Algorithm
\ref{alg:algblock}. It is mostly a straightforward generalization (similar to
how power iterations are generalized to orthogonal iterations), where the
$d$-dimensional vectors $\bw_{t-1}, \tilde{\bw}_{s-1}, \bu$ are replaced by
$d\times k$ matrices $W_{t-1},\tilde{W}_{s-1},\tilde{U}$, and normalization
is replaced by orthogonalization\footnote{The normalization $W_t =
W'_{t}\left(W^{'\top}_{t}W'_{t}\right)^{-1/2}$ ensures that $W_t$ has
orthonormal columns. We note that in our analysis, $\eta$ is chosen
sufficiently small so that $W^{'\top}_{t}W'_{t}$ is always invertible, hence
the operation is well-defined.}. Indeed, Algorithm \ref{alg:algvec} is
equivalent to Algorithm \ref{alg:algblock} when $k=1$. The main twist in
Algorithm \ref{alg:algblock} is that instead of using
$\tilde{W}_{s-1},\tilde{U}$ as-is, we perform a unitary transformation (via
the $k\times k$ orthogonal matrix $B_{t-1}$) which maximally aligns them with
$W_{t-1}$. Note that $B_{t-1}$ is a $k\times k$ matrix, and since $k$ is
assumed to be small, this does not introduce significant computational
overhead.

\begin{center}
\begin{minipage}{0.8\textwidth}
\begin{algorithm}[H]
\caption{VR-PCA: Block version} \label{alg:algblock}
\begin{algorithmic}
\STATE \textbf{Parameters:} Rank $k$, Step size $\eta$, epoch length $m$
\STATE \textbf{Input:} Data matrix $X=(\bx_1,\ldots,\bx_n)$; Initial $d\times
k$ matrix $\tilde{W}_0$ with orthonormal columns\FOR{$s=1,2,\ldots$}
  \STATE $\tilde{U}=\frac{1}{n}\sum_{i=1}^{n}\bx_i\left(\bx_i^\top \tilde{W}_{s-1}\right)$
  \STATE $W_0=\tilde{W}_{s-1}$
  \FOR{$t=1,2,\ldots,m$}
    \STATE $B_{t-1}=VU^\top$, where $USV^\top$ is an SVD decomposition of
    $W_{t-1}^\top \tilde{W}_{s-1}$
    \STATE $~~\rhd~~$ Equivalent to $B_{t-1} = \arg\min_{B^\top B=I}\norm{W_{t-1}-\tilde{W}_{s-1}B}_F^2$
    \STATE Pick $i_t\in \{1,\ldots,n\}$ uniformly at random
    \STATE $W'_{t}=W_{t-1}+\eta\left(\bx_{i_t}\left(\bx_{i_t}^\top W_{t-1}-\bx_{i_t}^\top\tilde{W}_{s-1}B_{t-1}\right)+\tilde{U}B_{t-1}\right)$
    \STATE $W_{t}=W'_{t}\left(W^{'\top}_{t}W'_{t}\right)^{-1/2}$
  \ENDFOR
  \STATE $\tilde{W}_{s}=W_m$
\ENDFOR
\end{algorithmic}
\end{algorithm}
\end{minipage}
\end{center}

We now turn to provide a formal analysis of Algorithm \ref{alg:algblock},
which directly generalizes the analysis of Algorithm \ref{alg:algvec} given
in \cite{shamir2015stochastic}:

\begin{theorem}\label{thm:main}
  Define the $d\times d$ matrix $A$ as $\frac{1}{n}XX^\top=\frac{1}{n}\sum_{i=1}^{n}\bx_i\bx_i^\top$, and let $V_k$ denote the $d\times k$
  matrix composed of the eigenvectors corresponding to the largest $k$ eigenvalues. Suppose that
  \begin{itemize}
    \item $\max_i\norm{\bx_i}^2\leq r$ for some $r>0$.
    \item $A$ has eigenvalues $s_1>s_2\geq\ldots\geq s_d$, where
        $s_k-s_{k+1}=\lambda$ for some $\lambda>0$.
    \item $k-\norm{V_k^\top \tilde{W}_0}_F^2\leq \frac{1}{2}$.
  \end{itemize}
  Let $\delta,\epsilon\in (0,1)$ be fixed. If we run the algorithm with any epoch length parameter $m$ and step size $\eta$, such that
  \begin{equation}\label{eq:thmcondme}
\eta \leq \frac{c\delta^2}{r^2}\lambda~~~~,~~~~
m\geq \frac{c'\log(2/\delta)}{\eta \lambda}
~~~~,~~~~km\eta^2r^2+rk\sqrt{m\eta^2\log(2/\delta)}\leq c''
  \end{equation}
  (where $c,c',c''$ designate certain positive numerical constants), and for $T=\left\lceil\frac{\log(1/\epsilon)}{\log(2/\delta)}\right\rceil$ epochs,
  then with probability at least $1-\lceil\log_2(1/\epsilon)\rceil\delta$, it holds that
  \[
  k-\norm{V_k^\top \tilde{W}_T}_F^2 ~\leq~ \epsilon.
  \]
\end{theorem}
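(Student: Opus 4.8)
The plan is to reduce the whole statement to a single scalar potential and prove a per-epoch multiplicative contraction for it. Rotate coordinates so that $A$ is diagonal with $V_k=(\be_1,\dots,\be_k)$, and write any $d\times k$ matrix $W$ with orthonormal columns as $\binom{W^{+}}{W^{-}}$, its first $k$ and last $d-k$ rows. Then the quantity in the theorem is $\Phi(W):=k-\norm{V_k^{\top}W}_F^2=\norm{W^{-}}_F^2=\sum_{i=1}^k\sin^2\phi_i$, where $\phi_1,\dots,\phi_k$ are the principal angles between $\operatorname{col}(W)$ and $\operatorname{col}(V_k)$; in particular $\Phi$ depends on $W$ only through $\operatorname{col}(W)$, hence is invariant under right multiplication by a $k\times k$ orthogonal matrix and under the re-orthonormalization $W'\mapsto W'(W'^{\top}W')^{-1/2}$. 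I claim it suffices to prove: conditioned on $\tilde W_{s-1}$ with $\Phi(\tilde W_{s-1})\le\tfrac12$, with probability at least $1-\delta$ we get $\Phi(\tilde W_s)\le\tfrac{\delta}{2}\Phi(\tilde W_{s-1})$. Indeed, on this event $\Phi$ did not increase, so the initialization hypothesis $\Phi(\tilde W_0)\le\tfrac12$ (the third bullet) propagates; inducting over $s=1,\dots,T$ and union-bounding over the $T\le\lceil\log_2(1/\epsilon)\rceil$ epochs gives, with probability at least $1-\lceil\log_2(1/\epsilon)\rceil\delta$, that $\Phi(\tilde W_T)\le(\delta/2)^{T}\Phi(\tilde W_0)\le\tfrac12(\delta/2)^{T}\le\epsilon$, the last inequality being exactly the defining property of $T=\lceil\log(1/\epsilon)/\log(2/\delta)\rceil$.

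For the per-epoch claim I would first analyze one stochastic step. Rewriting the update (as in the text) as $W'_t=(I+\eta A)W_{t-1}+\eta N_t$ with $N_t=(\bx_{i_t}\bx_{i_t}^{\top}-A)(W_{t-1}-\tilde W_{s-1}B_{t-1})$, we have $\E[N_t\mid\Fcal_{t-1}]=0$ and, using $\norm{\bx_i}^2\le r$, $\E[\norm{N_t}_F^2\mid\Fcal_{t-1}]\le r^2\norm{W_{t-1}-\tilde W_{s-1}B_{t-1}}_F^2$. The deterministic map $W\mapsto(I+\eta A)W$ followed by re-orthonormalization tilts $\operatorname{col}(W)$ toward $\operatorname{col}(V_k)$: expanding $\Phi$ and using $s_i\ge s_k$ for $i\le k$ while $s_j\le s_{k+1}=s_k-\lambda$ for $j>k$ gives a contraction factor $1-2\eta\lambda+O(\eta^2r^2)$, as long as $\Phi(W)$ stays below a suitable absolute constant $\Gamma$ so that the normalizing factor is under control. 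Combining this with a second-moment expansion of the noise, and using that $B_{t-1}=\arg\min_{B^{\top}B=I}\norm{W_{t-1}-\tilde W_{s-1}B}_F^2$ forces $\norm{W_{t-1}-\tilde W_{s-1}B_{t-1}}_F^2=2\sum_i(1-\cos\phi'_i)\le2\sum_i\sin^2\phi'_i\le O(\Phi(W_{t-1})+\Phi(\tilde W_{s-1}))$ (a subspace ``triangle inequality'', $\phi'_i$ being the principal angles between the two column spaces), I would obtain, on the event $\{\Phi(W_{t-1})\le\Gamma\}$,
\[
\E[\Phi(W_t)\mid\Fcal_{t-1}]\ \le\ (1-2\eta\lambda+c_1\eta^2r^2)\,\Phi(W_{t-1})+c_2\eta^2r^2\,\Phi(W_0),
\]
where $W_0=\tilde W_{s-1}$.

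Going from one step to a full epoch needs two further ingredients. The first is a \emph{confinement} estimate: the iterate stays in the good region $\{\Phi\le\Gamma\}$ for every $t\le m$ with probability at least $1-\delta$. Writing $\Phi(W_t)-(1-2\eta\lambda)\Phi(W_{t-1})=\xi_t+\rho_t$ with $\E[\xi_t\mid\Fcal_{t-1}]=0$, bounded increments $|\xi_t|=O(k\eta r)$ and drift $0\le\rho_t=O(k\eta^2r^2)$ on the good region, a maximal Freedman/Azuma-type inequality for $\sum_t\xi_t$ — controlled by its predictable quadratic variation $\sum_t\E[\xi_t^2\mid\Fcal_{t-1}]=O(km\eta^2r^2)$ and by $\max_t|\xi_t|$ — together with the deterministic bound $\sum_t\rho_t=O(km\eta^2r^2)$ shows that, exactly under the third condition in \eqref{eq:thmcondme} ($km\eta^2r^2+rk\sqrt{m\eta^2\log(2/\delta)}\le c''$), the probability of ever leaving $\{\Phi\le\Gamma\}$ within the epoch is at most $\delta$. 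The second ingredient is to unroll the one-step bound on the good event: the homogeneous part decays as $(1-2\eta\lambda+c_1\eta^2r^2)^m\le e^{-\eta\lambda m}\le(\delta/2)^{c'}$, using $m\ge c'\log(2/\delta)/(\eta\lambda)$ and absorbing $c_1\eta^2r^2\le\eta\lambda$ via $\eta\le c\delta^2\lambda/r^2$, while the inhomogeneous part sums into a geometric series bounded by $\tfrac{c_2\eta^2r^2}{\eta\lambda}\Phi(W_0)=\tfrac{c_2\eta r^2}{\lambda}\Phi(W_0)\le c_2c\,\delta^2\Phi(W_0)$. Choosing $c,c'$ suitably gives $\E[\Phi(W_m)\,\mathbf{1}_{\text{good}}]\le\tfrac{\delta^2}{4}\Phi(W_0)$, and Markov's inequality combined with the confinement bound yields $\Pr[\Phi(\tilde W_s)>\tfrac{\delta}{2}\Phi(\tilde W_{s-1})]=O(\delta)$; with the numerical constants $c,c',c''$ tuned, this is the per-epoch claim invoked in the first paragraph.

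The main obstacle is the one-step progress lemma of the second paragraph in the genuinely block ($k>1$) setting: one must carry the perturbation expansion of $\Phi$ through \emph{both} the optimal alignment $B_{t-1}$ \emph{and} the re-orthonormalization $(W'^{\top}W')^{-1/2}$ — neither of which appears when $k=1$ — and still extract the clean contraction factor $1-2\eta\lambda$, governed by the gap $s_k-s_{k+1}$ and not by any gap inside the top block. This is precisely why $\Phi$ is taken to be a subspace invariant: the alignment and the re-orthonormalization then drop out, and only the tilt of $\operatorname{col}(W_{t-1})$ toward $\operatorname{col}(V_k)$ matters. The secondary obstacle is making the confinement estimate tight enough that the somewhat unusual term $rk\sqrt{m\eta^2\log(2/\delta)}$ in \eqref{eq:thmcondme} genuinely suffices — a martingale-concentration bookkeeping exercise in which tracking the $k$ (from a Frobenius norm of $k$ columns) and the $\log(2/\delta)$ (from the target per-epoch confidence) correctly is the delicate part. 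Everything else — the diagonalization, the subspace triangle inequality, the geometric-series summation, the final union bound — is routine.
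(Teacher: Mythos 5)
Your proposal follows essentially the same route as the paper: reduce to the subspace potential $\Phi(W)=k-\norm{V_k^\top W}_F^2$, prove a one-step expected-contraction bound whose variance term scales as $\eta^2 r^2\bigl(\Phi(W_{t-1})+\Phi(\tilde W_{s-1})\bigr)$ (the ``subspace triangle inequality'' you invoke for $\norm{W_{t-1}-\tilde W_{s-1}B_{t-1}}_F^2$ is exactly the paper's \lemref{lem:wwt}), establish a confinement event via a martingale maximal inequality under the third constraint of \eqref{eq:thmcondme}, unroll the recurrence into a geometric series, pass through Markov to get a per-epoch contraction factor of $\delta/2$ with failure probability $\delta$, and then induct over $T=\lceil\log(1/\epsilon)/\log(2/\delta)\rceil$ epochs with a union bound. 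This is precisely the decomposition of the paper's Parts~I--III.

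The caveat is that the step you yourself flag as ``the main obstacle'' is where nearly all of the paper's technical work actually lives, and your sketch does not close it. Carrying the perturbation of $\Phi$ through the re-orthonormalization $W'(W^{'\top}W')^{-1/2}$ is not a routine tilt-of-column-space computation: the paper writes $\norm{V_k^\top W''}_F^2=\Tr\bigl((W^{'\top}V_kV_k^\top W')(W^{'\top}W')^{-1}\bigr)$, splits numerator and denominator into deterministic-plus-zero-mean pieces $B_i+Z_i$, proves a dedicated second-order Taylor lemma for $\E[\Tr((B_1+Z_1)(B_2+Z_2)^{-1})]$ (\lemref{lem:taylor}), and then needs a chain of trace inequalities (\lemref{lem:zc}, \lemref{lem:trinv}, \lemref{lem:trexp}, \lemref{lem:fratio}) to convert $\Tr(B_1B_2^{-1})$ into the clean contraction in $\Phi$. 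None of this follows from ``expanding $\Phi$ and using $s_i\geq s_k$'' alone, and the interaction between the normalization and the noise is exactly what makes the block case harder than $k=1$. So: right architecture, correct auxiliary estimates and bookkeeping, but the central one-step lemma is asserted rather than proved.
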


For any orthogonal $W$, $k-\norm{V_k^\top W}_F^2$ lies between $0$ and $k$,
and equals $0$ when the column spaces of $V_k$ and $W$ are the same (i.e.,
when $W$ spans the $k$ leading singular vectors). According to the theorem,
taking appropriate\footnote{Specifically, we can take
$m=c'\log(2/\delta)/\eta \lambda$ and $\eta=a\delta^2/r^2\lambda$, where $a$
is sufficiently small to ensure that the first and third condition in
\eqref{eq:thmcondme} holds. It can be verified that it's enough to take
$a=\min\left\{c,\frac{c''}{4\delta^2 c k \log(2/\delta)},\frac{1}{4\delta^2
c}\left(\frac{c''}{k\log(2/\delta)}\right)^2\right\}$.}
$\eta=\Theta(\lambda/(kr)^2)$, and $m=\Theta((rk/\lambda)^2)$, the algorithm
converges with high probability to a high-accuracy approximation of $V_k$.
Moreover, the runtime of each epoch of the algorithm equals
$\Ocal(mdk^2+dnk)$. Overall, we get the following corollary:

\begin{corollary}\label{corr:main}
  Under the conditions of \thmref{thm:main}, there exists an algorithm
  returning $\tilde{W}_T$ such that $k-\norm{V_k^\top \tilde{W}_T}_F^2\leq
  \epsilon$ with arbitrary constant accuracy, in runtime
  $\Ocal\left(dk(n+\frac{r^2k^3}{\lambda^2})\log(1/\epsilon)\right)$.
\end{corollary}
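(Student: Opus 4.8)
The plan is to read the corollary off \thmref{thm:main}, by committing to a concrete choice of parameters and then accounting for the cost of running \algref{alg:algblock} with them. First I would fix the confidence parameter $\delta$ to an absolute constant and take $\eta$ and $m$ exactly as prescribed in the footnote following the theorem, namely $\eta=a\delta^2\lambda/r^2$ with $a$ the stated minimum of three quantities, and $m=c'\log(2/\delta)/(\eta\lambda)$. The only thing to verify is that these choices satisfy all three inequalities in \eqref{eq:thmcondme}: the first and third hold because $a$ is defined precisely so that they do, and the middle one holds by the definition of $m$. With $\delta$ constant one has $a=\Theta(1/k^2)$, hence $\eta=\Theta(\lambda/(kr)^2)$ and $m=\Theta((rk/\lambda)^2)$, and \thmref{thm:main} then guarantees that after $T=\lceil\log(1/\epsilon)/\log(2/\delta)\rceil=\Theta(\log(1/\epsilon))$ epochs we have $k-\norm{V_k^\top\tilde{W}_T}_F^2\le\epsilon$ with at least a fixed constant probability.

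Next I would bound the running time of a single epoch of \algref{alg:algblock}. Forming $\tilde{U}=\frac1n\sum_{i}\bx_i(\bx_i^\top\tilde{W}_{s-1})$ costs $\Ocal(dnk)$, since each term requires the $1\times k$ row $\bx_i^\top\tilde{W}_{s-1}$ and a rank-one $d\times k$ update. Each of the $m$ inner iterations costs $\Ocal(dk^2)$: forming the $k\times k$ matrix $W_{t-1}^\top\tilde{W}_{s-1}$ and taking its SVD is $\Ocal(dk^2+k^3)=\Ocal(dk^2)$ since $k\le d$; computing $\tilde{U}B_{t-1}$ and the stochastic step $W'_{t}$ is $\Ocal(dk^2)$; and the orthonormalization $W'_{t}(W^{'\top}_{t}W'_{t})^{-1/2}$ is again $\Ocal(dk^2+k^3)=\Ocal(dk^2)$. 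Hence one epoch runs in $\Ocal(dnk+mdk^2)$ time, and multiplying by the number of epochs gives
\[
T\cdot\Ocal(dnk+mdk^2)\;=\;\Ocal(\log(1/\epsilon))\cdot\Ocal\!\left(dnk+dk^2\cdot\frac{r^2k^2}{\lambda^2}\right)\;=\;\Ocal\!\left(dk\Big(n+\frac{r^2k^3}{\lambda^2}\Big)\log(1/\epsilon)\right),
\]
which is the claimed bound; the ``algorithm'' in the statement is simply ``run \algref{alg:algblock} with the above parameters for $T$ epochs.''

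I do not anticipate a genuine mathematical obstacle here, since \thmref{thm:main} already does the analytic work and what remains is substitution and bookkeeping. The step needing the most care is the per-iteration cost accounting: one must charge the $k\times k$ linear-algebra operations (the SVD of $W_{t-1}^\top\tilde{W}_{s-1}$ and the inverse square root $(W^{'\top}_{t}W'_{t})^{-1/2}$) at $\Ocal(k^3)$, and the $(d\times k)$-by-$(k\times k)$ products at $\Ocal(dk^2)$, so that nothing hidden scales like $dk^3$. A secondary, essentially cosmetic, point is the confidence level: the failure probability in \thmref{thm:main} is $\lceil\log_2(1/\epsilon)\rceil\delta$, so a fixed $\delta$ gives a constant success probability only for $\epsilon$ bounded away from $0$ (hence the wording ``arbitrary constant accuracy''); to get a constant success probability uniformly in $\epsilon$ one lets $\delta$ decay like $1/\log(1/\epsilon)$, which inflates $\log(2/\delta)$ --- and hence $m$ --- only by an $\Ocal(\log\log(1/\epsilon))$ factor absorbed into the $\Ocal(\cdot)$, or one simply amplifies by independent repetition.
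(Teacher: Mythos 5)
Your proposal matches the paper's (informal) derivation of the corollary exactly: fix $\delta$ as a constant, instantiate $\eta=\Theta(\lambda/(kr)^2)$ and $m=\Theta((rk/\lambda)^2)$ via the footnote to \thmref{thm:main}, charge each epoch at $\Ocal(dnk+mdk^2)$ by the same per-operation accounting the paper uses, and multiply by $T=\Theta(\log(1/\epsilon))$ epochs. One small inaccuracy in your closing aside: taking $\delta=\Theta(1/\log(1/\epsilon))$ does not merely inflate $\log(2/\delta)$; since $m\propto\log(2/\delta)/(\eta\lambda)$ and $\eta\propto\delta^2$ once $\delta$ is small (the min defining $a$ saturates at the constant $c$), $m$ actually picks up a $1/\delta^2\cdot\log(2/\delta)=\Theta(\log^2(1/\epsilon)\log\log(1/\epsilon))$ factor, not just $\Ocal(\log\log(1/\epsilon))$; your alternative of amplifying success probability by independent repetition is the cleaner fix and costs only a constant factor.
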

This runtime bound is the same\footnote{\cite{shamir2015stochastic} showed
that it's possible to further improve the runtime for sparse $X$, replacing
$d$ by the average column sparsity $d_s$. This is done by maintaining
parameters in an implicit form, but it's not clear how to implement a similar
trick in the block version, where $k>1$.} as that of
\cite{shamir2015stochastic} for $k=1$.

The proof of \thmref{thm:main} appears in \subsecref{subsec:proofmain}, and
relies on a careful tracking of the evolution of the potential function
$k-\norm{V_k^\top\tilde{W}_t}_F^2$. An important challenge compared to the
$k=1$ case is that the matrices $W_{t-1}$ and $\tilde{W}_{s-1}$ do not necessarily
become closer over time, so the variance-reduction intuition discussed
earlier no longer applies. However, the column space of $W_{t-1}$ and
$\tilde{W}_{s-1}$ do become closer, and this is utilized by introducing the
transformation matrix $B_{t-1}$. We note that although $B_{t-1}$ appears
essential for our analysis, it isn't clear that using it is necessary in
practice: In \cite{shamir2015stochastic}, the suggested block algorithm was
Algorithm \ref{alg:algblock} with $B_{t-1}=I$, which seemed to work well in
experiments. In any case, using this matrix doesn't affect the overall
runtime beyond constants, since the additional runtime of computing and using
this matrix ($\Ocal(dk^2)$) is the same as the other computations performed
at each iteration.

A limitation of the theorem above is the assumption that the initial point
$\tilde{W}_0$ is such that $k-\norm{V_k^\top \tilde{W}_0}_F^2 \leq
\frac{1}{2}$. This is a non-trivial assumption, since if we initialize the
algorithm from a random $d\times \Ocal(1)$ orthogonal matrix $\tilde{W}_0$, then
with overwhelming probability, $\norm{V_k^\top
\tilde{W}_0}_F^2=\Ocal(1/d)$. However, experimentally the algorithm seems to work well even with random initialization \cite{shamir2015stochastic}. Moreover, if we are interested in a theoretical guarantee, one simple solution is to warm-start the algorithm with a purely stochastic algorithm for this problem (such as \cite{de2015global,hardt2014noisy,balsubramani2013fast}), with runtime guarantees on getting such a $\tilde{W}_0$. The idea is that $\tilde{W}_0$ is only required to approximate $V_k$ up to constant accuracy, so purely
stochastic algorithms (which are good in obtaining a low-accuracy solution)
are quite suitable. In the next section, we further delve into these issues, and show that in our setting such algorithms in fact can be substantially improved.

\section{Warm-Start and the Power of a Power Iteration}\label{sec:warmstart}

In this section, we study the runtime required to compute a starting point
satisfying the conditions of \thmref{thm:main}, starting from a random
initialization. Combined with \thmref{thm:main}, this gives us an end-to-end
analysis of the runtime required to find an $\epsilon$-accurate solution,
starting from a random point. For simplicity, we will only discuss the case
$k=1$, i.e. where our goal is to compute the single leading eigenvector
$\bv_1$, although our observations can be generalized to $k>1$. In the $k=1$ case, \thmref{thm:main} kicks in once we find a vector $\bw$
satisfying $\inner{\bv_1,\bw}^2\geq \frac{1}{2}$.

As mentioned previously, one way to get such a $\bw$ is to run a purely
stochastic algorithm, which computes the leading eigenvector of a covariance
matrix $\E[\bx\bx^\top]$ given a stream of i.i.d. samples $\bx$. We can easily
use such an algorithm in our setting, by sampling columns from our matrix
$X=(\bx_1,\ldots,\bx_n)$ uniformly at random, and feed to such a stochastic
optimization algorithm, guaranteed to approximate the leading eigenvector of
$\frac{1}{n}\sum_{i=1}^{n}\bx_i\bx_i^\top$.

To the best of our knowledge, the existing iteration complexity guarantees
for such algorithms (assuming the norm constraint $r\leq 1$ for simplicity)
scale at least\footnote{For example, this holds for \cite{de2015global},
although the bound only guarantees the existence of \emph{some} iteration
which produces the desired output. The guarantee of
\cite{balsubramani2013fast} scale as $d^2/\lambda^2$, and the guarantee of
\cite{hardt2014noisy} scales as $d/\lambda^3$ in our setting.} as $d/\lambda^2$.
Since the runtime of each iteration is $\Ocal(d)$, we get an overall runtime
of $\Ocal((d/\lambda)^2)$.

The dependence on $d$ in the iteration bound stems from the fact that with a
random initial unit vector $\bw_0$, we have $\inner{\bv_1,\bw_0}^2\approx
\frac{1}{d}$. Thus, we begin with a vector almost orthogonal to the leading
eigenvector $\bv_1$ (depending on $d$). In a purely stochastic setting, where only noisy information is available, this necessitates conservative updates at first, and in all the analyses we are aware of, the number of iterations appear to necessarily scale at least linearly with $d$.

However, it turns out that in our setting, with a finite matrix $X$, we can perform a smarter initialization: Sample $\bw$ from the standard Gaussian distribution
on $\reals^d$, perform a \emph{single} power iteration w.r.t. the covariance matrix $A=\frac{1}{n}XX^\top$, i.e. $\bw_0 = A\bw/\norm{A\bw}$,
and initialize from $\bw_0$. For such a procedure, we have the following
simple observation:

\begin{lemma}\label{lem:smart}
  For $\bw_0$ as above, it holds for any $\delta$ that with probability at
  least $1-\frac{1}{d}-\delta$,
  \[
    \inner{\bv_1,\bw_0}^2 ~\geq~ \frac{\delta^2}{12\log(d)~\text{nrank}(A)},
  \]
  where $\text{nrank}(A)= \frac{\norm{A}_{F}^2}{\norm{A}_{sp}^2}$ is the
  numerical rank of $A$.
\end{lemma}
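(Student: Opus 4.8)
The plan is to decompose the Gaussian vector in the eigenbasis of $A$ and control the ratio $\inner{\bv_1,A\bw}^2 / \norm{A\bw}^2$ from below. Write $A = \sum_{j=1}^d s_j \bv_j\bv_j^\top$ (using $s_1>s_2\ge\cdots\ge s_d\ge 0$ as in the theorem), and let $g_j = \inner{\bv_j,\bw}$, so the $g_j$ are i.i.d.\ standard Gaussians. Then $A\bw = \sum_j s_j g_j \bv_j$, so $\inner{\bv_1,A\bw} = s_1 g_1$ and $\norm{A\bw}^2 = \sum_j s_j^2 g_j^2$. Hence
\[
\inner{\bv_1,\bw_0}^2 = \frac{s_1^2 g_1^2}{\sum_{j=1}^d s_j^2 g_j^2}.
\]
The goal is therefore reduced to: (i) lower bounding $g_1^2$ with high probability (anti-concentration of a single Gaussian near $0$), and (ii) upper bounding $\sum_j s_j^2 g_j^2$ with high probability. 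Note $\norm{A}_{sp}^2 = s_1^2$ and $\norm{A}_F^2 = \sum_j s_j^2$, so $\text{nrank}(A) = \sum_j s_j^2 / s_1^2$, which is exactly the quantity that should appear.

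For step (i): $g_1\sim\Ncal(0,1)$, so $\Pr[g_1^2 < t] = \Pr[|g_1|<\sqrt t] \le \sqrt{2t/\pi} \le \sqrt t$ for small $t$; choosing $t = \delta^2$ gives $g_1^2 \ge \delta^2$ with probability at least $1-\delta$ (the constant $\tfrac{1}{\sqrt{2\pi}}$ in front can be absorbed, or one just uses the crude bound $\Pr[|g_1|<s]\le s$). For step (ii): let $Z = \sum_{j=1}^d s_j^2 g_j^2$, a weighted sum of $\chi^2_1$ variables with $\E[Z] = \sum_j s_j^2 = \norm{A}_F^2$. I want to show $Z \le C\log(d)\,\norm{A}_F^2$ with probability at least $1-1/d$. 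The cleanest route is a union bound: each $g_j^2$ satisfies $\Pr[g_j^2 > u] \le e^{-u/2}$ (for $u\ge 1$, from the standard Gaussian tail $\Pr[|g_j|>\sqrt u]\le e^{-u/2}$), so $\Pr[\max_j g_j^2 > 2\log(d^2)] \le d\cdot e^{-\log(d^2)} = 1/d$. On that event, $Z = \sum_j s_j^2 g_j^2 \le \max_j g_j^2 \cdot \sum_j s_j^2 \le 4\log(d)\,\norm{A}_F^2$. Combining with step (i) via a union bound over the two bad events (total probability $\le \frac{1}{d}+\delta$):
\[
\inner{\bv_1,\bw_0}^2 = \frac{s_1^2 g_1^2}{Z} \ge \frac{s_1^2\,\delta^2}{4\log(d)\,\norm{A}_F^2} = \frac{\delta^2}{4\log(d)\,\text{nrank}(A)},
\]
which matches the claimed bound up to the precise constant $12$ versus $4$ (the discrepancy is just a matter of which crude tail bounds and which form of $\log$ one uses; I would not fuss over getting exactly $12$).

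The main obstacle — though it is mild — is getting the high-probability upper bound on $\sum_j s_j^2 g_j^2$ with the right logarithmic factor and without any hidden dependence on $d$ beyond $\log d$ and beyond what $\text{nrank}(A)$ already captures. A naive Markov bound on $Z$ would only give a constant-probability statement, not the $1-1/d$ guarantee, so some concentration or maximal inequality is needed; the union bound over $\max_j g_j^2$ described above is the simplest fix and introduces precisely the $\log d$ factor seen in the statement. One should also double-check the edge case where $A$ has some zero eigenvalues (then those terms simply drop out and everything is unaffected, since $\text{nrank}(A)\ge 1$ always) and confirm $A\bw\neq 0$ almost surely so $\bw_0$ is well-defined. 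After that, assembling the union bound is routine.
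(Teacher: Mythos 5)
Your proof is correct and follows the same overall plan as the paper's: express $\bw$ in the eigenbasis of $A$, write $\inner{\bv_1,\bw_0}^2 = \frac{s_1^2 g_1^2}{\sum_j s_j^2 g_j^2}$, pull out the factor $\frac{s_1^2}{\sum_j s_j^2} = \frac{1}{\text{nrank}(A)}$ by bounding the denominator via $\sum_j s_j^2 g_j^2 \le (\max_j g_j^2)\sum_j s_j^2$, then lower-bound $g_1^2$ by Gaussian anti-concentration and upper-bound $\max_j g_j^2$ with high probability. The one place you diverge is the maximal inequality: the paper bounds $\E[\max_j |g_j|] \le 2\sqrt{2\log d}$ and then invokes the Cirelson--Ibragimov--Sudakov concentration inequality for Lipschitz functions of Gaussians, yielding $\max_j g_j^2 \le 18\log d$ with probability $1-1/d$; you instead take a direct per-coordinate tail bound $\Pr[g_j^2 > u]\le e^{-u/2}$ (valid for $u\ge 1$) and a union bound, giving $\max_j g_j^2 \le 4\log d$ with probability $1-1/d$. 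Your route is more elementary (no concentration-of-measure machinery) and actually yields a slightly better constant, $\delta^2/(4\log(d)\,\text{nrank}(A))$ versus the paper's $\delta^2/(12\log(d)\,\text{nrank}(A))$; the paper also retains the $\sqrt{2/\pi}$ factor in the anti-concentration step rather than absorbing it as you do, but this too is immaterial. The only thing worth stating explicitly if you wrote this up formally is the restriction $u\ge 1$ in the tail bound (i.e.\ $d\ge 2$), and that $g_1,\ldots,g_d$ are independent standard Gaussians by rotational invariance of the Gaussian law, which you implicitly use.
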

The numerical rank (see e.g. \cite{rudelson2007sampling}) is a relaxation of
the standard notion of rank: For any $d\times d$ matrix $A$,
$\text{nrank(A)}$ is at most the rank of $A$ (which in turn is at most $d$).
However, it will be small even if $A$ is just close to being low-rank. In
many if not most machine learning applications, we are interested in matrices
which tend to be approximately low-rank, in which case $\text{nrank(A)}$ is
much smaller than $d$ or even a constant. Therefore, by a single power
iteration, we get an initial point $\bw_0$ for which $\inner{\bv_1,\bw_0}^2$
is on the order of $1/\text{nrank(A)}$, which can be much larger than the
$1/d$ given by a random initialization, and is never substantially worse.

\begin{proof}[Proof of \lemref{lem:smart}]
  Let $s_1\geq s_2\geq\ldots\geq s_d\geq 0$ be the $d$ eigenvalues of $A$,
  with eigenvectors $\bv_1,\ldots,\bv_d$. We have
  \[
  \inner{\bv_1,\bw_0}^2 = \frac{\inner{\bv_1,A\bw}^2}{\norm{A\bw}^2} = \frac{\left(s_1\inner{\bv_1,\bw}\right)^2}{\left(\sum_{i=1}^{d}s_i\bv_i\inner{\bv_i,\bw}\right)^2}
  = \frac{s_1^2 \inner{\bv_1,\bw}^2}{\sum_{i=1}^{d}s_i^2\inner{\bv_i,\bw}^2}.
  \]
  Since $\bw$ is distributed according to a standard Gaussian distribution,
  which is rotationally symmetric, we can assume without loss of generality
  that $\bv_1,\ldots,\bv_d$ correspond to the standard basis vectors
  $\be_1,\ldots,\be_d$, in which case the above reduces to
  \[
  \frac{s_1^2 w_1^2}{\sum_{i=1}^{d}s_i^2 w_i^2} ~\geq~ \frac{s_1^2}{\sum_{i=1}^{d} s_i^2} \frac{w_1^2}{\max_i w_i^2},
  \]
  where $w_1,\ldots,w_d$ are independent and scalar random variables with a
  standard Gaussian distribution.

  First, we note that $s_1^2$ equals $\norm{A}_{sp}^2$, the spectral norm of $A$, whereas
  $\sum_{i=1}^{d}
  s_i^2$ equals $\norm{A}_{F}^2$, the Frobenius norm of $A$. Therefore,
  $\frac{s_1^2}{\sum_i s_i^2} = \frac{\norm{A}_{sp}^2}{\norm{A}_{F}^2} =
  \frac{1}{\text{nrank}(A)}$, and we get overall that
  \begin{equation}\label{eq:vww}
  \inner{\bv_1,\bw_0}^2 ~\geq~ \frac{1}{\text{nrank}(A)}\frac{w_1^2}{\max_i w_i^2}.
  \end{equation}

  We consider the random quantity $w_1^2/\max_i w_i^2$, and independently
  bound the deviation probability of the numerator and denominator. First, for any $t\geq 0$ we
  have
  \begin{equation}\label{eq:numin}
      \Pr(w_1^2 \leq t) = \Pr(w_1\in [-\sqrt{t},\sqrt{t}]) = \int_{z=-\sqrt{t}}^{\sqrt{t}}\sqrt{\frac{1}{2\pi}}\exp\left(-\frac{z^2}{2}\right) \leq \sqrt{\frac{1}{2\pi}}*2\sqrt{t}
  = \sqrt{\frac{2}{\pi}t}~.
  \end{equation}
  Second, by combining two standard Gaussian concentration results (namely,
  that if $W=\max\{|w_1|,\ldots,|w_d|\}$, then $0\leq \E[W] \leq 2\sqrt{2\log(d)}$, and
  by the Cirelson-Ibragimov-Sudakov inequality, $\Pr(W-\E[W]>t)\leq
  \exp(-t^2/2)$), we get that
  \[
  \Pr(\max_i |w_i|> 2\sqrt{2\log(d)}+t)  \leq \exp(-t^2/2),
  \]
  and therefore
  \begin{equation}
  \Pr(\max_i w_i^2 > (2\sqrt{2\log(d)}+t)^2) \leq \exp(-t/2).\label{eq:denomin}
  \end{equation}
  Combining \eqref{eq:numin} and \eqref{eq:denomin}, with a union bound, we get that for any
  $t_1,t_2\geq 0$, it holds with probability at least
  $1-\sqrt{\frac{2}{\pi}t_1}-\exp(-t_2^2/2)$ that
  \[
  \frac{w_1^2}{\max_i w_i^2} \geq \frac{t_1}{(2\sqrt{2\log(d)}+t_2)^2}.
  \]
  To slightly simplify this for readability, we take $t_2=\sqrt{2\log(d)}$,
  and substitute $\delta=\sqrt{\frac{2}{\pi}t_1}$. This implies that with probability
  at least $1-\delta-1/d$,
  \[
  \frac{w_1^2}{\max_i w_i^2} ~\geq~ \frac{\frac{\pi}{2}\delta^2}{18\log(d)} > \frac{\delta^2}{12\log(d)}.
  \]
  Plugging back into \eqref{eq:vww}, the result follows.
\end{proof}

This result can be plugged into the existing analyses of purely stochastic
PCA/SVD algorithms, and can often improve the dependence on the $d$ factor in
the iteration complexity bounds to a dependence on the numerical rank of $A$.
We again emphasize that this is applicable in a situation where we can
actually perform a power iteration, and not in a purely stochastic setting
where we only have access to an i.i.d. data stream (nevertheless, it would be
interesting to explore whether this idea can be utilized in such a streaming
setting as well).

To give a concrete example of this, we provide a convergence analysis of the
VR-PCA algorithm (Algorithm \ref{alg:algvec}), starting from an arbitrary
initial point, bounding the total number of stochastic iterations required by
the algorithm in order to produce a point satisfying the conditions of
\thmref{thm:main} (from which point the analysis of \thmref{thm:main} takes
over). Combined with \thmref{thm:main}, this analysis also justifies that
VR-PCA indeed converges starting from a random initialization.

\begin{theorem}\label{thm:burn}
  Using the notation of \thmref{thm:main} (where $\lambda$ is the eigengap, $\bv_1$ is the leading eigenvector, and $r=\max_i\norm{\bx_i}^2$),
  and for any $\delta\in (0,\frac{1}{2})$, suppose we run Algorithm
  \ref{alg:algvec} with some initial unit-norm vector $\tilde{\bw}_0$ such that
  \[
  \inner{\bv_1,\tilde{\bw}_0}^2\geq \zeta>0,
  \]
  and a step size $\eta$ satisfying
  \begin{equation}
  \eta\leq \frac{c\delta^2\lambda\zeta^3}{r^2\log^2(2/\delta)}\label{eq:etaupbound}
  \end{equation}
  (for some universal constant $c$). Then with probability at least $1-\delta$, after
  \[
  T ~=~ \left\lfloor\frac{c'\log(2/\delta)}{\eta\lambda\zeta}\right\rfloor
  \]
  stochastic iterations (lines $6-10$ in the pseudocode, where $c'$ is again a universal constant), we get a point $\bw_T$
  satisfying $1-\inner{\bv_1,\bw_T}^2\leq \frac{1}{2}$. Moreover, if $\eta$
  is chosen on the same order as the upper bound in \eqref{eq:etaupbound},
  then
  \[
  T ~=~ \Theta\left(\frac{r^2\log^3(2/\delta)}{\delta^2\lambda^2\zeta^4}\right).
  \]
\end{theorem}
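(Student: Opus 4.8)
The plan is to track the scalar potential $\inner{\bv_1,\bw_t}^2$ and show it grows at a controlled geometric rate from $\zeta$ up to $1/2$, the argument being a scalar analogue of the proof of Theorem~\ref{thm:main} but carried out in the regime far from the optimum, where the key quantity $1-\inner{\bv_1,\bw_t}^2$ is only bounded away from $1$ by $\zeta$ rather than being small. First I would rewrite the stochastic update (lines 6--10) as $\bw'_t=(I+\eta A)\bw_{t-1}+\eta\bxi_t$, where $\bxi_t=(\bx_{i_t}\bx_{i_t}^\top-A)(\bw_{t-1}-\tilde\bw_0)$ is zero-mean conditioned on the past, with $\norm{\bxi_t}\le 2r$ almost surely (using $r=\max_i\norm{\bx_i}^2$, $\norm{A}_{sp}\le r$, and that all iterates are unit vectors). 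Then, letting $a_t=\inner{\bv_1,\bw_t}$ and $b_t=\norm{(I-\bv_1\bv_1^\top)\bw_t}$ so that $a_t^2+b_t^2=1$, I would expand $a_t^2 = \inner{\bv_1,\bw'_t}^2/\norm{\bw'_t}^2$ and show that, in conditional expectation, $a_t^2$ increases: the deterministic part contributes a multiplicative gain because $\bv_1$ is the top eigendirection with eigengap $\lambda$, giving roughly $\E[a_t^2\mid \mathcal F_{t-1}]\ge a_{t-1}^2(1+2\eta\lambda b_{t-1}^2) - O(\eta^2 r^2)$, while the noise term has variance controlled by $\eta^2 r^2$ times $\norm{\bw_{t-1}-\tilde\bw_0}^2\le 4$.

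Next I would set up a supermartingale / stopping-time argument. Define the potential $\Phi_t = \log(a_t^2/b_t^2)$ (or equivalently track $g_t = 1 - a_t^2$), and the stopping time $\tau$ at which $a_t^2$ first reaches $1/2$. For $t<\tau$ we have $b_{t-1}^2\ge 1/2$ and $a_{t-1}^2\ge\zeta$, so the drift is at least order $\eta\lambda$ per step minus the $O(\eta^2r^2)$ error; choosing $\eta$ small enough (this is where \eqref{eq:etaupbound} enters) makes the drift positive and at least $c''\eta\lambda$. The $\zeta^3/\log^2(2/\delta)$ factors in \eqref{eq:etaupbound} are needed to guarantee that the \emph{fluctuations} of the potential, accumulated over $T=\Theta(\log(2/\delta)/\eta\lambda\zeta)$ steps, stay small: the increments of the martingale part of $\Phi_t$ are of size $O(\eta r/a_{t-1})=O(\eta r/\sqrt\zeta)$, the quadratic variation over $T$ steps is $O(T\eta^2 r^2/\zeta)$, and Azuma/Freedman gives deviation $O(\sqrt{T\eta^2r^2/\zeta}\cdot\sqrt{\log(1/\delta)})$, which must be dominated by the total drift $\Theta(\log(2/\delta))$; solving this inequality for $\eta$ yields exactly the stated bound. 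I would also need to separately control the event that $a_t$ never gets too small before $\tau$ (so that the $1/a_{t-1}$ terms stay bounded), which follows from the same one-step estimate showing $a_t^2$ can decrease by at most a $(1-O(\eta r))$ factor in a single step together with a union bound.

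Finally, combining the positive drift of order $c''\eta\lambda$ per step (valid while $a_t^2<1/2$, hence contributing at least $c''\eta\lambda\zeta$ to the growth of $a_t^2$ itself in the worst case when $a_t^2\approx\zeta$) with the high-probability bound on fluctuations shows that within $T=\lfloor c'\log(2/\delta)/(\eta\lambda\zeta)\rfloor$ steps we reach $a_T^2\ge 1/2$ except with probability $\delta$; the $\Theta$-expression for $T$ is obtained by substituting $\eta=\Theta(\delta^2\lambda\zeta^3/(r^2\log^2(2/\delta)))$. The main obstacle I anticipate is handling the normalization and the $1/a_{t-1}$ blowup in the noise increments simultaneously: far from the optimum $a_{t-1}$ can be as small as $\sqrt\zeta$, so the naive martingale increment bound degrades, and the delicate part is showing that the algorithm's self-correcting drift keeps $a_t$ from collapsing long enough for the concentration argument to close — this is precisely what forces the aggressive $\zeta^3$ dependence in the step-size condition and is the technical heart of the proof.
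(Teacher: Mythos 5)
Your overall strategy — one-step drift from an eigengap, bounded increments, martingale concentration over $T$ steps, conditioning on the event that $\inner{\bv_1,\bw_t}^2$ stays $\ge\xi$ — is the same family of argument as the paper. But the specific instantiation differs at the crucial technical point, and the way you account for the $\zeta^3$ constraint doesn't close.

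The paper tracks the plain potential $b_t=1-\inner{\bv_1,\bw_t}^2$, not $\log(a_t^2/b_t^2)$, and this choice is load-bearing. The central technical lemma (Lemma~\ref{lem:knn}, a Weyl-perturbation bound on the singular values of $V_k^\top W_t$ vs.\ $V_k^\top W_{t+1}$) gives $|b_{t+1}-b_t|\le 16\eta$ \emph{uniformly}, with no dependence on $a_{t-1}$. The ``$1/a_{t-1}$ blow-up'' that you flag as the technical heart of the proof simply never arises: it is an artifact of passing to a log-potential, and the paper's route avoids it outright. Relatedly, your claimed increment size $O(\eta r/a_{t-1})=O(\eta r/\sqrt\zeta)$ is not consistent with your own potential: for $\Phi_t=\log(a_t^2/b_t^2)$ an $O(\eta)$ change in $a_t^2$ produces an $O(\eta/a_t^2)=O(\eta/\zeta)$ change in $\Phi_t$, not $O(\eta/\sqrt\zeta)$; and for the alternative $g_t=1-a_t^2$ the increments are $O(\eta)$, not $O(\eta/\sqrt\zeta)$. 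Your back-of-the-envelope ``QV over $T$ steps is $O(T\eta^2 r^2/\zeta)$, deviation must be dominated by drift $\Theta(\log(2/\delta))$'' then gives $\eta\lesssim\lambda\zeta^2/r^2$, not the stated $\eta\lesssim\delta^2\lambda\zeta^3/(r^2\log^2(2/\delta))$, so the claim that ``solving this inequality for $\eta$ yields exactly the stated bound'' does not hold as written.

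The paper's accounting is different and simpler than your supermartingale-with-drift comparison. In the concentration step (Lemma~\ref{lem:eventburn}) the favorable drift is \emph{discarded}: only the crude bias bound $\E[b_{t+1}\mid b_t]\le b_t+c'\eta^2$ is used, together with the uniform $O(\eta)$ increment bound, to show via Hoeffding--Azuma that $b_t\le b_0+c_2T\eta^2+c_3\sqrt{T\eta^2\log(1/\beta)}$ for all $t\le T$ with probability $1-\beta$. The slack available is only $\xi=\zeta/2$ (going from $b_0\le 1-2\xi$ to the threshold $1-\xi$), and with $T=\Theta(\log(1/\gamma)/(\eta\lambda\xi))$ the dominant constraint is $\sqrt{T\eta^2\log(1/\beta)}\lesssim\xi$, i.e.\ $\eta\lesssim\lambda\xi^3/\log^2(1/\gamma)$ — this is where the $\zeta^3$ comes from. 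The favorable drift enters only in Lemma~\ref{lem:recurseburn}, where, conditioned on the event, the recursion $\E[b_{t+1}\mid B]\le(1-c\eta\lambda\xi)\E[b_t\mid B]+c'\eta^2$ unwinds to an expectation bound on $b_T$, and Markov's inequality (costing a $1/\gamma$ factor) finishes. Your plan to use the drift directly inside a Freedman-type bound is not wrong in spirit, but you would need to get the increment size right and redo the balance to land on $\zeta^3$, and you would still need something playing the role of Lemma~\ref{lem:knn} to make the increments controllable at all.

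One smaller point: your noise bound $\norm{\bxi_t}\le 2r$ should be $\le 4r$, since $\norm{\bw_{t-1}-\tilde\bw_0}$ can be as large as $2$ and $\norm{\bx_{i_t}\bx_{i_t}^\top-A}_{sp}\le 2r$; this is harmless for constants but worth fixing.
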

Note that the analysis does not depend on the choice of the epoch size $m$,
and does not use the special structure of VR-PCA (in fact, the technique we
use is applicable to any algorithm which takes stochastic gradient steps to
solve this type of problem\footnote{Although there exist previous analyses of
such algorithms in the literature, they unfortunately do not quite apply to
our algorithm, for various technical reasons.}). The proof of the theorem
appears in \secref{subsec:proofburn}.

Considering $\delta,r$ as a constants, we get that the runtime required by
VR-PCA to find a point $\bw$ such that $1-\inner{\bv_1,\bw_T}^2\leq
\frac{1}{2}$ is $\Ocal(d/\lambda^2\zeta^4)$ where
$\zeta$ is a lower bound on $\inner{\bv_1,\tilde{\bw}_0}^2$. As discussed earlier, if
$\tilde{\bw}_0$ is a result of random initialization followed by a power
iteration (requiring $\Ocal(nd)$ time), and the covariance matrix $A$ has
small numerical rank, then $\zeta=\inner{\bv_1,\tilde{\bw}_0}^2=\tilde
\Omega(1/\log(d))$,
and the runtime is
\[
\Ocal\left(nd+\frac{d}{\lambda^2}\log^4(d)\right)~=~
\Ocal\left(d\left(n+\left(\frac{\log^2(d)}{\lambda}\right)^2\right)\right).
\]
By Corollary \ref{corr:main}, the runtime required by
VR-PCA from that point to get an $\epsilon$-accurate solution is
\[
\Ocal\left(d\left(n+\frac{1}{\lambda^2}\right)\log\left(\frac{1}{\epsilon}\right)\right),
\]
so the sum of the two expressions (which is $d\left(n+\frac{1}{\lambda^2}\right)$ up to log-factors),  represents the total runtime required by the algorithm.

Finally, we note that this bound holds under the reasonable assumption that
the numeric rank of $A$ is constant. If this assumption doesn't hold, $\zeta$
can be as large as $d$, and the resulting bound will have a worse polynomial
dependence on $d$. We suspect that this is due to a looseness in the
dependence on $\zeta=\inner{\bv_1,\tilde{\bw}_0}^2$ in \thmref{thm:burn},
since better dependencies can be obtained, at least for slightly different
algorithmic approaches (e.g.
\cite{balsubramani2013fast,hardt2014noisy,de2015global}). We leave a
sharpening of the bound w.r.t. $\zeta$ as an open problem.

\section{Convexity and Non-Convexity of the Rayleigh Quotient}\label{sec:convex}

As mentioned in the introduction, an intriguing open question is whether the
$d\left(n+\frac{1}{\lambda^2}\right)\log\left(\frac{1}{\epsilon}\right)$
runtime guarantees from the previous sections can be further improved.
Although a linear dependence on $d,n$ seems unavoidable, this is not the case
for the quadratic dependence on $1/\lambda$. Indeed, when using deterministic
methods such as power iterations or the Lanczos method, the dependence on
$\lambda$ in the runtime is only $1/\lambda$ or even $\sqrt{1/\lambda}$
\cite{kuczynski1992estimating}. In the world of convex optimization from
which our algorithmic techniques are derived, the analog of
$\lambda$ is the strong convexity parameter of the function, and again, it is
possible to get a dependence of $1/\lambda$, or even $\sqrt{1/\lambda}$ with
accelerated schemes (see e.g.
\cite{johnson2013accelerating,nitanda2014stochastic,frostig2015regularizing}
in the context of the variance-reduction technique we use). Is it possible to
get such a dependence for our problem as well?

Another question is whether the non-convex problem that we are tackling
(\eqref{eq:optproblem}) is really that non-convex. Clearly, it has a nice
structure (since we can solve the problem in polynomial time), but perhaps it
actually has hidden convexity properties, at least close enough to the
optimal points? We note that \eqref{eq:optproblem} can be ``trivially''
convexified, by re-casting it as an equivalent semidefinite program
\cite{boyd2004convex}. However, that would require optimization over $d\times
d$ matrices, leading to poor runtime and memory requirements. The question
here is whether we have any convexity with respect to the \emph{original}
optimization problem over ``thin'' $d\times k$ matrices.

In fact, the two questions of improved runtime and convexity are closely
related: If we can show that the optimization problem is convex in some
domain containing an optimal point, then we may be able to use fast
stochastic algorithms designed for convex optimization problems, inheriting
their good guarantees.

To discuss these questions, we will focus on the $k=1$ case for simplicity
(i.e., our goal is to find a leading eigenvector of the matrix
$A=\frac{1}{n}XX^\top = \frac{1}{n}\sum_{i=1}^{n}\bx_i\bx_i^\top$), and study
potential convexity properties of the negative Rayleigh quotient,
\[
F_A(\bw) ~=~ -\frac{\bw^\top A\bw}{\norm{\bw}^2} ~=~ \frac{1}{n}\sum_{i=1}^{n}\left(-\frac{\inner{\bw,\bx_i}^2}{\norm{\bw}^2}\right).
\]
Note that for $k=1$, this function coincides with \eqref{eq:optproblem} on
the unit Euclidean sphere, and with the same optimal points, but has the nice property
of being defined on the entire Euclidean space (thus, at least its domain is
convex).

At a first glance, such functions $F_A$ appear to potentially be convex at
some bounded distance from an optimum, as illustrated for instance in the
case where $A=\left(
                           \begin{array}{cc}
                             1 & 0 \\
                             0 & 0 \\
                           \end{array}
                         \right)$ (see Figure \ref{fig:rq}).
Unfortunately, it turns out that the figure is misleading, and in fact the
function is \emph{not} convex almost everywhere:

\begin{figure}[t]
\centering
\begin{minipage}{.45\textwidth}
  \centering
  \vskip -1cm
  \includegraphics[trim=2.5cm 1cm 1cm 1cm, clip=true,scale=0.35]{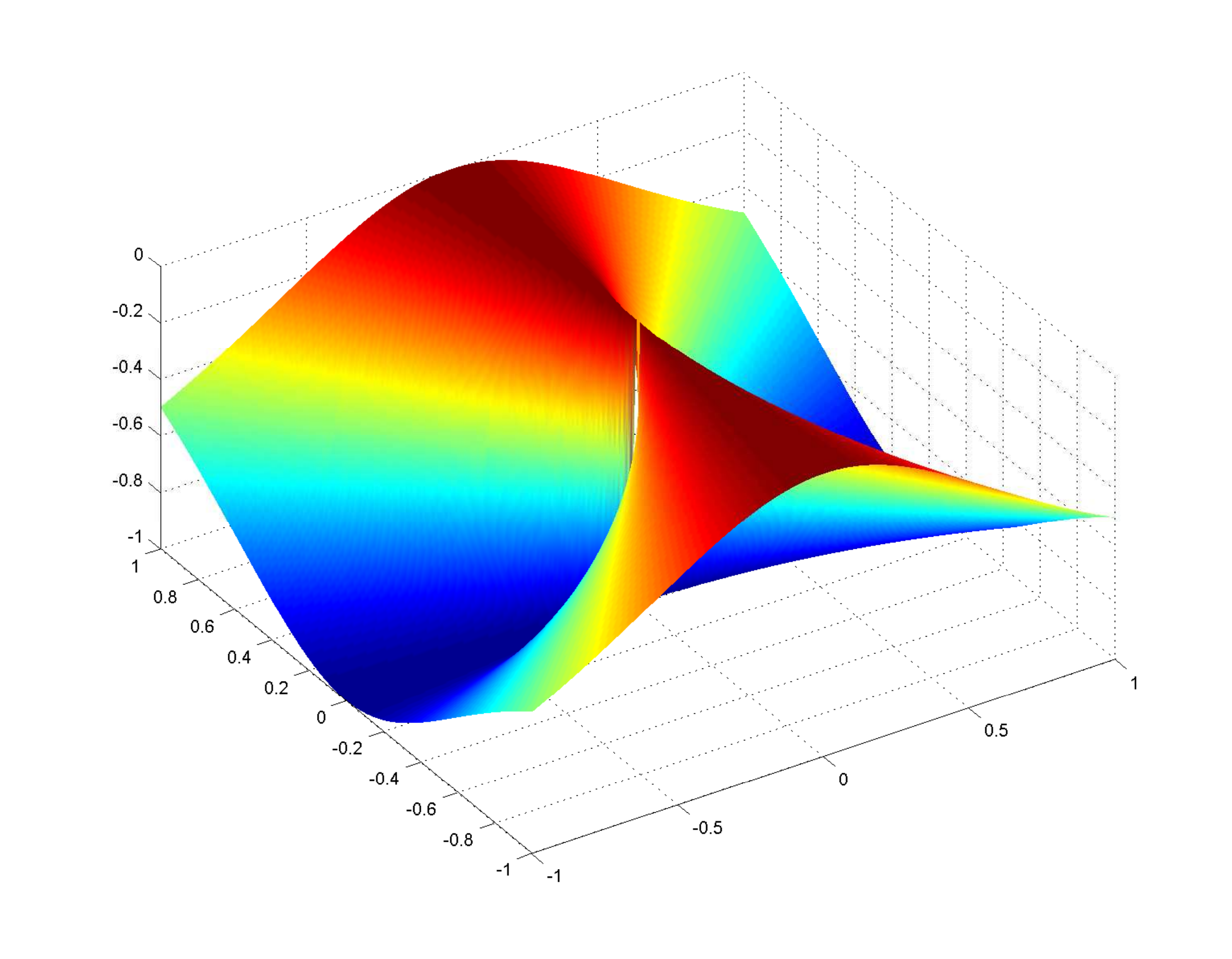}
  \captionof{figure}{The function $(w_1,w_2)\mapsto-\frac{w_1^2}{w_1^2+w_2^2}$, corresponding to $F_A(\bw)$   where $A=(1~0~; 0~0)$. It is invariant to re-scaling of $\bw$, and attains a minimum at $(a,0)$ for any $a\neq 0$.}
  \label{fig:rq}
\end{minipage}%
\hskip 0.1\textwidth
\begin{minipage}{.45\textwidth}
  \centering
  \includegraphics[scale=1.7]{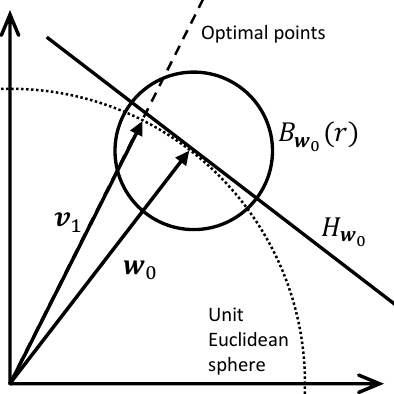}
  \captionof{figure}{Illustration of the construction of the convex set on which $F_A$ is strongly convex and smooth.
  $\bv_1$ is the leading eigenvector of $A$, and a minimum of $F_A$ (as well as any re-scaling of $\bv_1$). $\bw_0$ is a nearby
  unit vector, and we consider the intersection of a hyperplane orthogonal to $\bw_0$, and an Euclidean ball centered at $\bw_0$.}
  \label{fig:const}
\end{minipage}
\end{figure}

\begin{theorem}
  For the matrix $A$ above, the Hessian of $F_A$ is not positive semidefinite for all but a measure-zero set.
\end{theorem}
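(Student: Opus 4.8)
The plan is to exploit the scale-invariance of $F_A$. Since $F_A(\bw)=-w_1^2/(w_1^2+w_2^2)$ is homogeneous of degree $0$ (that is, $F_A(t\bw)=F_A(\bw)$ for every $t\neq 0$), Euler's identity gives $\inner{\nabla F_A(\bw),\bw}=0$ wherever $F_A$ is differentiable. Differentiating the consequent relation $\nabla F_A(t\bw)=t^{-1}\nabla F_A(\bw)$ (which just says $\nabla F_A$ is homogeneous of degree $-1$) with respect to $t$ at $t=1$ yields the Hessian identity
\[
\nabla^2 F_A(\bw)\,\bw ~=~ -\nabla F_A(\bw),
\]
and in particular $\bw^\top\nabla^2 F_A(\bw)\,\bw = -\inner{\nabla F_A(\bw),\bw}=0$. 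So the radial direction $\bw$ always lies in the zero set of the quadratic form associated with the Hessian.

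First I would observe that this nearly finishes the argument. If $\nabla^2 F_A(\bw)$ were positive semidefinite, then a vector on which a PSD form vanishes must lie in its kernel (if $H=L^\top L\succeq 0$ and $\bz^\top H\bz=\norm{L\bz}^2=0$ then $H\bz=0$); hence $\nabla^2 F_A(\bw)\,\bw=0$, and combined with the displayed identity this forces $\nabla F_A(\bw)=0$. Thus it suffices to check that the critical set of $F_A$ in $\reals^2\setminus\{0\}$ has measure zero. A one-line computation gives
\[
\nabla F_A(\bw) ~=~ \frac{2w_1w_2}{(w_1^2+w_2^2)^2}\,(-w_2,\,w_1),
\]
which vanishes exactly on $\{w_1=0\}\cup\{w_2=0\}$, the union of the two coordinate axes; adjoining the origin (where $F_A$ is not differentiable) leaves a measure-zero exceptional set. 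Therefore at every point outside this set the Hessian fails to be positive semidefinite, which is the claim.

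There is no deep obstacle here: the only points requiring care are the elementary fact that a PSD matrix annihilating $\bz$ in the sense $\bz^\top H\bz=0$ satisfies $H\bz=0$, the bookkeeping of the two Euler identities for degree-$0$ homogeneous functions, and the short explicit gradient computation. If one preferred a more pedestrian route one could instead write out the $2\times2$ Hessian explicitly and verify that its determinant is strictly negative off the coordinate axes (so that it has one eigenvalue of each sign); but the homogeneity argument is cleaner, and makes transparent that the phenomenon is not an artifact of this particular $A$ — it holds for any non-constant Rayleigh quotient, whose Hessian is non-PSD away from the (measure-zero) set of eigenvectors.
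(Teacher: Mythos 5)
Your proof is correct, and it takes a genuinely different (and cleaner) route than the paper. The paper computes the full $2\times 2$ Hessian of $F_A(\bw)=-w_1^2/(w_1^2+w_2^2)$ explicitly and shows its determinant equals $-4w_1^2w_2^2/(w_1^2+w_2^2)^4$, which is strictly negative off the coordinate axes, so the Hessian is indefinite there. You instead exploit the degree-$0$ homogeneity of $F_A$: the two Euler identities give $\bw^\top\nabla^2 F_A(\bw)\bw=0$ and $\nabla^2 F_A(\bw)\bw=-\nabla F_A(\bw)$, and the elementary fact that a PSD form vanishing on $\bz$ must kill $\bz$ then forces $\nabla F_A(\bw)=0$ whenever the Hessian is PSD, reducing the claim to a measure-zero bound on the critical set. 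What the paper's approach buys is that it is concrete and self-contained for this specific $2\times2$ example, and it does not rely on the regularity bookkeeping needed to justify differentiating the homogeneity relation. What your approach buys is generality and transparency: the argument shows that \emph{any} non-constant negative Rayleigh quotient $F_A$ has a non-PSD Hessian outside the measure-zero set of critical points (eigenvector rays), without dimension restriction and without ever exhibiting the Hessian; it also pinpoints the structural reason for the failure of convexity, namely the radial flat direction. Both proofs are valid; yours is the more illuminating one and would also subsume the paper's remark, made immediately after this theorem, that the negative result holds for $F_A$ viewed over all of Euclidean space.
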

\begin{proof}
  The leading eigenvector of $A$ is $\bv_1=(1,0)$, and $F_A(\bw)=-\frac{w_1^2}{w_1^2+w_2^2}$. The
  Hessian of this function at some $\bw$ equals
  \[
                           \frac{2}{(w_1^2+w_2^2)^3}\left(\begin{array}{cc}
                             w_2^2(3w_1^2-w_2^2) & -2w_1w_2(w_1^2-w_2^2) \\
                             -2w_1w_2(w_1^2-w_2^2) & w_1^2(w_1^2-3w_2^2) \\
                           \end{array}
                         \right).
  \]
  The determinant of this $2\times 2$ matrix equals
  \begin{align*}
  &\frac{4}{(w_1^2+w_2^2)^6}\left(w_1^2w_2^2(3w_1^2-w_2^2)(w_1^2-3w_2^2)-4w_1^2w_2^2(w_1^2-w_2^2)^2\right)\\
  &=\frac{4w_1^2w_2^2}{(w_1^2+w_2^2)^6}\left((3w_1^2-w_2^2)(w_1^2-3w_2^2)-4(w_1^2-w_2^2)^2\right)\\
  &=\frac{4w_1^2w_2^2}{(w_1^2+w_2^2)^6}\left(-(w_1^2+w_2^2)^2\right)~=~-\frac{4w_1^2w_2^2}{(w_1^2+w_2^2)^4},
  \end{align*}
  which is always non-positive, and strictly negative for $\bw$ for which $w_1w_2\neq 0$ (which holds for all but a measure-zero
  set of $\reals^d$). Since the determinant of a positive semidefinite matrix is always non-negative, this implies that the Hessian isn't
  positive semidefinite
  for any such $\bw$.
\end{proof}

The theorem implies that we indeed cannot use convex optimization tools as-is
on the function $F_A$, even if we're close to an optimum. However, the non-convexity was shown for $F_A$ as a
function over the entire Euclidean space, so the result does not preclude the
possibility of having convexity on a more constrained, lower-dimensional set.
In fact, this is what we are going to do next: We will show that if we are
given some point $\bw_0$ close enough to an optimum, then we can explicitly
construct a simple convex set, such that \begin{itemize} \item The set
includes an optimal point of $F_A$. \item The function $F_A$ is
$\Ocal(1)$-smooth and $\lambda$-strongly convex in that set. \end{itemize}
This means that we can potentially use a two-stage approach: First, we use some existing
algorithm (such as VR-PCA) to find $\bw_0$, and then switch to a convex
optimization algorithm designed to handle functions with a finite sum
structure (such as $F_A$). Since the runtime of such algorithms scale better
than VR-PCA, in terms of the dependence on $\lambda$, we can hope for an
overall runtime improvement.

Unfortunately, this has a catch: To make it work, we need to have $\bw_0$
\emph{very} close to the optimum -- in fact, we require
$\norm{\bv_1-\bw_0}\leq \Ocal(\lambda)$, and we show (in
\thmref{thm:convextight}) that such a dependence on the eigengap $\lambda$
cannot be avoided (perhaps up to a small polynomial factor). The issue is that the runtime to get such a $\bw_0$, using stochastic-based approaches we are aware of, would scale at least quadratically with $1/\lambda$, but getting dependence better than quadratic was our problem to begin with. For example, the runtime guarantee using VR-PCA to get
such a point $\bw_0$ (even if we start from a good point as specified in \thmref{thm:main}) is on the order of
\[
d\left(n+\frac{1}{\lambda^2}\right)\log\left(\frac{1}{\lambda}\right),
\]
whereas the best known guarantees on getting an $\epsilon$-optimal solution
for $\lambda$-strongly convex and smooth functions (see
\cite{agarwal2014lower}) is on the order of
\[
d\left(n+\sqrt{\frac{n}{\lambda}}\right)\log\left(\frac{1}{\epsilon}\right).
\]
Therefore, the total runtime we can hope for would be on the order of
\begin{equation}\label{eq:runtime1}
  d\left(\left(n+\frac{1}{\lambda^2}\right)\log\left(\frac{1}{\lambda}\right)+\left(n+\sqrt{\frac{n}{\lambda}}\right)\log\left(\frac{1}{\epsilon}\right)\right).
\end{equation}
In comparison, the runtime guarantee of using just VR-PCA to get an
$\epsilon$-accurate solution is on the order of
\begin{equation}\label{eq:runtime2}
d\left(n+\frac{1}{\lambda^2}\right)\log\left(\frac{1}{\epsilon}\right).
\end{equation}
Unfortunately, \eqref{eq:runtime2} is the same as \eqref{eq:runtime1} up to
log-factors, and the difference is not significant unless the required
accuracy $\epsilon$ is extremely small (exponentially small in
$n,1/\lambda$). Therefore, our construction is mostly of theoretical
interest. However, it still shows that asymptotically, as
$\epsilon\rightarrow 0$, it is indeed possible to have runtime scaling better
than \eqref{eq:runtime2}. This might hint that designing practical
algorithms, with better runtime guarantees for our problem, may indeed be
possible.

To explain our construction, we need to consider two convex sets: Given a unit
vector $\bw_0$, define the hyperplane tangent to $\bw_0$,
\[
H_{\bw_0} = \{\bw:\inner{\bw,\bw_0}=1\}
\]
as well as a Euclidean ball of radius $r$ centered at $\bw_0$:
\[
B_{\bw_0}(r) = \{\bw:\norm{\bw-\bw_0}\leq r\}
\]
The convex set we use, given such a $\bw_0$, is simply the intersection of
the two, $H_{\bw_0}\cap B_{\bw_0}(r)$, where $r$ is a sufficiently small
number (see Figure \ref{fig:const}).

The following theorem shows that if $\bw_0$ is $\Ocal(\lambda)$-close to
an optimal point (a leading eigenvector $\bv_1$ of $A$), and we choose the radius of
$B_{w_0}(r)$ appropriately, then $H_{\bw_0}\cap B_{\bw_0}(r)$ contains an optimal point, and the function $F_A$
is indeed $\lambda$-strongly convex and smooth on that set. For simplicity,
we will assume that $A$ is scaled to have spectral norm of $1$, but the
result can be easily generalized.

\begin{theorem}\label{thm:convex}
  For any positive semidefinite $A$ with spectral norm $1$, eigengap $\lambda$ and a leading eigenvector $\bv_1$, and any
  unit vector $\bw_0$ such that $\norm{\bw_0-\bv_1}\leq \frac{\lambda}{44}$,
  the function $F_A(\bw)$ is $20$-smooth and
  $\lambda$-strongly convex on the convex set $H_{\bw_0}\cap B_{\bw_0}\left(\frac{\lambda}{22}\right)$, which
  contains a global optimum of $F_A$.
\end{theorem}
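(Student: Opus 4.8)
The plan is to analyze $F_A$ restricted to the affine set $H_{\bw_0}$ by parametrizing $H_{\bw_0}$ through its tangent space. Write every $\bw\in H_{\bw_0}$ as $\bw=\bw_0+\bu$ with $\bu$ in the subspace $S=\{\bz:\inner{\bz,\bw_0}=0\}$; then $\norm{\bw_0}=1$ and $\bu\perp\bw_0$ give $\norm{\bw}^2=1+\norm{\bu}^2$ and $\bw^\top A\bw=\bw_0^\top A\bw_0+2\bw_0^\top A\bu+\bu^\top A\bu$, so $F_A|_{H_{\bw_0}}$ becomes $g(\bu)=-p(\bu)/q(\bu)$ with $p$ a quadratic on $S$ (quadratic part $\bu^\top A\bu$) and $q(\bu)=1+\norm{\bu}^2$; the region $H_{\bw_0}\cap B_{\bw_0}(r)$ is just $\norm{\bu}\le r$, $r=\lambda/22$. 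The goal is to show $\lambda I_S\preceq \nabla_S^2 g(\bu)\preceq 20 I_S$ for all such $\bu$, which is exactly $\lambda$-strong convexity and $20$-smoothness of $F_A$ on the set.

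For the Hessian, I would differentiate the identity $gq=-p$ twice within $S$ to get
\[
\nabla_S^2 g=\frac{1}{q}\Big(-2\,P_S A P_S-2g\,I_S-(\nabla_S g\otimes\nabla_S q+\nabla_S q\otimes\nabla_S g)\Big),
\]
where $P_S$ is the orthogonal projection onto $S$ (using $\nabla_S^2 p=2P_SAP_S$ as an operator on $S$, and $\nabla_S^2 q=2I_S$), and then bound the three terms. Curvature: since $\bw_0$ is $\frac{\lambda}{44}$-close to $\bv_1$, any unit $\bz\perp\bw_0$ satisfies $\inner{\bz,\bv_1}^2=\Ocal(\lambda^2)$ (because $\inner{\bz,\bv_1}=-\inner{\bz,\bw_0-\langle\bv_1,\bw_0\rangle\bv_1}/\langle\bv_1,\bw_0\rangle$), hence $\bz^\top A\bz\le s_1\inner{\bz,\bv_1}^2+s_2\le 1-\lambda+\Ocal(\lambda^2)$ and $\bz^\top A\bz\ge 0$, i.e. $0\preceq P_SAP_S\preceq(1-\lambda+\Ocal(\lambda^2))I_S$ on $S$. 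Value: $\norm{\bw-\bv_1}\le r+\lambda/44=\Ocal(\lambda)$ and $\norm{\bw}\ge 1$ force the Rayleigh quotient at $\bw$ to be $1-\Ocal(\lambda^2)$, so $-2g\in[2-\Ocal(\lambda^2),2]$. Cross term: we are near a critical point, so from $\nabla_S g=q^{-1}(-\nabla_S p-g\nabla_S q)$, $\norm{\nabla_S p}\le 2\norm{P_SA\bw_0}+2\norm{\bu}=\Ocal(\lambda)$ (here $\norm{P_SA\bw_0}=\Ocal(\lambda)$ again follows from $\bw_0\approx\bv_1$) and $\norm{\nabla_S q}=2\norm{\bu}\le 2r=\Ocal(\lambda)$, so that term has operator norm $\Ocal(\lambda^2)$. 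Combining and dividing by $q\in[1,1+r^2]$ gives $\nabla_S^2 g\succeq\frac{1}{1+r^2}(2\lambda-\Ocal(\lambda^2))I_S\succeq\lambda I_S$ and $\nabla_S^2 g\preceq(0+2+\Ocal(\lambda^2))I_S\preceq 20 I_S$, where the constants $\frac{1}{44},\frac{1}{22}$ and $\lambda\le s_1=1$ leave a comfortable (roughly factor-two) margin for tracking explicit constants.

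To see the set contains a global optimum: the optima of $F_A$ are the nonzero multiples of $\bv_1$, and choosing the sign of $\bv_1$ so that $\inner{\bv_1,\bw_0}>0$ (true as $\norm{\bv_1-\bw_0}<\sqrt2$), the point $\bw^\star=\bv_1/\inner{\bv_1,\bw_0}$ lies in $H_{\bw_0}$; since $\inner{\bv_1,\bw_0}=1-\tfrac12\norm{\bv_1-\bw_0}^2\ge 1-\tfrac12(\lambda/44)^2$, one gets $\norm{\bw^\star-\bw_0}\le\norm{\bw^\star-\bv_1}+\norm{\bv_1-\bw_0}=(1/\inner{\bv_1,\bw_0}-1)+\lambda/44\le(\lambda/44)^2+\lambda/44<\lambda/22$, so $\bw^\star\in B_{\bw_0}(\lambda/22)$.

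I expect the main obstacle to be the curvature-term bound: controlling the spectrum of $P_SAP_S$ on the tangent space $S=\bw_0^\perp$ when $S$ is only close to, not equal to, $\bv_1^\perp$, and checking that the $\Ocal(\lambda^2)$ slack there, together with the slack in $-2g$ and in the cross term, stays strictly below $\lambda$ — this is precisely what forces the hypothesis $\norm{\bw_0-\bv_1}=\Ocal(\lambda)$ rather than a constant distance. The remaining work is bookkeeping of the numerical constants $20$, $\frac{1}{44}$, $\frac{1}{22}$.
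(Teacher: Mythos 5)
Your proposal is correct and reaches the same conclusion by a genuinely different organization of the calculation. The paper first derives the full ambient Hessian $\nabla^2 F_A(\bw)$ as a $d\times d$ matrix (Lemma~\ref{lem:hessian}), then tests it against arbitrary directions $\bg$ in $H_{\bw_0}$; you instead parametrize the affine slice by $\bu\in S=\bw_0^\perp$, write the restricted function as $g(\bu)=-p(\bu)/q(\bu)$, and obtain the intrinsic Hessian directly by differentiating $gq=-p$ twice. Both decompose the Hessian into the same three pieces (a curvature term $P_SAP_S$ vs.\ the paper's $\bg^\top A\bg$, a value term $g$ vs.\ $F(\bw)$, and a gradient/cross term), and both exploit the same key geometric fact, that $S=\bw_0^\perp$ is within $\Ocal(\lambda)$ of $\bv_1^\perp$ so that $P_SAP_S\preceq(1-\lambda+\Ocal(\lambda^2))I_S$. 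One genuine difference in the estimates: you bound $-g=1-\Ocal(\lambda^2)$ using that $\bv_1$ is a stationary point of the Rayleigh quotient (second-order deviation), whereas the paper uses a cruder Lipschitz bound $-F(\bw)\geq 1-12\epsilon=1-\Ocal(\lambda)$ and compensates via its bound on $\bg^\top A\bg$; your sharper estimate is not needed given the constants but is a real improvement in the intermediate inequality. Your argument that the set contains a rescaled $\bv_1$ matches the paper's Lemma~\ref{lem:geom}. The only caveat is that you keep the constants in $\Ocal(\cdot)$ form, so turning this into a complete proof with the stated constants $20$, $\frac{1}{44}$, $\frac{1}{22}$ requires carrying them through explicitly, but the margins you indicate are indeed comfortable and nothing structural is missing.
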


The proof of the theorem appears in \subsecref{subsec:proofconvex}. Finally,
we show below that a polynomial dependence on the eigengap $\lambda$ is
unavoidable, in the sense that the convexity property is lost if $\bw_0$ is
significantly further away from $\bv_1$.

\begin{theorem}\label{thm:convextight}
  For any $\lambda,\epsilon\in \left(0,\frac{1}{2}\right)$, there exists a positive semidefinite matrix $A$ with spectral norm $1$, eigengap $\lambda$, and leading eigenvector
  $\bv_1$, as well as a unit vector $\bw_0$ for which $\norm{\bv_1-\bw_0}\leq \sqrt{2(1+\epsilon)\lambda)}$,
  such that $F_A$ is not convex in any neighborhood of $\bw_0$ on $H_{\bw_0}$.
\end{theorem}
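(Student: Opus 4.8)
\textbf{Proof proposal for Theorem~\ref{thm:convextight}.}

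The plan is to exhibit an explicit diagonal matrix together with a ``bad'' point $\bw_0$, and to witness non-convexity along a single well-chosen line inside the hyperplane $H_{\bw_0}$. Concretely, I would work in $\reals^3$ (padding with zero rows and columns if a larger ambient dimension is wanted) and take $A=\mathrm{diag}(1,\,1-\lambda,\,0)$: this is positive semidefinite, has spectral norm $1$, eigengap $s_1-s_2=\lambda$, and leading eigenvector $\bv_1=\be_1$. For the starting point I would take $\bw_0=\cos\theta\,\be_1+\sin\theta\,\be_3$ for an angle $\theta$ fixed later; $\bw_0$ is a unit vector, so it lies on $H_{\bw_0}$, and $\norm{\bv_1-\bw_0}^2=2-2\cos\theta$.

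The key point is that $\be_2\perp\bw_0$, so $t\mapsto\bw_0+t\be_2$ is a line lying entirely in $H_{\bw_0}$ and passing through $\bw_0$. Along it, using $\cos^2\theta+\sin^2\theta=1$,
\[
F_A(\bw_0+t\be_2)\;=\;-\frac{\cos^2\theta+(1-\lambda)t^2}{1+t^2},
\]
whose second derivative at $t=0$ is $2\bigl(\lambda-\sin^2\theta\bigr)$ (one line of Taylor expansion of $(1+t^2)^{-1}$; more generally $\tfrac{d^2}{dt^2}F_A(\bw_0+t\bu)\big|_{t=0}=2\bigl(\norm{\bu}^2\bw_0^\top A\bw_0-\bu^\top A\bu\bigr)$ for any $\bu\perp\bw_0$ with $\norm{\bw_0}=1$, and among directions orthogonal to $\bw_0$ it is $\be_2$, with Rayleigh quotient $1-\lambda$, that maximizes $\bu^\top A\bu/\norm{\bu}^2$). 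Hence as soon as $\sin^2\theta>\lambda$, the restriction of $F_A$ to $H_{\bw_0}$ has a strictly negative second derivative at $\bw_0$ along a direction tangent to $H_{\bw_0}$; any function whose restriction to $H_{\bw_0}$ were convex on some relative neighborhood of $\bw_0$ would have nonnegative second derivatives there along every such direction, so $F_A$ is not convex in any neighborhood of $\bw_0$ on $H_{\bw_0}$.

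It remains to choose $\theta$ so that both $2-2\cos\theta\leq 2(1+\epsilon)\lambda$ (the distance bound) and $\sin^2\theta>\lambda$ (the non-convexity condition) hold. The distance bound is equivalent to $\cos\theta\geq 1-(1+\epsilon)\lambda$, i.e. to $\sin^2\theta\leq 2(1+\epsilon)\lambda-(1+\epsilon)^2\lambda^2$, and since $\lambda<\tfrac12$ this upper bound lies in $(0,1)$, so $\cos\theta>0$ is well defined. Thus it suffices to pick $\sin^2\theta$ in the interval $\bigl(\lambda,\;2(1+\epsilon)\lambda-(1+\epsilon)^2\lambda^2\bigr]$, which is nonempty exactly when $1<2(1+\epsilon)-(1+\epsilon)^2\lambda$; over $\lambda,\epsilon\in(0,\tfrac12)$ the right-hand side is always $>\tfrac32$, so a valid $\theta$ (e.g. the interval's midpoint) always exists. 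This finishes the construction.

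The computations here are routine; the one genuine decision is structural — the ambient dimension should be taken $\geq 3$, because in $\reals^2$ the hyperplane $H_{\bw_0}$ is one-dimensional and the single relevant second derivative works out to $2\lambda\cos(2\theta)$, which stays positive for all angles compatible with the distance bound (when $\lambda$ is small), so no non-convexity is visible. The extra ``spectator'' coordinate $\be_3$ carrying eigenvalue $0$ is what lets $\bw_0$ drift a distance $\Theta(\sqrt\lambda)$ from $\bv_1$ while keeping $\be_2$ (with the larger Rayleigh quotient $1-\lambda$) orthogonal to it, and once that is set up the threshold $\sqrt{2(1+\epsilon)\lambda}$ falls out of comparing $\sin^2\theta$ with $\lambda$. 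I expect the only points needing a little care to be the matching of constants and the check that the interval for $\sin^2\theta$ is nonempty for every $\lambda,\epsilon\in(0,\tfrac12)$.
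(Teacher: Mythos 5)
Your proposal is correct and is essentially the paper's own construction, only written in a slightly more general parametrization: the paper fixes $A=\mathrm{diag}(1,1-\lambda,0)$ and $\bw_0=(\sqrt{1-p^2},0,p)$ with $p=\sqrt{(1+\epsilon)\lambda}$ (i.e.\ $\sin\theta=p$ in your notation), tests non-convexity along the ray $t\mapsto(\sqrt{1-p^2},t,p)$, and shows the second derivative is strictly negative for small $t$; your version replaces the explicit $p$ by an angle $\theta$, derives the general directional second-derivative formula $2(\norm{\bu}^2\bw_0^\top A\bw_0-\bu^\top A\bu)$, and then verifies that the admissible interval for $\sin^2\theta$ is nonempty for all $\lambda,\epsilon\in(0,\tfrac12)$ — the paper's choice $\sin^2\theta=(1+\epsilon)\lambda$ lies in that interval. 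The only genuine addition in your write-up is the remark about why dimension three is necessary (that in $\reals^2$ the relevant second derivative is $2\lambda\cos 2\theta>0$ for all admissible $\theta$ when $\lambda$ is small), which the paper leaves implicit.
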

\begin{proof}
  Let
  \[
  A=\left(\begin{array}{ccc}
             1 & 0 & 0 \\
             0 & 1-\lambda & 0 \\
             0 & 0 & 0 \\
           \end{array}
         \right),
  \]
  for which $\bv_1=(1,0,0)$, and take
  \[
  \bw_0 = (\sqrt{1-p^2},0,p),
  \]
  where $p =
  \sqrt{(1+\epsilon)\lambda}$ (which ensures
  $\norm{\bv_1-\bw_0}^2 = \sqrt{2p^2} = \sqrt{2(1+\epsilon)\lambda}$). Consider the ray $\{(\sqrt{1-p^2},t,p):t\geq 0\}$, and note that it starts from $\bw_0$ and lies
  in $H_{\bw_0}$. The function $F_A$ along that ray
  (considering it as a function of $t$) is of the form
  \[
  -\frac{(1-p^2)+(1-\lambda)t^2}{(1-p^2)+t^2+p^2}~=~ -\frac{1-p^2+(1-\lambda)t^2}{1+t^2}.
  \]
  The second derivative with respect to $t$ equals
  \[
  -2\frac{(3t^2-1)(\lambda-p^2)}{(t^2+1)^3}~=~
  2\frac{(3t^2-1)\epsilon\lambda}{(t^2+1)^3},
  \]
  where we plugged in the definition of $p$. This is a negative quantity for any $t<\frac{1}{\sqrt{3}}$. Therefore, the function
  $F_A$ is strictly concave (and not convex) along the ray we have defined and close enough to $\bw_0$, and therefore
  isn't convex in any neighborhood of $\bw_0$ on $H_{\bw_0}$.
\end{proof}

\section{Proofs}\label{sec:proofs}

\subsection{Proof of \thmref{thm:main}}\label{subsec:proofmain}

Although the proof structure generally mimics the proof of Theorem 1 in
\cite{shamir2015stochastic} for the $k=1$ special case, it is more intricate
and requires several new technical tools. To streamline the presentation of
the proof, we begin with proving a series of auxiliary lemmas in Subsection
\ref{subsec:proofaux}, and then move to the main proof in Subsection
\ref{subsec:proofmain}. The main proof itself is divided into several steps,
each constituting one or more lemmas.

Throughout the proof, we use the well-known facts that for all matrices
$B,C,D$ of suitable dimensions, $\Tr(B+C)=\Tr(B)+\Tr(C)$, $\Tr(BC)=\Tr(CB)$,
$\Tr(BCD)=\Tr(DBC)$, and $\Tr(B^\top B)=\norm{B}_F^2$. Moreover, since $\Tr$
is a linear operation, $\E[\Tr(B)]=\E[\Tr(B)]$ for a random matrix $B$.

\subsubsection{Auxiliary Lemmas}\label{subsec:proofaux}

\begin{lemma}\label{lem:zc}
  For any $B,C,D\succeq 0$, it holds that $\Tr(BC)\geq \Tr(B(C-D))$ and $\Tr(BC)\geq
  \Tr((B-D)C)$.
\end{lemma}
\begin{proof}
  It is enough to prove that for any positive semidefinite matrices
  $E,G$, it holds that $\Tr(EG)\geq 0$. The lemma follows by taking
  either
  $E=B,G=D$ (in which case, $\Tr(BC)=\Tr(B(C-D))+\Tr(BD)\geq
  \Tr(B(C-D))$), or $E=D,G=C$ (in which case, $\Tr(BC)=\Tr((B-D)C)+\Tr(DC)\geq \Tr((B-D)C)$).

  Any positive semidefinite matrix $M$ can be written as the product $M^{1/2}M^{1/2}$ for some
  symmetric matrix $M^{1/2}$ (known as the matrix square root of $M$). Therefore,
  \begin{align*}
  \Tr(EG)&=\Tr(E^{1/2}E^{1/2}G^{1/2}G^{1/2})=\Tr(G^{1/2}E^{1/2}E^{1/2}G^{1/2})\\
  &=
  \Tr((E^{1/2}G^{1/2})^{\top}(E^{1/2}G^{1/2}))=\norm{E^{1/2}G^{1/2}}_F^2 \geq 0.
  \end{align*}
\end{proof}

\begin{lemma}\label{lem:trinv}
  If $B\succeq 0$ and $C\succ 0$, then
  \[
  \Tr(BC^{-1}) \geq \Tr(B(2I-C)),
  \]
  where $I$ is the identity matrix.
\end{lemma}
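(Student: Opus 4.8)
The plan is to reduce the claimed inequality to the single nonnegativity statement $\Tr(EG)\geq 0$ for positive semidefinite $E,G$, which was already established inside the proof of \lemref{lem:zc}. Concretely, subtracting the right-hand side from the left-hand side and using linearity of the trace, the inequality $\Tr(BC^{-1})\geq \Tr(B(2I-C))$ is equivalent to
\[
\Tr\bigl(B\,(C^{-1}-2I+C)\bigr)~\geq~0.
\]
So it suffices to show that $C^{-1}-2I+C\succeq 0$ whenever $C\succ 0$, and then apply the key fact from \lemref{lem:zc} with $E=B$ and $G=C^{-1}-2I+C$.

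\textbf{Showing $C^{-1}-2I+C\succeq 0$.} I would do this by simultaneous diagonalization: write $C=Q\Lambda Q^\top$ with $Q$ orthogonal and $\Lambda=\mathrm{diag}(\lambda_1,\ldots,\lambda_d)$, where each $\lambda_i>0$ since $C\succ 0$. Then $C^{-1}-2I+C=Q(\Lambda^{-1}-2I+\Lambda)Q^\top$, and the middle factor is diagonal with entries
\[
\frac{1}{\lambda_i}-2+\lambda_i~=~\left(\sqrt{\lambda_i}-\frac{1}{\sqrt{\lambda_i}}\right)^{2}~\geq~0,
\]
so $\Lambda^{-1}-2I+\Lambda\succeq 0$ and hence $C^{-1}-2I+C\succeq 0$. (Equivalently, one can note $C^{-1}-2I+C=C^{-1/2}(I-C)^2C^{-1/2}=MM^\top$ with $M=C^{-1/2}(I-C)$, using that $C^{-1/2}$ and $I-C$ are symmetric and commute; either route is a couple of lines.)

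\textbf{Conclusion.} Combining the two pieces: $B\succeq 0$ by hypothesis and $C^{-1}-2I+C\succeq 0$ by the above, so the nonnegativity-of-trace fact proven in \lemref{lem:zc} gives $\Tr(B(C^{-1}-2I+C))\geq 0$, which rearranges to the desired $\Tr(BC^{-1})\geq \Tr(B(2I-C))$. I do not expect any real obstacle here; the only thing to be careful about is to invoke the PSD-trace fact correctly (it needs \emph{both} matrices PSD, which is why the reduction to $C^{-1}-2I+C\succeq 0$ is the substantive step) and to note that $C\succ 0$ is exactly what makes $C^{-1}$ well defined and the eigenvalues $\lambda_i$ strictly positive.
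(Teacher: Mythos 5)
Your proof is correct and follows essentially the same route as the paper: reduce via \lemref{lem:zc} to showing $C^{-1}-2I+C\succeq 0$, diagonalize $C$, and verify the resulting scalar inequality (which the paper writes as $(c-1)^2\geq 0$ and you write equivalently as $(\sqrt{\lambda_i}-1/\sqrt{\lambda_i})^2\geq 0$). The only cosmetic difference is that the paper first states the one-dimensional case as a standalone step and then invokes it, whereas you check the diagonal entries directly; the alternate factorization $C^{-1/2}(I-C)^2C^{-1/2}$ you mention is a nice one-liner but not needed.
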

\begin{proof}
  We begin by proving the one-dimensional case, where $B,C$ are scalars $b\geq 0,c>0$.
  The inequality then becomes $bc^{-1}\geq b(2-c)$, which is equivalent to
  $1\geq c(2-c)$, or upon rearranging, $(c-1)^2\geq 0$, which trivially
  holds.

  Turning to the general case, we note that by \lemref{lem:zc}, it is enough to prove that
  $C^{-1}-(2I-C)\succeq 0$. To prove this, we make a couple of observations. The
  positive definite matrix $C$ (like any positive definite matrix) has a
  singular value decomposition which can be written as
  $USU^\top$, where $U$ is an orthogonal matrix, and
  $S$ is a diagonal matrix with positive entries. Its inverse is
  $US^{-1}U^\top$, and $2I-C = 2I-USU^\top = U(2I-S)U^\top$. Therefore,
  \[
  C^{-1}-(2I-C) = US^{-1}U^\top-U(2I-S)U^\top = U(S^{-1}-(2I-S))U^\top.
  \]
  To show this matrix is positive semidefinite, it is enough to show that each diagonal entry
  of $S^{-1}-(2I-S)$ is non-negative. But this reduces to the one-dimensional
  result we already proved, when $b=1$ and $c>0$ is any diagonal entry in $S$. Therefore,
  $C^{-1}-(2I-C)\succeq 0$, from which the result follows.
\end{proof}

\begin{lemma}\label{lem:frsp}
  For any matrices $B,C$,
  \[
  \Tr(BC) \leq \norm{B}_F\norm{C}_F
  \]
  and
  \[
  \norm{BC}_F \leq \norm{B}_{sp}\norm{C}_F.
  \]
\end{lemma}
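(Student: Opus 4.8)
The plan is to treat both inequalities as instances of Cauchy–Schwarz: the first in the space of matrices equipped with the Frobenius inner product, and the second applied column-by-column.

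For the first inequality, I would observe that $\Tr(BC) = \sum_{i,j} B_{ij} C_{ji} = \inner{B, C^\top}_F$, where $\inner{M,N}_F := \Tr(M^\top N)$ is the standard Frobenius inner product, whose induced norm is $\norm{\cdot}_F$. Applying Cauchy–Schwarz for this inner product and passing through the absolute value gives $\Tr(BC) \le |\inner{B,C^\top}_F| \le \norm{B}_F\,\norm{C^\top}_F$. Since transposing a matrix does not change the sum of squares of its entries, $\norm{C^\top}_F = \norm{C}_F$, and the claim follows. (When $B,C$ are not conformable for the product $BC$ the statement is vacuous; otherwise the dimensions match automatically.)

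For the second inequality, I would write $C = (\bc_1,\ldots,\bc_p)$ in terms of its columns, so that $BC = (B\bc_1,\ldots,B\bc_p)$ and hence $\norm{BC}_F^2 = \sum_{j=1}^{p}\norm{B\bc_j}^2$. By the definition of the spectral norm as the Euclidean operator norm, $\norm{B\bc_j} \le \norm{B}_{sp}\norm{\bc_j}$ for every $j$; squaring and summing yields $\norm{BC}_F^2 \le \norm{B}_{sp}^2\sum_{j}\norm{\bc_j}^2 = \norm{B}_{sp}^2\norm{C}_F^2$, and taking square roots completes the argument. An equivalent route uses traces together with \lemref{lem:zc}: since $\norm{BC}_F^2 = \Tr(B^\top B\,CC^\top)$, while $\norm{B}_{sp}^2 I - B^\top B \succeq 0$ and $CC^\top \succeq 0$, \lemref{lem:zc} gives $\Tr\big((\norm{B}_{sp}^2 I - B^\top B)CC^\top\big) \ge 0$, i.e. $\norm{BC}_F^2 \le \norm{B}_{sp}^2\Tr(CC^\top) = \norm{B}_{sp}^2\norm{C}_F^2$.

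Neither step presents a genuine obstacle — these are classical inequalities — so the only things to be careful about are bookkeeping: invoking the absolute value $|\Tr(BC)|$ before applying Cauchy–Schwarz (the trace itself may be negative, but is dominated by its absolute value), and using the transposition-invariance $\norm{C^\top}_F = \norm{C}_F$. I would present the first inequality through the Frobenius inner product and the second through the column decomposition, as the shortest self-contained route.
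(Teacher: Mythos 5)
Your proof is correct and follows essentially the same route as the paper: the first inequality via Cauchy--Schwarz in the Frobenius inner product (the paper simply says ``immediate from Cauchy--Schwarz''), and the second via the column decomposition $\norm{BC}_F^2 = \sum_j \norm{B\bc_j}^2$. The extra details you supply — writing $\Tr(BC)=\inner{B,C^\top}_F$, noting $\norm{C^\top}_F=\norm{C}_F$, and the alternative argument via \lemref{lem:zc} — are sound but not different in substance.
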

\begin{proof}
  The first inequality is immediate from Cauchy-Shwartz. As to the second
  inequality, letting $\bc_i$ denote the $i$-th column of $C$, and $\norm{\cdot}_2$ the
  Euclidean norm for vectors,
  \[
  \norm{BC}_F = \sqrt{\sum_{i}\norm{B\bc_i}_2^2} \leq \sqrt{\sum_{i}\left(\norm{B}_{sp}\norm{\bc_i}_2\right)^2}
  = \norm{B}_{sp}\sqrt{\sum_i \norm{\bc_i}_2^2} = \norm{B}_{sp}\norm{C}_F.
  \]
\end{proof}

\begin{lemma}\label{lem:taylor}
  Let $B_1,B_2,Z_1,Z_2$ be $k\times k$ square matrices, where $B_1,B_2$
  are fixed and $Z_1,Z_2$ are stochastic and zero-mean (i.e. their
  expectation is the all-zeros matrix). Furthermore, suppose that for some fixed
  $\alpha,\gamma,\delta>0$, it holds with probability $1$ that
  \begin{itemize}
    \item For all $\nu\in [0,1]$, $B_2+\nu Z_2 \succeq \delta I$.
    \item $\max\{\norm{Z_1}_F,\norm{Z_2}_F\}\leq \alpha$.
    \item $\norm{B_1+\eta Z_1}_{sp} \leq \gamma$.
  \end{itemize}
  Then
  \[
  \E\left[\Tr\left((B_1+Z_1)(B_2+Z_2)^{-1}\right)\right]\geq \Tr(B_1 B_2^{-1})-\frac{\alpha^2(1+\gamma/\delta)}{\delta^2}.
  \]
\end{lemma}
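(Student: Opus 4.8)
The plan is to expand $(B_2+Z_2)^{-1}$ in a Neumann-type series around $B_2^{-1}$, take expectations term-by-term (killing the linear term because $Z_1,Z_2$ are zero-mean), and bound the quadratic and higher-order remainder using the uniform bounds $\norm{Z_i}_F\le\alpha$, $B_2+\nu Z_2\succeq\delta I$, and $\norm{B_1+\eta Z_1}_{sp}\le\gamma$. Concretely, I would first write $(B_2+Z_2)^{-1}=B_2^{-1}-(B_2+Z_2)^{-1}Z_2B_2^{-1}$ (the exact ``resolvent identity''), and iterate once more to get $(B_2+Z_2)^{-1}=B_2^{-1}-B_2^{-1}Z_2B_2^{-1}+(B_2+Z_2)^{-1}Z_2B_2^{-1}Z_2B_2^{-1}$. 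Multiplying by $(B_1+Z_1)$ and taking the trace, the term $\Tr(B_1B_2^{-1})$ is what we want to keep; the terms linear in $Z_1$ or in $Z_2$ alone vanish in expectation since $\E[Z_1]=\E[Z_2]=0$ and $B_1,B_2$ are fixed; what survives is $\Tr(B_1B_2^{-1})$ minus the expectation of two remainder terms, one containing $Z_1Z_2$ and one containing the second-order resolvent remainder times $(B_1+Z_1)$.

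Next I would bound each surviving term in absolute value. For the cross term $\E[\Tr(Z_1 B_2^{-1} Z_2 B_2^{-1})]$ (or similar), use $|\Tr(\cdot)|\le\norm{\cdot}_F\cdot\norm{\cdot}_F$ from Lemma~\ref{lem:frsp} together with $\norm{B_2^{-1}}_{sp}\le 1/\delta$ (which follows from $B_2\succeq\delta I$) and $\norm{Z_i}_F\le\alpha$; this gives a bound of order $\alpha^2/\delta^2$. For the second-order remainder $\E[\Tr((B_1+Z_1)(B_2+Z_2)^{-1}Z_2B_2^{-1}Z_2B_2^{-1})]$, note that $(B_2+Z_2)^{-1}$ has spectral norm at most $1/\delta$ (again from $B_2+Z_2\succeq\delta I$, using the $\nu=1$ case of the first hypothesis), $B_1+Z_1$ appears with $\norm{B_1+Z_1}_{sp}\le\gamma$ (taking $\eta=1$, or adjusting for the step-size normalization), and there are two more factors of $B_2^{-1}$ and two of $Z_2$; repeatedly applying $\norm{BC}_F\le\norm{B}_{sp}\norm{C}_F$ from Lemma~\ref{lem:frsp} and then Cauchy--Schwarz yields a bound of order $\gamma\alpha^2/\delta^3$. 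Summing the two estimates gives exactly $\frac{\alpha^2(1+\gamma/\delta)}{\delta^2}$ after factoring, which is the claimed bound.

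The main obstacle, I expect, is getting the expansion and the bookkeeping exactly right so that the remainder is genuinely \emph{second} order in $Z$ (so that the $\alpha^2$ appears, not $\alpha$) while keeping all intermediate matrices controlled: the resolvent identity must be iterated the right number of times, and one must be careful that the ``tail'' factor $(B_2+Z_2)^{-1}$ (as opposed to $B_2^{-1}$) is what multiplies the last $Z_2$, so that its spectral norm is bounded via the $\nu=1$ positive-definiteness hypothesis rather than requiring a convergence-of-series argument. A secondary subtlety is matching the hypothesis $\norm{B_1+\eta Z_1}_{sp}\le\gamma$ (which is stated with an $\eta$) to the term actually arising in the expansion; one either absorbs $\eta$ into $Z_1$ at the outset or notes that the relevant quantity is bounded by $\gamma$ by assumption. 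Everything else is routine application of Lemmas~\ref{lem:zc}, \ref{lem:trinv}, and~\ref{lem:frsp}, plus linearity of trace and expectation.
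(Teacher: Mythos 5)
Your proposal is correct and yields exactly the stated bound, but it takes a genuinely different (if closely related) route from the paper. The paper defines the scalar function $f(\nu)=\Tr\left((B_1+\nu Z_1)(B_2+\nu Z_2)^{-1}\right)$, computes $f'$ and $f''$ explicitly, bounds $|f''(\nu)|$ uniformly over $\nu\in[0,1]$ by $\frac{2\alpha^2(1+\gamma/\delta)}{\delta^2}$, and then applies a first-order Taylor expansion with Lagrange remainder around $\nu=0$; after taking expectations, the $f'(0)$ term vanishes by zero-mean-ness. You instead iterate the resolvent identity twice,
\[
(B_2+Z_2)^{-1}=B_2^{-1}-B_2^{-1}Z_2B_2^{-1}+(B_2+Z_2)^{-1}Z_2B_2^{-1}Z_2B_2^{-1},
\]
multiply by $B_1+Z_1$, and take trace and expectation; the terms linear in $Z_1$ or $Z_2$ alone drop out, leaving two genuinely second-order error terms, one bounded by $\alpha^2/\delta^2$ and the other by $\gamma\alpha^2/\delta^3$, whose sum is $\frac{\alpha^2(1+\gamma/\delta)}{\delta^2}$. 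The two arguments are two faces of the same second-order expansion, but yours avoids any calculus and has a small hypothesis advantage: you only need $\norm{B_1+Z_1}_{sp}\le\gamma$ (the $\nu=1$ case), whereas the paper's uniform bound on $f''(\nu)$ requires $\norm{B_1+\nu Z_1}_{sp}\le\gamma$ for all $\nu\in[0,1]$. (Note that the ``$\eta$'' in the lemma's third hypothesis is a typo for ``$\nu$,'' and the intended reading is uniform over $\nu\in[0,1]$, as the paper's own proof makes clear.) One small point to make explicit when you write it up: the bound on the error terms needs the absolute-value form $|\Tr(BC)|\le\norm{B}_F\norm{C}_F$, which follows from Cauchy--Schwarz in the Frobenius inner product even though Lemma~\ref{lem:frsp} is stated without the absolute value.
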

\begin{proof}
  Define the function
  \[
  f(\nu)=\Tr\left((B_1+\nu Z_1)(B_2+\nu
  Z_2)^{-1}\right)~,~~~~\nu\in [0,1].
  \]
  Since $B_2+\nu Z_2$ is positive definite,
  it is always invertible, hence $f(\nu)$ is indeed well-defined. Moreover,
  it can be differentiated with respect to $\nu$, and we have
  \[
  f'(\nu) = \Tr\left(Z_1(B_2+\nu Z_2)^{-1}-(B_1+\nu Z_1)(B_2+\nu Z_2)^{-1}Z_2(B_2+\nu Z_2)^{-1}\right).
  \]
  Again differentiating with respect to $\nu$, we have
  \begin{align*}
  f''(\nu) ~&=~ \Tr\Big(-2Z_1(B_2+\nu Z_2)^{-1}Z_2(B_2+\nu Z_2)^{-1}\\
  &~~~~~~~~~~~+2(B_1+\nu Z_1)(B_2+\nu Z_2)^{-1}Z_2(B_2+\nu Z_2)^{-1}Z_2(B_2+\nu Z_2)^{-1}\Big)\\
  &=2\Tr\Big(\Big(-Z_1+(B_1+\nu Z_1)(B_2+\nu Z_2)^{-1}Z_2\Big)(B_2+\nu Z_2)^{-1}Z_2(B_2+\nu Z_2)^{-1}\Big).
  \end{align*}
  Using \lemref{lem:frsp} and the triangle inequality, this is at most
  \begin{align*}
    &2\norm{-Z_1+(B_1+\nu Z_1)(B_2+\nu Z_2)^{-1}Z_2}_F\norm{(B_2+\nu Z_2)^{-1}Z_2(B_2+\nu Z_2)^{-1}}_F\\
    &\leq 2\left(\norm{Z_1}_F+\norm{(B_1+\nu Z_1)(B_2+\nu Z_2)^{-1}Z_2}_F\right)\norm{(B_2+\nu Z_2)^{-1}}_{sp}^2\norm{Z_2}_F\\
    &\leq 2\left(\norm{Z_1}_F+\norm{B_1+\nu Z_1}_{sp}\norm{\left(B_2+\nu Z_2\right)^{-1}}_{sp}\norm{Z_2}_F\right)\norm{(B_2+\nu Z_2)^{-1}}_{sp}^2\norm{Z_2}_F\\
    &\leq 2\left(\alpha+\gamma\frac{1}{\delta}\alpha\right)\frac{1}{\delta^2}\alpha ~=~ \frac{2\alpha^2(1+\gamma/\delta)}{\delta^2}.
  \end{align*}
  Applying a Taylor expansion to $f(\cdot)$ around $\nu=0$, with
  a Lagrangian remainder term, and substituting the values for $f'(\nu),f''(\nu)$,
  we can lower bound $f(1)$ as follows:
  \begin{align*}
    f(1) &\geq f(0)+f'(0)*(1-0)-\frac{1}{2}\max_{\nu}|f''(\nu)|*(1-0)^2\\
    &=\Tr\left(B_1B_2^{-1}\right)+\Tr\left(Z_1B_2^{-1}-B_1B_2^{-1}Z_2B_2^{-1}\right)-\frac{\alpha^2(1+\gamma/\delta)}{\delta^2}.
  \end{align*}
  Taking expectation over $Z_1,Z_2$, and recalling they are zero-mean, we get that
  \[
  \E[f(1)] \geq \Tr\left(B_1B_2^{-1}\right)-\frac{\alpha^2(1+\gamma/\delta)}{\delta^2}.
  \]
  Since $\E[f(1)] = \E\left[\Tr\left((B_1+Z_1)(B_2+Z_2)^{-1}\right)\right]$,
  the result in the lemma follows.
\end{proof}

\begin{lemma}\label{lem:trexp}
  Let $U_1,\ldots,U_k$ and $R_1,R_2$ be positive semidefinite matrices, such that $R_2-R_1\succeq 0$, and
  define the function
  \[
  f(x_1\ldots x_k)=\Tr\left(\left(\sum_{i=1}^{k}x_iU_i+R_1\right)\left(\sum_{i=1}^{k}x_iU_i+R_2\right)^{-1}\right).
  \]
  over all $(x_1\ldots x_k)\in [\alpha,\beta]^d$ for some $\beta\geq \alpha\geq 0$.
  Then $~\min_{(x_1\ldots x_k)\in[\alpha,\beta]^d}f(\bx) =
  f(\alpha,\ldots,\alpha)$.
\end{lemma}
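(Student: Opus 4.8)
The plan is to eliminate the $R_1$ dependence from the objective and reduce everything to a monotonicity statement. Write $S=\sum_{i=1}^k x_i U_i$, $Q = S + R_2$, and $D = R_2-R_1$, which is positive semidefinite by hypothesis. Then the matrix inside the trace is $(Q-D)Q^{-1} = I - DQ^{-1}$, so
\[
f(x_1,\ldots,x_k) \;=\; \Tr(I) - \Tr\!\left(D Q^{-1}\right),
\]
and since $\Tr(I)$ is a constant, minimizing $f$ over $[\alpha,\beta]^k$ is the same as maximizing $g(\bx) = \Tr(DQ^{-1})$. (Here I read the ``$[\alpha,\beta]^d$'' in the statement as $[\alpha,\beta]^k$, and take for granted, as the statement's use of $Q^{-1}$ does, that $Q$ is invertible throughout the box, so that $f$ is a smooth function of $\bx$ there.)

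The main step is to show $f$ is coordinatewise non-decreasing on $[\alpha,\beta]^k$. I would do this by differentiation: $\partial Q/\partial x_i = U_i$ and $\partial Q^{-1}/\partial x_i = -Q^{-1}U_i Q^{-1}$, so using $QQ^{-1}=I$ together with $P:=Q-D$ one gets, after the $\Tr(U_iQ^{-1})$ terms cancel,
\[
\frac{\partial f}{\partial x_i} \;=\; \Tr\!\left(D Q^{-1} U_i Q^{-1}\right) \;=\; \Tr\!\left(\bigl(Q^{-1}DQ^{-1}\bigr) U_i\right),
\]
where the last equality uses cyclicity of the trace and the symmetry of $Q^{-1}$ (which holds since $Q$ is symmetric). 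Now $Q^{-1}DQ^{-1} = (D^{1/2}Q^{-1})^\top (D^{1/2}Q^{-1}) \succeq 0$ and $U_i\succeq 0$, so by the fact established inside the proof of \lemref{lem:zc} (that $\Tr(EG)\ge 0$ for positive semidefinite $E,G$) we get $\partial f/\partial x_i \ge 0$ for every $i$ and every $\bx$ in the box. Hence $f$ is non-decreasing in each coordinate, and its minimum over $[\alpha,\beta]^k$ is attained at the corner $(\alpha,\ldots,\alpha)$.

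I expect the only real subtlety — rather than a genuine obstacle — to be bookkeeping around well-definedness and differentiability: one needs $Q=\sum_i x_i U_i + R_2$ invertible on the whole box for the derivative formulas (and hence the conclusion) to make sense, which is implicit in the statement. If one prefers to avoid calculus, the same conclusion follows purely order-theoretically: $x_i\mapsto Q$ is non-decreasing in the positive semidefinite order (we add the PSD matrix $U_i$), hence $Q^{-1}$ is non-increasing in that order, and then $\Tr(D\,\cdot\,)$ with $D\succeq 0$ preserves the order by \lemref{lem:zc}, giving that $g$ is coordinatewise non-increasing and $f$ coordinatewise non-decreasing. Either route delivers $\min_{[\alpha,\beta]^k} f = f(\alpha,\ldots,\alpha)$.
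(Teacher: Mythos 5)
Your argument is essentially the paper's: you differentiate $f$ in each coordinate, arrive at the same derivative formula $\partial f/\partial x_i = \Tr\bigl((R_2-R_1)Q^{-1}U_iQ^{-1}\bigr)$ with $Q=\sum_i x_i U_i + R_2$, and conclude nonnegativity from $\Tr(EG)\ge 0$ for PSD $E,G$. Your rewriting $f = \Tr(I)-\Tr(DQ^{-1})$ before differentiating is a small streamlining, and your phrasing of the final positivity step (identifying the two PSD factors $Q^{-1}DQ^{-1}$ and $U_i$ and invoking the fact from Lemma~\ref{lem:zc}) is a touch more careful than the paper's, which loosely asserts the product itself is PSD; your calculus-free aside via operator antimonotonicity of the inverse is a genuinely different and arguably cleaner route, but the main proof coincides with the paper's.
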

\begin{proof}
  Taking a partial derivative of $f$ with respect to some $x_j$, we have
  \begin{align*}
  &\frac{\partial}{\partial x_j}f(\bx)\\
  &= \Tr\left(U_j\left(\sum_{i=1}^{k}x_iU_i+R_2\right)^{-1}
  -\left(\sum_{i=1}^{k}x_iU_i+R_1\right)\left(\sum_{i=1}^{k}x_iU_i+R_2\right)^{-1}U_j\left(\sum_{i=1}^{k}x_iU_i+R_2\right)^{-1}\right)\\
  &=\Tr\left(\left(I-\left(\sum_{i=1}^{k}U_i+R_1\right)\left(\sum_{i=1}^{k}x_iU_i+R_2\right)^{-1}\right)U_j\left(\sum_{i=1}^{k}x_iU_i+R_2\right)^{-1}\right)\\
  &=\Tr\left(\left(\left(\sum_{i=1}^{k}x_iU_i+R_2\right)-\left(\sum_{i=1}^{k}x_iU_i+R_1\right)\right)\left(\sum_{i=1}^{k}x_iU_i+R_2\right)^{-1}U_j\left(\sum_{i=1}^{k}x_iU_i+R_2\right)^{-1}\right)\\
  &=\Tr\left(\left(R_2-R_1\right)\left(\sum_{i=1}^{k}x_iU_i+R_2\right)^{-1}U_j\left(\sum_{i=1}^{k}x_iU_i+R_2\right)^{-1}\right).
  \end{align*}
  By the lemma's assumptions, each matrix in the product above is positive semidefinite, hence the product
  is positive semidefinite, and the trace is non-negative. Therefore, $\frac{\partial}{\partial x_j}f(\bx)\geq 0$,
  which implies that the function is minimized when each $x_j$ takes its smallest possible value, i.e. $\alpha$.
\end{proof}

\begin{lemma}\label{lem:fratio}
  Let $B$ be a $k\times k$ matrix with minimal singular value $\delta$. Then
  \[
  1-\frac{\norm{B^\top B}_F^2}{\norm{B}_F^2} ~\geq~ \max\left\{1-\norm{B}_F^2~,~\frac{\delta^2}{k}\left(k-\norm{B}_F^2\right)\right\}.
  \]
\end{lemma}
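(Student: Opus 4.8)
The plan is to diagonalize and reduce the matrix inequality to an elementary statement about the squared singular values of $B$. Write $\sigma_1\ge\cdots\ge\sigma_k\ge 0$ for the singular values of $B$ and set $a_i:=\sigma_i^2$, so that $\delta^2=\min_i a_i=a_k$ and $\norm{B}_F^2=\sum_i a_i=:s$; since $B^\top B$ is symmetric positive semidefinite with eigenvalues $a_1,\dots,a_k$, we also have $\norm{B^\top B}_F^2=\Tr\big((B^\top B)^2\big)=\sum_i a_i^2$. (If $B=0$ the statement is vacuous, so assume $s>0$.) The claim then becomes
\[
1-\frac{\sum_i a_i^2}{s}\ \ge\ \max\left\{\,1-s,\ \frac{\delta^2}{k}\big(k-s\big)\,\right\},
\]
and I would prove the two lower bounds separately.

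For the first bound, note that $\sum_i a_i^2\le\big(\sum_i a_i\big)^2=s^2$, since the $a_i$ are nonnegative and hence the cross terms in the expansion of $s^2$ are nonnegative. Dividing by $s>0$ gives $\sum_i a_i^2/s\le s$, i.e. $1-\sum_i a_i^2/s\ge 1-s$. This step needs no hypothesis on $B$ beyond it being a matrix.

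For the second bound I would start from the identity $s-\sum_i a_i^2=\sum_i a_i(1-a_i)$, and use that whenever this lemma is applied $B$ has operator norm at most $1$ (it is a product of matrices with orthonormal columns), so $a_i\in[\delta^2,1]$ for every $i$. Then $(1-a_i)(a_i-\delta^2)\ge 0$, i.e. $a_i(1-a_i)\ge\delta^2(1-a_i)$, and summing over $i$ gives $s-\sum_i a_i^2\ge\delta^2(k-s)$. Dividing by $s$ and then replacing the denominator $s$ by the larger $k$ (legitimate because $\delta^2(k-s)\ge 0$, as $s=\sum_i a_i\le k$) yields
\[
1-\frac{\sum_i a_i^2}{s}=\frac{s-\sum_i a_i^2}{s}\ \ge\ \frac{\delta^2(k-s)}{s}\ \ge\ \frac{\delta^2(k-s)}{k}.
\]
Since the left-hand side dominates both quantities inside the maximum, the lemma follows.

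The reduction and the first bound are routine; the work — and the only place where something can go wrong — is the second bound, which genuinely requires $\sigma_i(B)\le 1$ (equivalently $B^\top B\preceq I$): without it the inequality can fail, e.g. $B=\mathrm{diag}\big(\sqrt{3/2},\sqrt{1/2}\big)$ with $k=2$ has left-hand side $-1/4$ while $\tfrac{\delta^2}{k}(k-s)=0$. So the crux is to identify and invoke this boundedness, establish the per-coordinate inequality $(1-a_i)(a_i-\delta^2)\ge 0$, and then correctly pass from the denominator $s$ to $k$ using $s\le k$.
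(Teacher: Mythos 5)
Your proof is correct and takes essentially the same route as the paper's: reduce to the squared singular values, obtain the first bound from $\sum_i a_i^2\le\big(\sum_i a_i\big)^2$, and obtain the second by lower-bounding the numerator via $a_i\ge\delta^2$ and upper-bounding the denominator by $k$. You are also right to flag that the second bound genuinely needs $\sigma_i\le 1$ (your counterexample is valid); the paper's proof uses this too, writing "assuming $\sigma_i\in[\delta,1]$" inside the proof even though the lemma statement omits it, and the hypothesis is justified there — as you observe — because the lemma is only ever applied to $B=V_k^\top W$ with $V_k,W$ having orthonormal columns.
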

\begin{proof}
  We have
  \[
  1-\frac{\norm{B^\top B}_F^2}{\norm{B}_F^2} \geq 1-\frac{\norm{B}_F^2\norm{B}_F^2}{\norm{B}_F^2} = 1-\norm{B}_F^2,
  \]
  so it remains to prove $1-\frac{\norm{B^\top B}_F^2}{\norm{B}_F^2}\geq
  \frac{\delta^2}{k}\left(k-\norm{B}_F^2\right)$.
  Let $\sigma_1,\ldots,\sigma_k$ denote the vector of singular values of $B$.
  The singular values of $B^\top B$ are $\sigma_1^2,\ldots,\sigma_k^2$, and the Frobenius
  norm of a matrix equals the Euclidean norm of its vector of singular values. Therefore,
  the lemma is equivalent to requiring
  \[
  1-\frac{\sum_{i=1}^{k}\sigma_i^4}{\sum_{i=1}^{k}\sigma_i^2} ~\geq~ \frac{\delta^2}{k}\left(k-\sum_{i=1}^{k}\sigma_i^2\right),
  \]
  assuming $\sigma_i \in [\delta,1]$ for all $i$. This holds since
  \begin{align*}
    1-\frac{\sum_{i}\sigma_i^4}{\sum_{i}\sigma_i^2}
    ~=~ \frac{\sum_{i}\sigma_i^2-\sum_{i}\sigma_i^4}{\sum_{i}\sigma_i^2}
    ~=~ \frac{\sum_{i}\sigma_i^2\left(1-\sigma_i^2\right)}{\sum_{i}\sigma_i^2}
    ~\geq~ \frac{\delta^2\sum_{i}\left(1-\sigma_i^2\right)}{k}
    ~=~ \frac{\delta^2}{k}\left(k-\sum_{i}\sigma_i^2\right).
  \end{align*}
\end{proof}

\begin{lemma}\label{lem:wwt}
    For any $d\times k$ matrices $C,D$ with orthonormal columns, let
    \[
    D_C = \arg\min_{DB~:~\left(DB\right)^\top\left(DB\right)=I}\norm{C-DB}_F^2
    \]
    be the nearest orthonormal-columns matrix to $C$ in the column space of $D$ (where $B$ is a $k\times k$ matrix). Then
    the matrix $B$ minimizing the above equals $B=VU^\top$, where $C^\top D=USV^\top$
    is the SVD decomposition of $C^\top D$, and it holds that
    \[
    \norm{C-D_C}_F^2\leq 2(k-\norm{C^\top D}_F^2).
    \]
\end{lemma}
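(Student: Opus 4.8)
The plan is to recognize this as the classical orthogonal Procrustes problem plus an elementary inequality on singular values. First I would note that since $D$ has orthonormal columns, $(DB)^\top(DB)=B^\top D^\top D B = B^\top B$, so the constraint $(DB)^\top(DB)=I$ is simply that $B$ ranges over $k\times k$ orthogonal matrices. Expanding the objective and using $\norm{C}_F^2 = \norm{DB}_F^2 = k$ (both have orthonormal columns), I get
\[
\norm{C-DB}_F^2 = k - 2\Tr(C^\top DB) + k = 2k - 2\Tr(C^\top DB),
\]
so minimizing the distance is equivalent to maximizing $\Tr(C^\top DB)$ over orthogonal $B$.

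Next I would diagonalize using the SVD $C^\top D = USV^\top$ with $U,V$ orthogonal $k\times k$ and $S\succeq 0$ diagonal. By cyclicity of the trace, $\Tr(C^\top DB) = \Tr(S\,V^\top B U)$, and writing $Z = V^\top B U$ (which is orthogonal), we have $\Tr(SZ) = \sum_i s_i Z_{ii} \leq \sum_i s_i$, since $s_i\geq 0$ and $|Z_{ii}|\leq 1$ for any orthogonal $Z$ (each column is a unit vector). Equality holds precisely when $Z = I$, i.e. $B = VU^\top$, which identifies the minimizer as claimed and gives the optimal value $\norm{C-D_C}_F^2 = 2k - 2\sum_i s_i = 2\big(k - \Tr(S)\big)$.

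To finish, it remains to show $2(k-\Tr(S)) \leq 2(k-\norm{C^\top D}_F^2)$, i.e. $\sum_i s_i \geq \sum_i s_i^2$, equivalently $\sum_i s_i(1-s_i)\geq 0$. This follows because every singular value of $C^\top D$ lies in $[0,1]$: indeed $\norm{C^\top D}_{sp}\leq \norm{C}_{sp}\norm{D}_{sp} = 1$ since $C,D$ have orthonormal columns, so $s_i\leq 1$, while $s_i\geq 0$ trivially. Combining the two displays yields $\norm{C-D_C}_F^2 \leq 2(k-\norm{C^\top D}_F^2)$.

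There is no real obstacle here; the only two points requiring a word of justification are the bound $|Z_{ii}|\leq 1$ for an orthogonal matrix $Z$ (each row and column has unit Euclidean norm) and the fact that the singular values of a product of two orthonormal-column matrices are at most $1$. Both are immediate, so the argument is short and self-contained.
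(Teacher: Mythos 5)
Your proof is correct and follows essentially the same route as the paper: reduce the constraint to $B^\top B=I$, expand $\norm{C-DB}_F^2 = 2k-2\Tr(C^\top DB)$, take the SVD $C^\top D = USV^\top$ to get the optimal value $2(k-\Tr(S))$, and then use that each singular value $s_i$ of $C^\top D$ lies in $[0,1]$ to conclude $\Tr(S)=\sum_i s_i \geq \sum_i s_i^2 = \norm{C^\top D}_F^2$. The only difference is cosmetic: the paper simply cites the orthogonal Procrustes solution, whereas you re-derive it via the bound $\Tr(SZ)\le\sum_i s_i$ for orthogonal $Z$, making the argument self-contained. (One small nit: equality $\Tr(SZ)=\sum_i s_i$ holds when $Z=I$, but not \emph{only} when $Z=I$ if some $s_i=0$; this does not affect identifying $B=VU^\top$ as a minimizer.)
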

\begin{proof}
  Since $D$ has orthonormal columns, we have $D^\top D=I$, so the definition of $B$ is equivalent to
    \[
    B = \arg\min_{B~:~B^\top B=I}\norm{C-DB}_F^2.
    \]
  This is the orthogonal Procrustes problem (see e.g.
  \cite{golub2012matrix}), and the solution is easily shown to be $B=VU^\top$ where $USV^\top$ is the SVD decomposition of $C^\top D$.
  In this case, and using the fact that $\norm{C}_F^2=\norm{D}_F^2=k$ (as $C,D$ have orthonormal columns), we
  have that $\norm{C-D_C}_F^2$ equals
  \[
  \norm{C-DB}_F^2 = \norm{C}_F^2+\norm{D}_F^2-2\Tr(C^\top D B) = 2\left(k-\Tr(USV^\top (VU^\top))\right)=2\left(k-\Tr(USU^\top)\right).
  \]
  Since the trace function is similarity-invariant, this equals
  $2k-\Tr(S)$. Let $s_1\ldots,s_k$ be the diagonal elements of $S$, and note that they
  can be at most $1$ (since they are the singular values of $C^\top D$, and both $C$ and $D$ have orthonormal columns).
  Recalling that the Frobenius norm equals the Euclidean norm of the singular
  values, we can therefore
  upper bound the above as follows:
  \[
  2\left(k-\Tr(USU^\top)\right) = 2\left(k-\Tr(S)\right)=2\left(k-\sum_{i=1}^{k}s_i\right) \leq 2\left(k-\sum_{i=1}^{k}s_i^2\right)
  = 2\left(k-\norm{C^\top D}_F^2\right).
  \]
\end{proof}

\begin{lemma}\label{lem:knn}
  Let $W_{t},W'_{t}$ be as defined in Algorithm \ref{alg:algblock}, where we assume
  $\eta < \frac{1}{3}$. Then for any
  $d\times k$ matrix $V_k$ with orthonormal columns, it holds that
  \[
  \left|\norm{V_k^\top W_{t}}_F^2-\norm{V_k^\top W_{t-1}}_F^2\right|\leq \frac{12k\eta}{1-3\eta}.
  \]
\end{lemma}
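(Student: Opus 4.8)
The plan is to split the one-step transition $W_{t-1}\to W_t$ into the additive stochastic move $W_{t-1}\to W'_t$ and the re-orthonormalization $W'_t\to W_t=W'_t\bigl((W'_t)^\top W'_t\bigr)^{-1/2}$, bound each in spectral norm, and only at the end convert to the Frobenius potential $\norm{V_k^\top\cdot}_F^2$. Set $\Delta:=W'_t-W_{t-1}$. First I would bound $\norm{\Delta}_{sp}$: rewriting the update as $\Delta=\eta\bigl(\bx_{i_t}\bx_{i_t}^\top(W_{t-1}-\tilde W_{s-1}B_{t-1})+A\tilde W_{s-1}B_{t-1}\bigr)$ and using that $W_{t-1}$ and $\tilde W_{s-1}B_{t-1}$ have orthonormal columns (the latter since $B_{t-1}$ is orthogonal), together with $\norm{\bx_{i_t}\bx_{i_t}^\top}_{sp}\le 1$ and $\norm{A}_{sp}\le 1$ (the normalization $\max_i\norm{\bx_i}^2\le 1$ being in force), the triangle inequality and submultiplicativity give $\norm{\Delta}_{sp}\le\eta(1\cdot 2+1\cdot 1)=3\eta$.

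Next I would control the re-orthonormalization. Put $N:=(W'_t)^\top W'_t$ and $M:=N^{-1/2}$. For a unit vector $\bv\in\reals^k$ we have $\bv^\top N\bv=\norm{W_{t-1}\bv+\Delta\bv}^2$ with $\norm{W_{t-1}\bv}=1$ and $\norm{\Delta\bv}\le 3\eta$, so every eigenvalue of $N$ lies in $[(1-3\eta)^2,(1+3\eta)^2]$; this is the only place the hypothesis $\eta<\tfrac13$ is used, ensuring $N\succ0$ and that $M$ stays tame. Consequently the eigenvalues of $M$ lie in $\bigl[\tfrac1{1+3\eta},\tfrac1{1-3\eta}\bigr]$, giving $\norm{M}_{sp}\le\tfrac1{1-3\eta}$ and $\norm{M-I}_{sp}\le\tfrac{3\eta}{1-3\eta}$. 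Writing $W_t-W_{t-1}=W'_tM-W_{t-1}=W_{t-1}(M-I)+\Delta M$ and using $\norm{W_{t-1}}_{sp}=1$, I obtain
\[
\norm{W_t-W_{t-1}}_{sp}\ \le\ \norm{M-I}_{sp}+\norm{\Delta}_{sp}\norm{M}_{sp}\ \le\ \frac{3\eta}{1-3\eta}+\frac{3\eta}{1-3\eta}\ =\ \frac{6\eta}{1-3\eta}.
\]

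Finally I would pass to the potential. With $P:=V_kV_k^\top$ the orthogonal projection onto $\mathrm{col}(V_k)$, one has $\norm{V_k^\top W}_F^2=\norm{PW}_F^2$, so
\[
\bigl|\norm{V_k^\top W_t}_F^2-\norm{V_k^\top W_{t-1}}_F^2\bigr|=\bigl|\norm{PW_t}_F-\norm{PW_{t-1}}_F\bigr|\cdot\bigl(\norm{PW_t}_F+\norm{PW_{t-1}}_F\bigr).
\]
The first factor is at most $\norm{P(W_t-W_{t-1})}_F\le\norm{W_t-W_{t-1}}_F\le\sqrt k\,\norm{W_t-W_{t-1}}_{sp}$ (since $W_t-W_{t-1}$ is $d\times k$, hence of rank $\le k$), and the second is at most $\norm{W_t}_F+\norm{W_{t-1}}_F=2\sqrt k$ (each has $k$ orthonormal columns). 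Multiplying the bounds from the three steps gives $\sqrt k\cdot\tfrac{6\eta}{1-3\eta}\cdot 2\sqrt k=\tfrac{12k\eta}{1-3\eta}$, which is the claim.

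I expect the middle step --- handling the map $W'_t\mapsto W'_t\bigl((W'_t)^\top W'_t\bigr)^{-1/2}$, and in particular keeping $(W'_t)^\top W'_t$ away from singularity --- to be the only part needing care; steps one and three are routine uses of $\norm{BC}_F\le\norm{B}_{sp}\norm{C}_F$ (\lemref{lem:frsp}), the triangle inequality, and the bound $\norm{\cdot}_F\le\sqrt k\,\norm{\cdot}_{sp}$ for $d\times k$ matrices.
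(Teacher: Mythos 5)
Your proof is correct and follows the same overall strategy as the paper: split the step into the additive move $W_{t-1}\to W'_t$ and the re-orthonormalization $W'_t\to W_t$, bound $\norm{W_t-W_{t-1}}_{sp}\le \frac{6\eta}{1-3\eta}$, and then convert a spectral-norm perturbation of $W$ into a bound of $2k\norm{W_t-W_{t-1}}_{sp}$ on the change of the Frobenius potential. Where you diverge is in the technical machinery: the paper invokes Weyl's singular-value perturbation theorem twice --- first to locate the singular values of $W'_t$ in $[1-3\eta,1+3\eta]$, and second to bound the coordinatewise change in the singular-value vector of $V_k^\top W$ by $\norm{V_k^\top W_t-V_k^\top W_{t-1}}_{sp}$ before factoring the difference of squared Euclidean norms of those vectors. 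You avoid Weyl entirely: you bound the eigenvalues of $N=(W'_t)^\top W'_t$ directly via the Rayleigh quotient $\norm{W_{t-1}\bv+\Delta\bv}^2$ and a triangle inequality, and you handle the potential difference by the reverse triangle inequality on $\norm{PW}_F$ together with $\norm{B}_F\le\sqrt k\,\norm{B}_{sp}$ for a $d\times k$ matrix $B$. This makes the argument slightly more elementary and self-contained, at the modest cost of introducing the projector $P=V_kV_k^\top$; the quantitative conclusions at each stage are identical.
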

\begin{proof}
Letting $\bs_t, \bs_{t-1}$ denote the vectors of singular values of $V_k^\top
W_t$ and $V_k^\top W_{t-1}$, and noting that they are both in $[0,1]^k$ (as
$V_k,W_{t-1},W_{t}$ all have orthonormal columns), the left hand side of the
inequality in the lemma statement equals
\[
|\norm{\bs_t}^2-\norm{\bs_{t-1}}^2| = \left(\norm{\bs_t}_2+\norm{\bs_{t-1}}_2\right)
\left|~\norm{\bs_t}_2-\norm{\bs_{t-1}}_2~\right|\leq 2\sqrt{k}\norm{\bs_t-\bs_{t-1}}_2
\leq 2k\norm{\bs_t-\bs_{t-1}}_{\infty},
\]
where $\norm{\cdot}_{\infty}$ is the infinity norm. By Weyl's matrix
perturbation theorem\footnote{Using its version for singular values, which
implies that the singular values of matrices $B$ and $B+E$ are different by
at most $\norm{E}_{sp}$.} \cite{horn2012matrix}, this is upper bounded by
\begin{equation}\label{eq:knn}
2k\norm{V_k^\top W_t-V_k^\top W_{t-1}}_{sp}\leq
2k\norm{V_k}_{sp}\norm{W_t-W_{t-1}}_{sp} \leq 2k\norm{W_t-W_{t-1}}_{sp}.
\end{equation}
Recalling the relationship between $W_t$ and $W_{t-1}$ from Algorithm
\ref{alg:algblock}, we have that
\[
W_t' = W_{t-1}+\eta N,
\]
where
\[
\norm{N}_{sp} \leq \norm{\bx_{i_t}\bx_{i_t}^\top W_{t-1}}_{sp}+\norm{\bx_{i_t}\bx_{i_t}^\top\tilde{W}_{s-1}B_{t-1}}_{sp}
+\norm{\frac{1}{n}\sum_{i=1}^{n}\bx_i\bx_i^\top\tilde{W}_{s-1}B_{t-1}}_{sp}\leq 3,
\]
as $W_{t-1},\tilde{W}_{s-1},B_{t-1}$ all have orthonormal columns, and
$\bx_{i_t}\bx_{i_t}^\top$ and $\frac{1}{n}\sum_{i=1}^{n}\bx_i\bx_i^\top$ have
spectral norm at most $1$. Therefore, $W_{t}'$ equals $W_{t-1}$, up to a
matrix perturbation of spectral norm at most $3\eta$. Again by Weyl's
theorem, this implies that the $k$ non-zero singular values of the $d\times
k$ matrix $W'_{t}$ are different from those of $W_{t-1}$ (which has
orthonormal columns) by at most $3\eta$, and hence all lie in
$[1-3\eta,1+3\eta]$. As a result, the singular values of
$\left(W_{t}^{'\top}W'_{t}\right)^{-1/2}$ all lie in
$\left[\frac{1}{1+3\eta},\frac{1}{1-3\eta}\right]$. Collecting these
observations, we have
\begin{align*}
  &\norm{W_{t}-W_{t-1}}_{sp} = \norm{(W_{t-1}+\eta N)
  \left(W_{t-1}^{'\top}W'_{t-1}\right)^{-1/2}-W_{t-1}}_{sp}\\
  &\leq \norm{W_{t-1}\left(\left(W_{t-1}^{'\top}W'_{t-1}\right)^{-1/2}-I\right)+
  \eta N\left(W_{t-1}^{'\top}W'_{t-1}\right)^{-1/2}}_{sp}\\
  &\leq \norm{\left(W_{t-1}^{'\top}W'_{t-1}\right)^{-1/2}-I}_{sp}+
  \eta\norm{N}_{sp}\norm{\left(W_{t-1}^{'\top}W'_{t-1}\right)^{-1/2}}_{sp}\\
  &\leq \frac{3\eta}{1-3\eta}+\frac{3\eta}{1-3\eta} ~=~ \frac{6\eta}{1-3\eta}.
\end{align*}
Plugging back to \eqref{eq:knn}, the result follows.
\end{proof}


\subsubsection{Main Proof}\label{subsec:proofmainmain}

To simplify the technical derivations, note that the algorithm remains the
same if we divide each $\bx_i$ by $\sqrt{r}$, and multiply $\eta$ by $r$.
Since $\max_i \norm{\bx_i}^2\leq r$, this corresponds to running the
algorithm with step-size $\eta r$ rather than $\eta$, on a re-scaled dataset
of points with squared norm at most $1$, and with an eigengap of $\lambda/r$
instead of $\lambda$. Therefore, we can simply analyze the algorithm assuming
that $\max_i \norm{\bx_i}^2\leq 1$, and in the end plug in $\lambda/r$
instead of $\lambda$, and $\eta r$ instead of $\eta$, to get a result which
holds for data with squared norm at most $r$.

\subsubsection*{Part I: Establishing a Stochastic Recurrence Relation}

We begin by focusing on a single iteration $t$ of the algorithm, and analyze
how $\norm{V_k^\top W_t}_F^2$ (which measures the similarity between the
column spaces of $V_k$ and $W_t$) evolves during that iteration. The key
result we need is \lemref{lem:recur} below, which is specialized for our
algorithm in \lemref{lem:recursealg}.

\begin{lemma}\label{lem:recur}
Let $A$ be a $d\times d$ symmetric matrix with all eigenvalues $s_1\geq
s_2\geq\ldots\geq s_d$ in $[0,1]$, and suppose that $s_{k}-s_{k+1}\geq
\lambda$ for some $\lambda>0$.

Let $N$ be a $d\times k$ zero-mean random matrix such that $\norm{N}_{F}\leq
\sigma^F_N$ and $\norm{N}_{sp}\leq \sigma^{sp}_N$ with probability $1$, and
define
\[
r_N = 46~(\sigma^F_N)^2\left(1+\frac{8}{3}\left(\frac{1}{4}\sigma^{sp}_N+2\right)^2\right)\\
\]

Let $W$ be a $d\times k$ matrix with orthonormal columns, and define
\[
W' = (I+\eta A)W+\eta N~~,~~ W'' = W'(W^{'\top}W')^{-1/2},
\]
for some $\eta\in \left[0,\frac{1}{4\max\{1,\sigma^F_N\}}\right]$.

If $V_k=[\bv_1,\bv_2\ldots,\bv_k]$ is the $d\times k$ matrix of $A$'s first
$k$ eigenvectors, then the following holds:
\begin{itemize}
  \item      $ \E\left[1-\norm{V_k^\top W''}_F^2\right] \leq
      \left(1-\frac{4}{5}\eta\lambda\norm{V_k^\top
      W}_F^2\right)\left(1-\norm{V_k^\top W}_F^2\right)+\eta^2 r_N$
  \item If $\norm{V_k^\top W}_F^2\geq k-\frac{1}{2}$, then
\[
  \E_N\left[k-\norm{V_k^\top W''}_F^2\right]\leq
\left(k-\norm{V_k^\top
W}_F^2\right)\left(1-\frac{1}{10}\eta\lambda\right)+\eta^2r_N.
\]
\end{itemize}
\end{lemma}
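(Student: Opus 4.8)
The plan is to analyze how the quantity $\norm{V_k^\top W''}_F^2$ changes in one step, working with the unnormalized iterate $W' = (I+\eta A)W + \eta N$ first and then paying for the normalization $W'' = W'(W^{'\top}W')^{-1/2}$. The natural algebraic identity is
\[
\norm{V_k^\top W''}_F^2 = \Tr\left( (W')^\top V_k V_k^\top W' (W^{'\top}W')^{-1}\right),
\]
which has exactly the form $\Tr(B_1 B_2^{-1})$ with $B_1 = (W')^\top V_kV_k^\top W'$ and $B_2 = W^{'\top}W'$. Expanding both $B_1$ and $B_2$ in $\eta$ gives $B_1 = B_1^{(0)} + \eta Z_1 + \eta^2(\cdots)$ and $B_2 = B_2^{(0)} + \eta Z_2 + \eta^2(\cdots)$, where the $\eta^1$ terms $Z_1, Z_2$ are linear in $N$ and hence zero-mean, and $B_1^{(0)} = (\, (I+\eta A)W\,)^\top V_kV_k^\top(I+\eta A)W$ etc. The idea is to invoke \lemref{lem:taylor} (with the appropriate $\alpha,\gamma,\delta$ coming from the norm bounds $\norm{N}_F \le \sigma^F_N$, $\norm{N}_{sp}\le \sigma^{sp}_N$, and $\eta \le \frac{1}{4\max\{1,\sigma^F_N\}}$, which forces the singular values of $W'$ into a band around $1$ so that $B_2$ stays $\succeq \delta I$ for a concrete $\delta$) to conclude
\[
\E\left[\norm{V_k^\top W''}_F^2\right] \ge \Tr\!\left( \tilde B_1 \tilde B_2^{-1}\right) - \eta^2 \cdot(\text{const depending on }\sigma^F_N,\sigma^{sp}_N),
\]
where $\tilde B_1, \tilde B_2$ are the ``noise-free'' matrices obtained by deleting $N$ (i.e. from $\hat W' := (I+\eta A)W$). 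The constant produced here is where the $r_N$ formula, with its $46(\sigma^F_N)^2(1+\tfrac83(\tfrac14\sigma^{sp}_N+2)^2)$ shape, comes from — one has to be a little careful to get $\delta = 1-3\eta\sigma^F_N$ or similar into the denominators and track the $\gamma = \norm{B_1 + \eta Z_1}_{sp}$ bound.

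Next I would handle the deterministic quantity $\Tr(\tilde B_1 \tilde B_2^{-1}) = \norm{V_k^\top \hat W''}_F^2$ where $\hat W'' = \hat W'(\hat W^{'\top}\hat W')^{-1/2}$ is the result of an exact (finite-step-size) orthogonal iteration on $W$. This is a clean linear-algebra estimate: diagonalize $A = \sum s_i \bv_i\bv_i^\top$, write $P = V_kV_k^\top$ (projection onto the top-$k$ eigenspace) and $P^\perp = I - P$, and decompose $W$ accordingly. Using $\Tr(\tilde B_1\tilde B_2^{-1}) \ge \Tr(\tilde B_1(2I - \tilde B_2))$ from \lemref{lem:trinv} as a first move, or more likely working directly with the Rayleigh-type ratio, one shows that the ``energy'' $\norm{V_k^\top \hat W''}_F^2$ increases: the $(I+\eta A)$ factor amplifies the top-$k$ components by a factor at least $(1+\eta s_k)$ relative to the bottom components which grow by at most $(1+\eta s_{k+1})$, and $s_k - s_{k+1}\ge\lambda$ supplies the gap. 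Quantitatively one wants
\[
1 - \norm{V_k^\top \hat W''}_F^2 \le \left(1 - c\,\eta\lambda\,\norm{V_k^\top W}_F^2\right)\left(1 - \norm{V_k^\top W}_F^2\right)
\]
with $c = 4/5$ after absorbing $O(\eta^2)$ lower-order terms into $\eta^2 r_N$; here \lemref{lem:fratio} is the tool that converts the bound $1 - \norm{B^\top B}_F^2/\norm{B}_F^2$ (which arises because $\hat W^{'\top}\hat W'$ is not exactly $I$) into the required form, using that the singular values of $V_k^\top W$ lie in $[0,1]$. The second bullet is the same estimate specialized to the regime $\norm{V_k^\top W}_F^2 \ge k - \tfrac12$: there $\norm{V_k^\top W}_F^2 \ge \tfrac12$ (indeed close to $k$), so the factor $(1 - \tfrac45\eta\lambda\norm{V_k^\top W}_F^2)$ can be replaced by $(1 - \tfrac1{10}\eta\lambda)$ after rewriting $1-\norm{V_k^\top W''}_F^2$ vs.\ $k - \norm{V_k^\top W''}_F^2$; essentially one uses $k - \norm{V_k^\top W''}_F^2 \le (k - \norm{V_k^\top W}_F^2) - (\norm{V_k^\top W''}_F^2 - \norm{V_k^\top W}_F^2)$ and lower-bounds the one-step gain.

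The main obstacle I anticipate is the bookkeeping in the first step: matching the somewhat arbitrary-looking constants ($46$, $\tfrac83$, $\tfrac14\sigma^{sp}_N + 2$, the $\tfrac45$ and $\tfrac1{10}$ contraction factors) requires keeping the $O(\eta^2)$ remainder terms — both from the Taylor expansion in \lemref{lem:taylor} and from expanding $\hat W^{'\top}\hat W' = I + \eta(W^\top A W + \cdots) + \eta^2(\cdots)$ and its inverse square root — under tight enough control that they all fit inside a single $\eta^2 r_N$ with the stated $r_N$. The constraint $\eta \le \frac{1}{4\max\{1,\sigma^F_N\}}$ is precisely what makes all the geometric-series tails (from $(I + \eta(\cdots))^{-1/2}$, from $(B_2 + \nu Z_2)^{-1}$) converge with explicit constants, so the real work is choosing the split between ``first-order gain $\eta\lambda(\cdots)$'' and ``second-order loss $\eta^2 r_N$'' so that the former dominates; the eigengap estimate itself and the normalization lemmas (\ref{lem:trinv}, \ref{lem:frsp}, \ref{lem:fratio}) are comparatively routine once that split is fixed.
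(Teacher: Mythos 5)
Your plan reproduces the paper's overall architecture: start from the trace identity $\norm{V_k^\top W''}_F^2 = \Tr\big((W')^\top V_kV_k^\top W'\,(W'^\top W')^{-1}\big)$, split numerator and denominator into deterministic-plus-zero-mean parts, invoke \lemref{lem:taylor} to remove the mean-zero perturbations at an $O(\eta^2)$ cost, then analyze the remaining power-iteration step using \lemref{lem:trinv}, \lemref{lem:frsp} and \lemref{lem:fratio}. Two of your intermediate steps, however, do not go through as written.

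First, the claimed reduction to the \emph{fully} noise-free quantity $\Tr(\tilde B_1 \tilde B_2^{-1})$ (with $\tilde B_i$ built from $\hat W' = (I+\eta A)W$) has a sign problem. \lemref{lem:taylor} strips only the terms \emph{linear} in $N$; what remains is $\Tr(B_1 B_2^{-1})$ with $B_1 = \tilde B_1 + \eta^2 N^\top V_kV_k^\top N$ and $B_2 = \tilde B_2 + \eta^2 N^\top N$. Dropping the PSD term $\eta^2 N^\top V_kV_k^\top N$ from the numerator only decreases the trace, which is fine, but deleting $\eta^2 N^\top N$ from the denominator \emph{increases} the trace — $\tilde B_2 \preceq B_2$ implies $\tilde B_2^{-1}\succeq B_2^{-1}$ — so you cannot simply absorb it into the $\eta^2 r_N$ penalty by ``deletion.'' The paper avoids this by keeping the quadratic-in-$N$ terms inside $B_1, B_2$ through \lemref{lem:trexp}, and applying \lemref{lem:trinv} to get a \emph{linear} expression $\Tr\big(B_1(2I - C)\big)$; only then does the $-\eta^2 N^\top N$ appear with the favorable sign, where $\Tr\big((\sum_{i\le k} U_i)\,N^\top N\big)\le (\sigma^F_N)^2$ gives the uniform $\eta^2$ cost. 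You would need either to mimic this ordering, or to use a resolvent identity to bound $\Tr(\tilde B_1 B_2^{-1}) \ge \Tr(\tilde B_1 \tilde B_2^{-1}) - O(\eta^2)$ explicitly; as stated, the step is a gap.

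Second, the derivation of the second bullet from the first by ``rewriting $1-\norm{\cdot}_F^2$ vs.\ $k-\norm{\cdot}_F^2$'' does not work for $k>1$: the first recurrence contracts $1-\norm{V_k^\top W}_F^2$ (which can even be negative when $k>1$), and adding $k-1$ to both sides destroys the multiplicative structure. In the paper both bullets come from a single inequality that carries a $\max$, namely
\[
\E\big[\norm{V_k^\top W''}_F^2\big] \ge \norm{V_k^\top W}_F^2\left(1+\tfrac{4}{5}\eta\lambda\max\!\left\{1-\norm{V_k^\top W}_F^2,\ \tfrac{\delta^2}{k}\big(k-\norm{V_k^\top W}_F^2\big)\right\}\right)-\eta^2 r_N,
\]
produced by \lemref{lem:fratio}, where $\delta$ is the smallest singular value of $V_k^\top W$. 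The second bullet uses the second branch of this $\max$ together with the observation that $\norm{V_k^\top W}_F^2 \ge k-\tfrac12$ forces every singular value of $V_k^\top W$ to be at least $\tfrac12$ (hence $\delta\ge\tfrac12$), and $k-\tfrac12\ge \tfrac{k}{2}$. Your sketch never invokes this singular-value lower bound, which is exactly the ingredient that turns the $\delta^2/k$ factor into a constant; without it the $\big(1-\tfrac1{10}\eta\lambda\big)$ contraction does not follow.

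The remaining pieces — the role of the constraint $\eta\le \tfrac1{4\max\{1,\sigma^F_N\}}$ in keeping $B_2+\nu Z_2 \succeq \delta I$, the use of \lemref{lem:frsp} to bound $\norm{Z_i}_F$ and the spectral norm of $B_1+\eta Z_1$, and the eigengap entering through $(1+\eta s_i)/(1+\eta s_k) \le 1-\tfrac45\eta\lambda$ — match the paper and are fine at the level of the proposal.
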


\begin{proof}
Using the fact that $\Tr(BCD)=\Tr(CDB)$ for any matrices $B,C,D$, we have
\begin{align}
\E\left[\norm{V_k^\top W''}_F^2\right] &= \E\left[\Tr\left(W^{''\top}V_k V_k^\top W^{''}\right)\right]\notag\\
&= \E\left[\Tr\left(\left(W^{'\top}W'\right)^{-1/2}W^{'\top}V_k V_k^\top W'\left(W^{'\top}W'\right)^{-1/2}\right)\right]\notag\\
&= \E\left[\Tr\left(\left(W^{'\top}V_k V_k^\top W'\right)\left(W^{'\top}W'\right)^{-1}\right)\right]\label{eq:pp0}.
\end{align}
By definition of $W'$, we have
\begin{align*}
  W^{'\top}V_k V_k^\top W'&= \left((I+\eta A)W+\eta N\right)^\top V_k V_k^\top \left((I+\eta A)W+\eta N\right)\\
  &= B_1+Z_1,
\end{align*}
where we define
\begin{align*}
B_1 &= W^\top(I+\eta A)V_k V_k^\top (I+\eta A) W+\eta^2 N^\top V_k V_k^\top N\\
Z_1 &= \eta N^\top V_k V_k^\top (I+\eta A)W + \eta W^\top(I+\eta A)V_kV_k^\top N.
\end{align*}
Also, we have
\begin{align*}
  W^{'\top} W'&= \left((I+\eta A)W+\eta N\right)^\top  \left((I+\eta A)W+\eta N\right)\\
  &= B_2+Z_2,
\end{align*}
where
\begin{align*}
  B_2 &= W^\top(I+\eta A) (I+\eta A) W+\eta^2 N^\top N\\
  Z_2 &= \eta N^\top (I+\eta A)W + \eta W^\top(I+\eta A)N.
\end{align*}
With these definitions, we can rewrite \eqref{eq:pp0} as
$\E\left[\Tr((B_1+Z_1)(B_2+Z_2)^{-1})\right]$. We now wish to remove
$Z_1,Z_2$, by applying \lemref{lem:taylor}. To do so, we check the lemma's
conditions:
\begin{itemize}
  \item \emph{$Z_1,Z_2$ are zero mean}: This holds since they are linear in
      $N$, and $N$ is assumed to be zero-mean.
  \item \emph{$B_2+\nu Z_2 \succeq \frac{3}{8}I$ for all $\nu\in [0,1]$}:
      Recalling the definition of $B_2,Z_2$, and the facts that $A\succeq
      0$, $N^\top N\succeq 0$ (by construction), and $W^\top W=I$, we have
      that $B_2\succeq I$. Moreover, the spectral norm of $Z_2$ is at most
      \[
      2\eta\norm{N^\top (I+\eta A)W}_{sp} \leq 2\eta\norm{N}_{sp}\norm{I+\eta A}_{sp}\norm{W}_{sp}
      \leq 2\eta \sigma^{sp}_N(1+\eta) \leq 2\eta\sigma^{F}_N(1+\eta),
      \]
      which by the assumption on $\eta$ is at most
      $2\frac{1}{4}\left(1+\frac{1}{4}\right)= \frac{5}{8}$. This implies
      that the smallest singular value of $B_2+\nu Z_2$ is at least
      $1-\nu(5/8) \geq 3/8$.
  \item \emph{$\max\{\norm{Z_1}_F,\norm{Z_2}_F\}\leq \frac{5}{2}\eta
      \sigma^F_N$}: By definition of $Z_1,Z_2$, and using
      \lemref{lem:frsp}, the Frobenius norm of these two matrices is at
      most
      \[
      2\eta\norm{N}_F\norm{(I+\eta A)}_{sp}\norm{W}_{sp} \leq 2\eta\sigma^F_N(1+\eta),
      \]
      which by the assumption on $\eta$ is at most
  $2\eta\sigma^F_N\left(1+\frac{1}{4}\right)=\frac{5}{2}\eta\sigma^F_N$.
  \item $\norm{B_1+\eta Z_1}_{sp} \leq
      \left(\frac{1}{4}\sigma^{sp}_N+2\right)^2$: Using the definition of
      $B_1,Z_1$ and the assumption $\eta\leq \frac{1}{4}$,
      \begin{align*}
        \norm{B_1+\eta Z_1}_{sp}&\leq \norm{B_1}_{sp}+\eta\norm{Z_1}_{sp}\\
        &\leq (1+\eta)^2+\eta^2(\sigma^{sp}_N)^2+2\eta\sigma^{sp}_N(1+\eta)\\
        &\leq \left(\frac{5}{4}\right)^2+\frac{1}{16}(\sigma^{sp}_N)^2+\frac{5}{8}\sigma^{sp}_N\\
        &< \left(\frac{1}{4}\sigma^{sp}_N+2\right)^2.
      \end{align*}
\end{itemize}
  Applying \lemref{lem:taylor} and plugging back to \eqref{eq:pp0}, we get
  \begin{align}
    \E\left[\norm{V_k^\top W''}_F^2\right] &\geq \E\left[\Tr((B_1+Z_1)(B_2+Z_2)^{-1})\right]\notag\\
    &\geq \Tr\left(B_1 B_2^{-1}\right)- \frac{400}{9}(\eta\sigma^F_N)^2\left(1+\frac{8}{3}\left(\frac{1}{4}\sigma^{sp}_N+2\right)^2\right)\label{eq:pp1}.
  \end{align}
 We now turn to lower bound $\Tr\left(B_1 B_2^{-1}\right)$, by first re-writing
 $B_1,B_2$ in a different form. For $i=1,\ldots,d$, let
    \[
    U_i = W^\top \bv_i \bv_i^\top W,
    \]
    where $\bv_i$ is the eigenvector of $A$ corresponding to the eigenvalue $s_i$. Note that each $U_i$ is positive semidefinite, and $\sum_{i=1}^{d}U_i =
W^\top W = I$. We have
\begin{align}
B_1 &= W^\top(I+\eta A)V_k V_k^\top (I+\eta A) W+\eta^2 N^\top V_k V_k^\top N\notag\\
    &= W^\top\left((I+\eta A)V_k\right)\left((I+\eta A)V_k\right)^\top W+\eta^2 N^\top V_k V_k^\top N\notag\\
    &= \sum_{i=1}^{k}(1+\eta s_i)^2 W^\top \bv_i \bv_i^\top W+\eta^2 N^\top V_k V_k^\top N\notag\\
    &= \sum_{i=1}^{k}(1+\eta s_i)^2 U_i+\eta^2 N^\top V_k V_k^\top N.\label{eq:b1part}
\end{align}
Similarly,
\begin{align}
  B_2 &= W^\top(I+\eta A) (I+\eta A) W +\eta^2 N^\top N\notag\\
  &= \sum_{i=1}^{d}(1+\eta s_i)^2 W^\top \bv_i \bv_i^\top W+\eta^2 N^\top N\notag\\
  &= \sum_{i=1}^{d}(1+\eta s_i)^2 U_i+\eta^2 N^\top N.\label{eq:b2part}
\end{align}
Plugging \eqref{eq:b1part} and \eqref{eq:b2part} back into \eqref{eq:pp1}, we
get
\begin{align}
\E\left[\norm{V_k^\top W''}_F^2\right]&\geq
\Tr\left(\left(\sum_{i=1}^{k}(1+\eta s_1)^2U_i+\eta^2 N^\top V_k V_k^\top N\right)
\left(\sum_{i=1}^{d}(1+\eta s_i)^2U_i+\eta^2 N^\top N\right)^{-1}\right)\notag\\
&~~~~~- \frac{400}{9}(\eta\sigma^F_N)^2\left(1+\frac{8}{3}\left(\frac{1}{4}\sigma^{sp}_N+2\right)^2\right).
\label{eq:pp2}
\end{align}

Recalling that $s_1\geq s_2\geq\ldots \geq s_k$ and letting $\alpha=(1+\eta
s_k)^2,\beta=(1+\eta s_1)^2$, the trace term can be lower bounded by
\begin{align*}
\min_{x_1,\ldots,x_k\in [\alpha,\beta]}
\Tr\left(\left(\sum_{i=1}^{k}x_iU_i+\eta^2 N^\top V_k V_k^\top
N\right) \left(\sum_{i=1}^{k}x_iU_i+\sum_{i=k+1}^{d}(1+\eta s_i)^2U_i+\eta^2 N^\top
N\right)^{-1}\right).
\end{align*}
Applying \lemref{lem:trexp} (noting that as required by the lemma, $
\sum_{i=k+1}^{d}(1+\eta s_i)^2U_i+\eta^2 N^\top N-\eta^2 N^\top V_k V_k^\top
N = \sum_{i=k+1}^{d}(1+\eta s_i)^2U_i+\eta^2
N^\top\left(I-V_kV_k^\top\right)N\succeq 0$), we can lower bound the above by
\[
\Tr\left(\left((1+\eta s_k)^2\sum_{i=1}^{k}U_i+\eta^2 N^\top V_k V_k^\top
N\right) \left((1+\eta s_k)^2\sum_{i=1}^{k}U_i+\sum_{i=k+1}^{d}(1+\eta s_i)^2U_i+\eta^2 N^\top
N\right)^{-1}\right).
\]
Using \lemref{lem:zc}, this can be lower bounded by
\begin{align*}
&\Tr\left(\left((1+\eta s_k)^2\sum_{i=1}^{k}U_i\right) \left((1+\eta s_k)^2\sum_{i=1}^{k}U_i+\sum_{i=k+1}^{d}(1+\eta s_i)^2U_i+\eta^2 N^\top
N\right)^{-1}\right)\\
&=\Tr\left(\left(\sum_{i=1}^{k}U_i\right) \left(\sum_{i=1}^{k}U_i+\sum_{i=k+1}^{d}\left(\frac{1+\eta s_i}{1+\eta s_k}\right)^2U_i+\left(\frac{\eta}{1+\eta s_k}\right)^2 N^\top
N\right)^{-1}\right)\\
\end{align*}
Applying \lemref{lem:trinv}, this is at least
\[
\Tr\left(\left(\sum_{i=1}^{k}U_i\right) \left(2I-\sum_{i=1}^{k}U_i-\sum_{i=k+1}^{d}\left(\frac{1+\eta s_i}{1+\eta s_k}\right)^2U_i-\left(\frac{\eta}{1+\eta s_k}\right)^2 N^\top
N\right)\right).
\]
Recalling that $I=\sum_{i=1}^{d}U_i=\sum_{i=1}^{k}U_i+\sum_{i=k+1}^{d}U_i$,
this can be simplified to
\begin{equation}\label{eq:pp3}
\Tr\left(\left(\sum_{i=1}^{k}U_i\right) \left(\sum_{i=1}^{k}U_i+\sum_{i=k+1}^{d}\left(2-\left(\frac{1+\eta s_i}{1+\eta s_k}\right)^2\right)U_i-\left(\frac{\eta}{1+\eta s_k}\right)^2 N^\top
N\right)\right).
\end{equation}
Since $U_i\succeq 0$, then using \lemref{lem:trinv}, we can lower bound the
expression above by shrinking each of the $\left(2-\left(\frac{1+\eta
s_i}{1+\eta s_k}\right)^2\right)$ terms. In particular, since $s_i\leq
s_k-\lambda$ for each $i\geq k+1$,
\[
2-\left(\frac{1+\eta s_i}{1+\eta s_k}\right)^2~\geq~
2-\frac{1+\eta s_i}{1+\eta s_k} ~\geq~
2-\frac{1+\eta(s_k-\lambda)}{1+\eta s_k}
~=~ 1+\frac{\eta\lambda}{1+\eta s_k},
\]
which by the assumption that $\eta\leq 1/4$ and $s_k\leq s_1\leq 1$, is at
least $1+\frac{4}{5}\eta \lambda$. Plugging this back into \eqref{eq:pp3},
and recalling that $\sum_{i=1}^{d}U_i=I$, we get the lower bound
\begin{align*}
&\Tr\left(\left(\sum_{i=1}^{k}U_i\right) \left(\sum_{i=1}^{k}U_i+\sum_{i=k+1}^{d}\left(1+\frac{4}{5}\eta\lambda\right)U_i-\left(\frac{\eta}{1+\eta s_k}\right)^2 N^\top
N\right)\right)\\
&= \Tr\left(\left(\sum_{i=1}^{k}U_i\right) \left(I+\frac{4}{5}\eta\lambda\left(I-\sum_{i=1}^{k}U_i\right)-\left(\frac{\eta}{1+\eta s_k}\right)^2 N^\top
N\right)\right).
\end{align*}
Again using \lemref{lem:zc}, this is at least
\begin{align*}
   &\Tr\left(\left(\sum_{i=1}^{k}U_i\right) \left(I+\frac{4}{5}\eta\lambda\left(I-\sum_{i=1}^{k}U_i\right)\right)\right)
   -\left(\frac{\eta}{1+\eta s_k}\right)^2 \Tr\left(\left(\sum_{i=1}^{k}U_i\right)N^\top N\right)\\
   &\geq\Tr\left(\left(\sum_{i=1}^{k}U_i\right) \left(I+\frac{4}{5}\eta\lambda\left(I-\sum_{i=1}^{k}U_i\right)\right)\right)
   -\left(\frac{\eta}{1+\eta s_k}\right)^2 \Tr\left(N^\top N\right)\\
   &\geq\Tr\left(\left(\sum_{i=1}^{k}U_i\right) \left(I+\frac{4}{5}\eta\lambda\left(I-\sum_{i=1}^{k}U_i\right)\right)\right)
   -\eta^2\left(\sigma^F_N\right)^2.
\end{align*}
Recall that this is a lower bound on the trace term in \eqref{eq:pp2}.
Plugging it back and slightly simplifying, we get
\[
\E\left[\norm{V_k^\top W''}_F^2\right]\geq
\Tr\left(\left(\sum_{i=1}^{k}U_i\right) \left(I+\frac{4}{5}\eta\lambda\left(I-\sum_{i=1}^{k}U_i\right)\right)\right)-\eta^2 r_N,
\]
where
\[
r_N = 46~(\sigma^F_N)^2\left(1+\frac{8}{3}\left(\frac{1}{4}\sigma^{sp}_N+2\right)^2\right).
\]
The trace term above can be re-written (using the definition of $U_i$ and the
fact that $\Tr(B^\top B)=\norm{B}_F^2$) as
\begin{align*}
  &\Tr\left(\left(W^\top\sum_{i=1}^{k}\bv_i\bv_i^\top W\right) \left(I+\frac{4}{5}\eta\lambda\left(I-W^\top\sum_{i=1}^{k}\bv_i\bv_i^\top W\right)\right)\right)\\
  &=\left(1+\frac{4}{5}\eta\lambda\right)\Tr\left(W^\top V_k V_k^\top W\right)-
  \frac{4}{5}\eta\lambda\Tr\left(\left(W^\top V_k V_k^\top W\right)\left(W^\top V_k V_k^\top W\right)\right)\\
  &= \left(1+\frac{4}{5}\eta\lambda\right)\norm{V_k^\top W}_F^2-
  \frac{4}{5}\eta\lambda\norm{W^\top V_k V_k^\top W}_F^2\\
  &= \norm{V_k^\top W}_F^2\left(1+\frac{4}{5}\eta\lambda\left(1-\frac{\norm{W^\top V_k V_k^\top W}_F^2}{\norm{V_k^\top W}_F^2}\right)\right).
\end{align*}
Applying \lemref{lem:fratio}, and letting $\delta$ denote the minimal
singular value of $V_k^\top W$, this is lower bounded by
\[
\norm{V_k^\top
W}_F^2\left(1+\frac{4}{5}\eta\lambda\max\left\{1-\norm{V_k^\top W}_F^2~,~\frac{\delta^2}{k}\left(k-\norm{V_k^\top
W}_F^2\right)\right\}\right).
\]
Overall, we get that
\begin{equation}\label{eq:ppi}
\E\left[\norm{V_k^\top W''}_F^2\right] \geq \norm{V_k^\top
W}_F^2\left(1+\frac{4}{5}\eta\lambda\max\left\{1-\norm{V_k^\top W}_F^2~,~\frac{\delta^2}{k}\left(k-\norm{V_k^\top
W}_F^2\right)\right\}\right)-\eta^2 r_N.
\end{equation}
We now consider two options:
\begin{itemize}
  \item Taking the first argument of the max term in \eqref{eq:ppi}, we get
      \[
      \E\left[\norm{V_k^\top W''}_F^2\right] \geq \norm{V_k^\top W}_F^2\left(1+\frac{4}{5}\eta\lambda\left(1-\norm{V_k^\top W}_F^2\right)\right)-\eta^2 r_N.
      \]
      Subtracting $1$ from both sides and simplifying, we get
      \[
      \E\left[1-\norm{V_k^\top W''}_F^2\right] \leq \left(1-\frac{4}{5}\eta\lambda\norm{V_k^\top W}_F^2\right)\left(1-\norm{V_k^\top W}_F^2\right)+\eta^2 r_N.
      \]
  \item Suppose that $\norm{V_k^\top W}_F^2\geq k-\frac{1}{2}$. Taking the
      second argument of the max term in \eqref{eq:ppi}, we get
    \[
    \E\left[\norm{V_k^\top W''}_F^2\right] \geq \norm{V_k^\top
W}_F^2\left(1+\frac{4\eta\lambda\delta^2}{5k}\left(k-\norm{V_k^\top
W}_F^2\right)\right)-\eta^2 r_N.
    \]
    Subtracting both sides from $k$, , we get
\begin{align*}
  \E\left[k-\norm{V_k^\top W''}_F^2\right]&\leq
  \left(k-\norm{V_k^\top W}_F^2\right)-\frac{4\eta\lambda\delta^2}{5k}\norm{V_k^\top W}_F^2\left(k-\norm{V_k^\top W}_F^2\right)+\eta^2r_N\\
  &= \left(k-\norm{V_k^\top W}_F^2\right)\left(1-\frac{4\eta\lambda\delta^2}{5k}\norm{V_k^\top W}_F^2\right)+\eta^2r_N\\
  &\leq \left(k-\norm{V_k^\top W}_F^2\right)\left(1-\frac{4\eta\lambda\delta^2}{5k}\left(k-\frac{1}{2}\right)\right)+\eta^2r_N\\
\end{align*}
Since $k\geq 1$, we can lower bound the $\left(k-\frac{1}{2}\right)$ term
by $\frac{k}{2}$. Moreover, the condition $k-\norm{V_k^\top W}_F^2\leq
\frac{1}{2}$ implies that the singular values $\sigma_1,\ldots,\sigma_k$ of
$V_k^\top W$ satisfy $k-\sum_{i=1}^{k}\sigma_i^2\leq \frac{1}{2}$. But each
$\sigma_i$ is in $[0,1]$ (as $V_k,W$ have orthonormal columns), so no
$\sigma_i$ can be less than $\frac{1}{2}$. This implies that $\delta\geq
\frac{1}{2}$. Plugging the lower bounds $k-\frac{1}{2}\geq \frac{k}{2}$ and
$\delta\geq \frac{1}{2}$ into the above, we get
\[
  \E\left[k-\norm{V_k^\top W''}_F^2\right]\leq
\left(k-\norm{V_k^\top
W}_F^2\right)\left(1-\frac{1}{10}\eta\lambda\right)+\eta^2r_N.
\]
\end{itemize}
\end{proof}

\begin{lemma}\label{lem:recursealg}
    Let $A,W_t$ be as defined in Algorithm \ref{alg:algblock}, and suppose that
    $\eta \in \left[0,\frac{1}{23\sqrt{k}}\right]$. Then the following holds for some positive numerical
    constants
    $c_1,c_2,c_3$:
    \begin{itemize}
        \item $ \E\left[1-\norm{V_k^\top W''}_F^2\right] \leq
            \left(1-c_1\eta\lambda\norm{V_k^\top
            W}_F^2\right)\left(1-\norm{V_k^\top W}_F^2\right)+c_2k\eta^2$
      \item If $\norm{V_k^\top W_t}_F^2\geq k-\frac{1}{2}$, then
        \[
      \E\left[k-\norm{V_k^\top W_{t+1}}_F^2\right]~\leq~
        \left(k-\norm{V_k^\top
W_t}_F^2\right)\left(1-c_1\eta\left(\lambda-c_2\eta\right)\right)\\
~+~c_3\eta^2(k-\norm{V_k^\top \tilde{W}_{s-1}}_F^2).
        \]
    \end{itemize}
  In the above, the expectation is over the random draw of the index $i_t$,
  conditioned on $W_t$ and $\tilde{W}_{s-1}$.
\end{lemma}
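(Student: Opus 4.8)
The plan is to recognize a single step of the inner loop of Algorithm~\ref{alg:algblock} as an instance of the abstract update studied in \lemref{lem:recur}, and then merely bound the two noise parameters $\sigma^F_N$ and $\sigma^{sp}_N$ (hence $r_N$) that appear there. After the rescaling described at the beginning of this subsection we may assume $\max_i\norm{\bx_i}^2\le 1$, so that $A\preceq I$. Write $W=W_t$, let $B$ denote the alignment matrix (the orthogonal $k\times k$ matrix minimizing $\norm{W_t-\tilde W_{s-1}B}_F^2$), and recall $\tilde U=A\tilde W_{s-1}$. Then the inner-loop update rearranges as
\[
W'_{t+1}\;=\;(I+\eta A)W+\eta\,\bigl(\bx_{i_{t+1}}\bx_{i_{t+1}}^\top-A\bigr)\bigl(W-\tilde W_{s-1}B\bigr)\;=\;(I+\eta A)W+\eta N,
\]
with $N:=\bigl(\bx_{i_{t+1}}\bx_{i_{t+1}}^\top-A\bigr)\bigl(W-\tilde W_{s-1}B\bigr)$. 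Conditioned on $W_t,\tilde W_{s-1}$ we have $\E[\bx_{i_{t+1}}\bx_{i_{t+1}}^\top]=A$, so $N$ is zero-mean, exactly as \lemref{lem:recur} requires, and $W_{t+1}=W'_{t+1}\bigl(W^{'\top}_{t+1}W'_{t+1}\bigr)^{-1/2}$ is the matrix denoted $W''$ there. It remains only to produce valid $\sigma^{sp}_N,\sigma^F_N$ and translate the conclusion.

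For the spectral bound, both $\bx_{i_{t+1}}\bx_{i_{t+1}}^\top$ and $A$ are positive semidefinite with spectral norm at most $1$, so $\norm{\bx_{i_{t+1}}\bx_{i_{t+1}}^\top-A}_{sp}\le 1$, while $\norm{W-\tilde W_{s-1}B}_{sp}\le\norm{W}_{sp}+\norm{\tilde W_{s-1}B}_{sp}=2$ since $W,\tilde W_{s-1},B$ all have orthonormal columns; hence $\sigma^{sp}_N=2$ works. For the Frobenius bound, $\norm{N}_F\le\norm{\bx_{i_{t+1}}\bx_{i_{t+1}}^\top-A}_{sp}\norm{W-\tilde W_{s-1}B}_F\le\norm{W-\tilde W_{s-1}B}_F$. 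Since $B$ is the \emph{minimizing} transformation, $\tilde W_{s-1}B$ is the nearest orthonormal-column matrix to $W$ inside the column space of $\tilde W_{s-1}$, so \lemref{lem:wwt} (with $C=W$, $D=\tilde W_{s-1}$) gives $\norm{W-\tilde W_{s-1}B}_F^2\le 2\bigl(k-\norm{W^\top\tilde W_{s-1}}_F^2\bigr)$. To connect this to the potential function I use the identity $k-\norm{C^\top D}_F^2=\frac12\norm{CC^\top-DD^\top}_F^2$, valid for any $d\times k$ matrices $C,D$ with orthonormal columns (expand the right-hand side and use $\norm{CC^\top}_F^2=\norm{DD^\top}_F^2=k$), together with the Frobenius triangle inequality applied to $CC^\top-DD^\top=(CC^\top-V_kV_k^\top)+(V_kV_k^\top-DD^\top)$ and $(a+b)^2\le 2a^2+2b^2$; this yields
\[
k-\norm{W^\top\tilde W_{s-1}}_F^2\;\le\;2\bigl(k-\norm{V_k^\top W}_F^2\bigr)+2\bigl(k-\norm{V_k^\top\tilde W_{s-1}}_F^2\bigr).
\]
Combining, $\bigl(\sigma^F_N\bigr)^2\le 4\bigl(k-\norm{V_k^\top W}_F^2\bigr)+4\bigl(k-\norm{V_k^\top\tilde W_{s-1}}_F^2\bigr)$, and also (since $\norm{W-\tilde W_{s-1}B}_F^2=2k-2\Tr(W^\top\tilde W_{s-1}B)\le 2k$ as $\Tr(W^\top\tilde W_{s-1}B)\ge 0$) the crude bound $\bigl(\sigma^F_N\bigr)^2\le 2k$.

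Now feed these into \lemref{lem:recur}. First, $\sigma^F_N\le\sqrt{2k}$ and $k\ge 1$ give $4\max\{1,\sigma^F_N\}\le 4\sqrt{2k}<23\sqrt k$, so the hypothesis $\eta\le\frac{1}{23\sqrt k}$ implies $\eta\in\bigl[0,\frac{1}{4\max\{1,\sigma^F_N\}}\bigr]$ as the lemma requires. Since $\sigma^{sp}_N\le 2$, the definition of $r_N$ gives $r_N\le C_0\bigl(\sigma^F_N\bigr)^2$ for an absolute constant $C_0$. Plugging the crude bound into the first conclusion of \lemref{lem:recur} yields $\eta^2 r_N\le 2C_0 k\eta^2$, which is the first bullet of the present lemma (with $c_2=2C_0$ and $c_1=\frac45$). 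For the second bullet, when $\norm{V_k^\top W_t}_F^2\ge k-\frac12$ the second conclusion of \lemref{lem:recur} reads $\E[k-\norm{V_k^\top W_{t+1}}_F^2]\le\bigl(k-\norm{V_k^\top W_t}_F^2\bigr)\bigl(1-\frac1{10}\eta\lambda\bigr)+\eta^2 r_N$; substituting the \emph{refined} bound on $\bigl(\sigma^F_N\bigr)^2$ gives $\eta^2 r_N\le 4C_0\eta^2\bigl(k-\norm{V_k^\top W_t}_F^2\bigr)+4C_0\eta^2\bigl(k-\norm{V_k^\top\tilde W_{s-1}}_F^2\bigr)$. Absorbing the first of these two terms into the contraction coefficient turns $1-\frac1{10}\eta\lambda+4C_0\eta^2$ into $1-c_1\eta(\lambda-c_2\eta)$ with $c_1=\frac1{10}$, $c_2=40C_0$, and the remaining term is exactly $c_3\eta^2\bigl(k-\norm{V_k^\top\tilde W_{s-1}}_F^2\bigr)$ with $c_3=4C_0$, which is the second bullet.

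The only step requiring genuine care is bounding $\norm{N}_F$ — the size of the stochastic correction — by the two potential terms $k-\norm{V_k^\top W_t}_F^2$ and $k-\norm{V_k^\top\tilde W_{s-1}}_F^2$. As the text stresses, $W_t$ and $\tilde W_{s-1}$ need not approach one another, so $\norm{W_t-\tilde W_{s-1}}_F$ need not be small; what does become small is the gap between their column spaces, and the alignment matrix $B$ (via \lemref{lem:wwt}) is precisely the device that converts this column-space closeness into the needed Frobenius control of $\norm{W_t-\tilde W_{s-1}B}_F$. Decoupling $k-\norm{W_t^\top\tilde W_{s-1}}_F^2$ into the two target terms is then the short projection-matrix triangle inequality displayed above; everything else is routine tracking of absolute constants and of the step-size constraint.
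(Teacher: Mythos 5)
Your proposal is correct and follows the same overall architecture as the paper: identify the inner-loop update as the abstract update $W'=(I+\eta A)W+\eta N$ of \lemref{lem:recur} with $N=(\bx_{i_t}\bx_{i_t}^\top-A)(W_t-\tilde W_{s-1}B_t)$, produce bounds $\sigma^{sp}_N$ and $\sigma^F_N$, verify the step-size restriction, and push the resulting $r_N$ through \lemref{lem:recur}'s two conclusions. The one place you genuinely diverge from the paper is the key Frobenius-norm decoupling. The paper bounds $\norm{W_t-\tilde W_{s-1}B_t}_F^2$ by introducing \emph{two} auxiliary alignment matrices $V_{W_t}$ (nearest orthonormal matrix to $W_t$ in the column space of $V_k$) and $\tilde W_V$ (nearest such matrix to $V_{W_t}$ in the column space of $\tilde W_{s-1}$), chaining through $\norm{W_t-\tilde W_{s-1}B_t}_F^2\le\norm{W_t-\tilde W_V}_F^2\le 2\norm{W_t-V_{W_t}}_F^2+2\norm{\tilde W_V-V_{W_t}}_F^2$, applying \lemref{lem:wwt} to each term, and then separately arguing $\norm{V_{W_t}^\top\tilde W_{s-1}}_F^2=\norm{V_k^\top\tilde W_{s-1}}_F^2$ via the orthogonality of the hidden $k\times k$ factor. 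You instead apply \lemref{lem:wwt} \emph{once} to get $\norm{W_t-\tilde W_{s-1}B_t}_F^2\le 2(k-\norm{W_t^\top\tilde W_{s-1}}_F^2)$, and then decouple via the projection-matrix identity $k-\norm{C^\top D}_F^2=\tfrac12\norm{CC^\top-DD^\top}_F^2$ plus the Frobenius triangle inequality through $V_kV_k^\top$. This is cleaner: it eliminates the auxiliary optimal-alignment matrices, turns the chamfer-type argument into a one-line metric-space triangle inequality on orthogonal projectors, and makes the structure of the estimate more transparent. A second, minor improvement: you note $\norm{\bx_{i_t}\bx_{i_t}^\top-A}_{sp}\le 1$ (difference of two PSD matrices each with spectral norm $\le 1$) rather than the paper's looser $\le 2$, which shaves the constants $\sigma^{sp}_N$ and $(\sigma^F_N)^2$ by factors of $2$ and $4$ respectively; this costs nothing and buys nothing beyond tidier numbers, since the constants are absorbed into $c_1,c_2,c_3$ anyway. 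All other steps — the verification that $4\max\{1,\sigma^F_N\}\le 4\sqrt{2k}<23\sqrt k$, the use of the crude bound $(\sigma^F_N)^2\le 2k$ for the first bullet and the refined bound for the second, and the rearrangement of $1-\tfrac1{10}\eta\lambda+O(\eta^2)$ into the form $1-c_1\eta(\lambda-c_2\eta)$ — match the paper's reasoning directly.
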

\begin{proof}
To apply \lemref{lem:recur}, we need to compute upper bounds $\sigma^F_N$ and
$\sigma^{sp}_N$ on the Frobenius and spectral norms of $N$, which in our case
equals $(\bx_{i_t}\bx_{i_t}^\top-A)(W_t-\tilde{W}_{s-1}B_t)$. Since
$\norm{A}_{sp},\norm{\bx_{i_t}\bx_{i_t}^\top}_{sp}\leq 1$, and
$W_t,\tilde{W}_{s-1},B_t$ have orthonormal columns, the spectral norm of $N$
is at most
\[
\norm{(\bx_{i_t}\bx_{i_t}^\top-A)(W_t-\tilde{W}_{s-1}B_t)}_{sp}\leq
\left(\norm{\bx_{i_t}\bx_{i_t}^\top}_{sp}+\norm{A}_{sp}\right)\left(\norm{W_t}_{sp}+\norm{\tilde{W}_{s-1}}_{sp}\norm{B_t}_{sp}\right)
\leq 4,
\]
so we may take $\sigma^{sp}_N=4$. As to the Frobenius norm, using
\lemref{lem:frsp} and a similar calculation, we have
\[
\norm{N}_{F}^2\leq 4\norm{W_t-\tilde{W}_{s-1}B_{t}}_F^2.
\]
To upper bound this, define
\[
V_{W_t} = \arg\min_{V_k B: (V_k B)^\top(V_k B)=I}\norm{W_t-V_k B}_F^2
\]
to be the nearest orthonormal-columns matrix to $W_t$ in the column space of
$V_k$, and
\[
\tilde{W}_{V} = \arg\min_{\tilde{W}_{s-1} B: (\tilde{W}_{s-1} B)^\top(\tilde{W}_{s-1} B)=I}\norm{V_{W_t}-\tilde{W}_{s-1} B}_F^2
\]
to be the nearest orthonormal-columns matrix to $V_{W_t}$ in the column space
of $\tilde{W}_{s-1}$. Also, recall that by definition,
\[
\tilde{W}_{s-1}B_{t} = \arg\min_{\tilde{W}_{s-1}B:(\tilde{W}_{s-1} B)^\top(\tilde{W}_{s-1} B)=I}\norm{W_t-\tilde{W}_{s-1} B}_F^2
\]
is the nearest orthonormal-columns matrix to $W_t$ in the column space of
$\tilde{W}_{s-1}$. Therefore, we must have
$\norm{W_t-\tilde{W}_{s-1}B_{t}}_F^2\leq \norm{W_t-\tilde{W}_{V}}_F^2$. Using
this and \lemref{lem:wwt}, we have
\begin{align*}
  \norm{W_t-\tilde{W}_{s-1}B_{t}}_F^2 &\leq \norm{W_t-\tilde{W}_{V}}_F^2\\
  &=\norm{(W_t-V_{W_t})-(\tilde{W}_{V}-V_{W_t})}_F^2\\
  &\leq 2\norm{W_t-V_{W_t}}_F^2+2\norm{\tilde{W}_{V}-V_{W_t}}_F^2\\
  &= 4\left(k-\norm{V_{k}^\top W_t}_F^2\right)+4\left(k-\norm{V_{W_t}^\top \tilde{W}_{s-1}}_F^2\right).
\end{align*}
By definition of $V_{W_t}$, we have $V_{W_t}=V_k B$ where $B^\top B=B^\top
V_k^\top V_k B = (V_k B)^\top (V_k B)=I$. Therefore $B$ is an orthogonal
$k\times k$ matrix, and $\norm{V_{W_t}^\top \tilde{W}_{s-1}}_F^2=\norm{B^\top
V_{k}^\top \tilde{W}_{s-1}}_F^2=\norm{V_k^\top \tilde{W}_{s-1}}_F^2$, so the
above equals $4(k-\norm{V_k^\top W_t}_F^2)+4(k-\norm{V_k^\top
\tilde{W}_{s-1}}_F^2)$. Overall, we get that the squared Frobenius norm of
$N$ can be upper bounded by
\[
(\sigma^F_N)^2 = 16\left((k-\norm{V_k^\top W_t}_F^2)+(k-\norm{V_k^\top \tilde{W}_{s-1}}_F^2)\right).
\]
Plugging $\sigma^{sp}_N$ and $(\sigma^F_N)^2$ into the $r_N$ as defined in
\lemref{lem:recur}, and picking any $\eta\in [0,\frac{1}{23\sqrt{k}}]$ (which
satisfies the condition in \lemref{lem:recur} that $\eta\in
\left[0,\frac{1}{4\max\{1,\sigma^F_N\}}\right]$, since
$4\max\{1,\sigma^F_n\}\leq 4\max\{1,\sqrt{16*2k}\}< 23\sqrt{k}$), we get
\begin{align*}
r_N &= 736\left((k-\norm{V_k^\top W_t}_F^2)+(k-\norm{V_k^\top \tilde{W}_{s-1}}_F^2)\right)\left(1+\frac{8}{3}\left(\frac{1}{4}4+2\right)^2\right)\\
&\leq
18400~\left((k-\norm{V_k^\top W_t}_F^2)+(k-\norm{V_k^\top \tilde{W}_{s-1}}_F^2)\right).
\end{align*}
This implies that $r_N\leq 36800 k$ always, which by application of
\lemref{lem:recur}, gives the first part of our lemma. As to the second part,
assuming $\norm{V_k^\top W_t}_F^2\geq k-\frac{1}{2}$ and applying
\lemref{lem:recur}, we get that
        \begin{align*}
      \E\left[k-\norm{V_k^\top W_{t+1}}_F^2\right]&\leq
        \left(k-\norm{V_k^\top
W_t}_F^2\right)\left(1-\frac{1}{10}\eta\lambda\right)\\&~~~~~~~+18400~\eta^2\left((k-\norm{V_k^\top W_t}_F^2)+(k-\norm{V_k^\top \tilde{W}_{s-1}}_F^2)\right)\\
    &=\left(k-\norm{V_k^\top
W_t}_F^2\right)\left(1-\eta\left(\frac{1}{10}\lambda-18400\eta\right)\right)\\&~~~~~~~+18400~\eta^2(k-\norm{V_k^\top \tilde{W}_{s-1}}_F^2).
        \end{align*}
This corresponds to the lemma statement.
\end{proof}

\subsubsection*{Part II: Solving the Recurrence Relation for a Single Epoch}

Since we focus on a single epoch, we drop the subscript from
$\tilde{W}_{s-1}$ and denote it simply as $\tilde{W}$.

Suppose that $\eta=\alpha\lambda$, where $\alpha$ is a sufficiently small
constant to be chosen later. Also, let
\[
b_t = k-\norm{V_k^\top W_t}_F^2~~~\text{and}~~~ \tilde{b} = k-\norm{V_k^\top\tilde{W}}_F^2.
\]
Then \lemref{lem:recursealg} tells us that if $\alpha$ is a sufficiently
small constant, $b_t\leq \frac{1}{2}$, then
\begin{equation}\label{eq:bform}
\E\left[b_{t+1}\middle| W_t\right] ~\leq~
\left(1-c\alpha\lambda^2\right)b_t
+c'\alpha^2\lambda^2\tilde{b}
\end{equation}
for some numerical constants $c,c'$.

\begin{lemma}\label{lem:recurse}
Let $B$ be the event that $b_t\leq \frac{1}{2}$ for all $t=0,1,2,\ldots,m$.
Then for certain positive numerical constants $c_1,c_2,c_3$, if $\alpha\leq
c_1$, then
\[
\E[b_{m}|B]\leq \left(\left(1-c_2\alpha\lambda^2\right)^{m}+c_3\alpha\right) \tilde{b},
\]
where the expectation is over the randomness in the current epoch.
\end{lemma}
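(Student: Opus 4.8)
The plan is to iterate the one-step bound \eqref{eq:bform} over the $m$ iterations of the epoch. The only subtlety is that \eqref{eq:bform} is valid only when $b_t\le\tfrac12$, whereas the event $B$ refers to the \emph{entire} trajectory $b_0,\dots,b_m$, so it cannot be fed naively into a conditional expectation given $B$. To handle this I would introduce the nested ``good so far'' events $B_t=\{b_s\le\tfrac12\text{ for all }s\le t\}$, so that $B=B_m$, $B_0\supseteq B_1\supseteq\cdots\supseteq B_m$, and — crucially — each indicator $\mathbf 1_{B_t}$ is measurable with respect to the randomness $i_1,\dots,i_t$ of the first $t$ iterations only.

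Set $a_t=\E[b_t\,\mathbf 1_{B_t}]$. Using $\mathbf 1_{B_{t+1}}\le\mathbf 1_{B_t}$, the fact that $b_{t+1}\ge 0$, and the tower property over the $t$-th iteration, together with \eqref{eq:bform} (which applies because on $B_t$ we have $b_t\le\tfrac12$), one obtains
\[
a_{t+1}\ \le\ \E\!\left[\mathbf 1_{B_t}\,\E[b_{t+1}\mid W_t]\right]\ \le\ (1-c\alpha\lambda^2)\,a_t+c'\alpha^2\lambda^2\,\tilde b\,\Pr(B_t)\ \le\ (1-c\alpha\lambda^2)\,a_t+c'\alpha^2\lambda^2\,\tilde b.
\]
This is a scalar linear recurrence, and since $W_0=\tilde W_{s-1}$ we have $a_0=b_0=\tilde b$. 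Unrolling it and summing the geometric series (using $\alpha\le c_1$ so that $1-c\alpha\lambda^2\in(0,1)$) yields
\[
\E[b_m\,\mathbf 1_B]=a_m\ \le\ (1-c\alpha\lambda^2)^m\,\tilde b+c'\alpha^2\lambda^2\,\tilde b\sum_{j=0}^{m-1}(1-c\alpha\lambda^2)^j\ \le\ \Big((1-c\alpha\lambda^2)^m+\tfrac{c'}{c}\,\alpha\Big)\tilde b,
\]
which is exactly the claimed estimate for the unconditional quantity $\E[b_m\mathbf 1_B]$.

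It remains to pass from this to $\E[b_m\mid B]=\E[b_m\mathbf 1_B]/\Pr(B)$, and this is the step I expect to be the main obstacle: one cannot afford even a constant-factor loss here, since multiplying the geometric term $(1-c_2\alpha\lambda^2)^m$ by a constant $>1$ does not fit the form of the bound for moderate $m$. So the conversion really requires $\Pr(B)=1-\Ocal(\alpha)$. The natural route is to exploit both ingredients now available: the negative drift exhibited by \eqref{eq:bform} and the bounded-increment estimate of \lemref{lem:knn}, $|b_{t+1}-b_t|=\Ocal(k\eta)=\Ocal(k\alpha\lambda)$. Concretely, I would form the drift-corrected process, stop it at the first exit of $b_t$ from $[0,\tfrac12]$, and apply a maximal (Doob/Ville) inequality; the step-size constraints in \eqref{eq:thmcondme} force $\alpha$, and hence both the increments and the accumulated quadratic variation, small enough that the exit probability is $\Ocal(\alpha)$. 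Then $\E[b_m\mid B]\le(1-\Ocal(\alpha))^{-1}a_m=(1+\Ocal(\alpha))a_m$, and since
\[
(1+\Ocal(\alpha))\Big((1-c\alpha\lambda^2)^m+\tfrac{c'}{c}\alpha\Big)\ \le\ (1-c\alpha\lambda^2)^m+\Ocal(\alpha),
\]
this gives the lemma with $c_2=c$ and a suitable $c_3$. The delicate bookkeeping is thus not in the unrolling — which is routine once the nested events $B_t$ are in place — but in showing that conditioning on the trajectory event $B$ does not degrade the recurrence bound, i.e. in the maximal-inequality control of $\Pr(B)$ via \lemref{lem:knn} and \eqref{eq:thmcondme}.
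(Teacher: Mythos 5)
Your route differs substantively from the paper's, and I think the difference is important here. The paper never forms the unconditional quantity $\E[b_m\mathbf 1_B]$ and never divides by $\Pr(B)$: it works directly with $\E[b_t\mid B]$, using that given $W_t$, the step-$(t+1)$ randomness is independent of the past, and that conditioning on $b_{t+1}\le\tfrac12$ can only \emph{decrease} $\E[b_{t+1}\mid W_t]$. This gives, on $B$,
\[
\E[b_{t+1}\mid W_t,B]\ \le\ \E[b_{t+1}\mid W_t]\ \le\ (1-c\alpha\lambda^2)b_t+c'\alpha^2\lambda^2\tilde b,
\]
and taking expectation over $W_t$ conditioned on $B$ yields a recursion for $\E[b_t\mid B]$ directly, which one unrolls exactly as you did for $a_t$. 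The whole point of this ordering of operations is precisely to avoid the step you correctly flag as the obstacle.

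Your own conversion step is a genuine gap. You need $\Pr(B)\ge 1-\Ocal(\alpha)$, but the lemma's only hypothesis is $\alpha\le c_1$. Nothing prevents, say, $\tilde b$ being close to $\tfrac12$, in which case $b_1>\tfrac12$ can occur with constant probability and $\Pr(B)$ is bounded away from $1$ by a constant; dividing by $\Pr(B)$ then multiplies the $(1-c\alpha\lambda^2)^m$ term by a constant $>1$, which, as you yourself observe, cannot be re-absorbed into the claimed form for general $m$. The maximal-inequality control of $\Pr(B)$ you sketch is real, but it needs exactly the constraints of \eqref{eq:event} / \eqref{eq:thmcondme}; in the paper that control is the content of \lemref{lem:event}, a \emph{separate} lemma, and \lemref{lem:recurse} is deliberately stated and proved without it. Absorbing the $\Pr(B)$ bound into \lemref{lem:recurse} would both strengthen its hypotheses (breaking the stated lemma) and duplicate \lemref{lem:event}, which is then combined with \lemref{lem:recurse} via Markov's inequality in \lemref{lem:combine}. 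So the indicator bookkeeping $a_t=\E[b_t\mathbf 1_{B_t}]$ is tidy and the unrolling is fine, but it does not prove the lemma as stated; to match the paper you must iterate at the level of $\E[\cdot\mid B]$ and rely on the monotonicity $\E[b_{t+1}\mid W_t,\, b_{t+1}\le\tfrac12]\le\E[b_{t+1}\mid W_t]$ rather than ever normalizing by $\Pr(B)$.
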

\begin{proof}
  Recall that $b_t$ is a deterministic function of the random variable
  $W_t$, which depends in turn on $W_{t-1}$ and the random instance chosen at round
  $t$. We assume that $W_0$ (and hence $\tilde{b}$) are fixed, and consider how $b_t$ evolves as a function of $t$. Using \eqref{eq:bform}, we have
  \begin{align*}
  \E[b_{t+1}|W_{t},B] = \E\left[b_{t+1}|W_t,b_{t+1}\leq \frac{1}{2}\right]
  ~\leq~ \E[b_{t+1}|W_t] ~\leq~ \left(1-c\alpha\lambda^2\right)b_t
~+~c'\alpha^2\lambda^2\tilde{b}.
\end{align*}
Note that the first equality holds, since conditioned on $W_t$, $b_{t+1}$ is
independent of $b_1,\ldots,b_{t}$, so the event $B$ is equivalent to just
requiring $b_{t+1}\leq 1/2$.

Taking expectation over $W_t$ (conditioned on $B$), we get that
\begin{align*}
  \E[b_{t+1}|B] ~&\leq~ \E\left[\left(1-c\alpha\lambda^2\right)b_{t}
+c'\alpha^2\lambda^2\tilde{b}\middle| B\right]\\
&=\left(1-c\alpha\lambda^2\right)\E\left[b_{t}|B\right]
+c'\alpha^2\lambda^2\tilde{b}.
\end{align*}
Unwinding the recursion, and using that $b_0=\tilde{b}$, we therefore get
that
\begin{align*}
\E[b_{m}|B]~&\leq~\left(1-c\alpha\lambda^2\right)^{m}\tilde{b}+c'\alpha^2\lambda^2\tilde{b}\sum_{i=0}^{m-1}\left(1-c\alpha\lambda^2\right)^i\\
&\leq~\left(1-c\alpha\lambda^2\right)^{m}\tilde{b}+c'\alpha^2\lambda^2\tilde{b}\sum_{i=0}^{\infty}\left(1-c\alpha\lambda^2\right)^i\\
&=~\left(1-c\alpha\lambda^2\right)^{m}\tilde{b}+c'\alpha^2\lambda^2\tilde{b}\frac{1}{c\alpha\lambda^2}\\
&=~\left(\left(1-c\alpha\lambda^2\right)^{m}+\frac{c'}{c}\alpha\right) \tilde{b}.\\
\end{align*}
as required.
\end{proof}

We now turn to prove that the event $B$ assumed in \lemref{lem:recurse}
indeed holds with high probability:
\begin{lemma}\label{lem:event}
  The following holds for certain positive numerical constants $c_1,c_2,c_3$:
  If $\alpha\leq c_1$, then for any $\beta\in (0,1)$ and $m$, if
  \begin{equation}\label{eq:event}
  \tilde{b}+c_2km\alpha^2\lambda^2 +c_3k\sqrt{m\alpha^2\lambda^2\log(1/\beta)}\leq \frac{1}{2},
  \end{equation}
  then it holds with probability at least $1-\beta$ that
  \[
  b_t~\leq~ \tilde{b}+c_2km\alpha^2\lambda^2 +c_3k\sqrt{m\alpha^2\lambda^2\log(1/\beta)}~\leq~ \frac{1}{2}
  \]
  for all $t=0,1,2,\ldots,m$.
\end{lemma}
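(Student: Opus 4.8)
The plan is to regard $b_0,b_1,\ldots,b_m$ as a stochastic process with a controlled drift and bounded increments, and to combine an Azuma-type concentration bound with a stopping-time bootstrap. I would first introduce the stopping time $\tau=\min\{t\le m:\ b_t>\tfrac12\}$ (with $\tau=m+1$ if no such $t$ exists) and work with the stopped process $b_{t\wedge\tau}$. The role of $\tau$ is exactly to make the drift estimate \eqref{eq:bform} usable: that bound is only available while $b_t\le\tfrac12$, which is precisely what holds on $\{t<\tau\}$, and the indicator $\mathbf 1[j\le\tau]=\mathbf 1[b_0\le\tfrac12,\ldots,b_{j-1}\le\tfrac12]$ is measurable with respect to $W_0,\ldots,W_{j-1}$, so it can be pulled outside a conditional expectation. (As the paper already noted for \lemref{lem:recurse}, conditioned on $W_{j-1}$ the variable $b_j$ is independent of $b_0,\ldots,b_{j-1}$, so $\E[b_j\mid\mathcal F_{j-1}]=\E[b_j\mid W_{j-1}]$.)

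Next I would write the telescoping decomposition
\[
b_{t\wedge\tau}=b_0+\sum_{j=1}^{t}\mathbf 1[j\le\tau]\Big(\big(\E[b_j\mid W_{j-1}]-b_{j-1}\big)+\big(b_j-\E[b_j\mid W_{j-1}]\big)\Big)
\]
and bound the two pieces separately. For the predictable (drift) part, on $\{j\le\tau\}$ we have $b_{j-1}\le\tfrac12$, so \eqref{eq:bform} gives $\E[b_j\mid W_{j-1}]-b_{j-1}\le -c\alpha\lambda^2 b_{j-1}+c'\alpha^2\lambda^2\tilde b\le c'\alpha^2\lambda^2\tilde b\le c'k\alpha^2\lambda^2$ (using $0\le\tilde b\le k$), so summing over $j\le t\le m$ contributes at most $c'km\alpha^2\lambda^2$, which is the $c_2km\alpha^2\lambda^2$ term in \eqref{eq:event}. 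For the martingale part $M_t=\sum_{j\le t}\mathbf 1[j\le\tau]\big(b_j-\E[b_j\mid W_{j-1}]\big)$, \lemref{lem:knn} (with $\eta=\alpha\lambda$ small, say $\eta\le\tfrac16$) shows that, conditioned on $W_{j-1}$, $b_j$ lies \emph{deterministically} within $\tfrac{12k\eta}{1-3\eta}\le 24k\alpha\lambda$ of $b_{j-1}$, so each martingale increment is a mean-zero quantity of absolute value at most $24k\alpha\lambda$. A maximal Azuma--Hoeffding inequality then yields $\Pr\big(\max_{t\le m}M_t\ge s\big)\le\exp\big(-s^2/(2m(24k\alpha\lambda)^2)\big)$; choosing $s=c_3k\sqrt{m\alpha^2\lambda^2\log(1/\beta)}$ makes this at most $\beta$, and $s$ is exactly the third term of \eqref{eq:event}.

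Combining, with probability at least $1-\beta$ we obtain, for every $t\le m$, $b_{t\wedge\tau}\le \tilde b+c_2km\alpha^2\lambda^2+c_3k\sqrt{m\alpha^2\lambda^2\log(1/\beta)}=:R$, and the hypothesis \eqref{eq:event} says $R\le\tfrac12$. The closing bootstrap step: on this event, if $\tau\le m$ then $b_\tau>\tfrac12$ by definition, yet taking $t=m$ gives $b_{m\wedge\tau}=b_\tau\le R\le\tfrac12$, a contradiction; hence $\tau>m$, so $t\wedge\tau=t$ for all $t\le m$, giving $b_t\le R\le\tfrac12$ for all $t=0,1,\ldots,m$ (and $b_0=\tilde b\le\tfrac12$ already by \eqref{eq:event}, so $\tau\ge1$ and the step is not vacuous). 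The main obstacle is exactly this circularity: the drift bound we rely on holds only while $b_t<\tfrac12$, which is what we are trying to establish; the stopped-process device plus the fact that the increment bound of \lemref{lem:knn} is a \emph{uniform} deterministic bound (so Azuma applies with no further conditioning) is what breaks it. A secondary technical point is that one genuinely needs the maximal form of Azuma -- obtained via Doob's inequality applied to the exponential supermartingale $\exp(\theta M_t)$ -- since we must control $\max_{t\le m}b_t$ rather than merely $b_m$.
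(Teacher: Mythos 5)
Your proposal is correct and follows essentially the same route as the paper: collect the drift bound from \eqref{eq:bform}, the bounded-increment bound from \lemref{lem:knn}, and invoke the maximal Hoeffding--Azuma inequality, closing with the observation that the resulting bound stays below $\tfrac12$ so the conditional drift estimate remains applicable throughout. The paper leaves the handling of the conditionality (``as long as $b_t\le\tfrac12$'') implicit; your stopping-time decomposition of $b_{t\wedge\tau}$ into an $\mathcal F_{j-1}$-predictable drift sum plus a bounded-difference martingale, followed by the contradiction argument showing $\tau>m$, is exactly the rigorous way to break that circularity, and is the argument the paper is tacitly appealing to.
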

\begin{proof}
 To prove the lemma, we analyze the stochastic process $b_0(=\tilde{b}),b_1,b_2,\ldots,b_m$, and
 use a concentration of measure argument. First, we collect the following
 facts:
 \begin{itemize}
     \item \emph{$\tilde{b}=b_0\leq \frac{1}{2}$}: This directly follows
         from the assumption stated in the lemma.
     \item \emph{As long as $b_t\leq \frac{1}{2}$, $\E\left[b_{t+1}\middle|
         W_t\right]\leq b_t+ c_2\alpha^2\lambda^2\tilde{b}$ for some
         constant $c_2$}: Supposing $\alpha$ is sufficiently small, then by
         \eqref{eq:bform},
 \begin{align*}
 \E\left[b_{t+1}\middle| W_t\right] ~&\leq~
\left(1-c\alpha\lambda^2\right)b_t
+c'\alpha^2\lambda^2\tilde{b}
~\leq~ b_t+c'\alpha^2\lambda^2 \tilde{b}.
 \end{align*}
    \item \emph{$|b_{t+1}-b_t|$ is bounded by $c'_3k\alpha\lambda$ for some
        constant $c'_3$}: Applying \lemref{lem:knn}, and assuming that
        $\alpha$ is at most some sufficiently small constant $c_1$ (e.g.
        $\alpha\leq \frac{1}{12}$, so $\eta=\alpha\lambda\leq
        \frac{1}{12}$),
\[
 |b_{t+1}-b_{t}|~=~
 \left|\norm{V_k^\top W_{t+1}}_F^2-\norm{V_k^\top W_{t}}_F^2\right|
 ~\leq~ \frac{12k\eta}{1-3\eta} \leq \frac{12k\alpha \lambda}{3/4} ~=~ 16k\alpha\lambda.
\]
 \end{itemize}
 Armed with these facts, and using the maximal version of the Hoeffding-Azuma inequality \cite{hoeffding1963probability}, it follows that with probability at least
 $1-\beta$, it holds simultaneously for all $t=1,\ldots,m$ (and for $t=0$ by assumption) that
 \[
 b_t\leq \tilde{b}+c_2 m\alpha^2\lambda^2 \tilde{b}+c_3k\sqrt{m\alpha^2\lambda^2\log(1/\beta)}
 \]
 for some constants $c_2,c_3$, as long as the expression above is less than
 $\frac{1}{2}$. If the expression is indeed less than $\frac{1}{2}$, then we
 get that $b_t\leq \frac{1}{2}$ for all $t$. Upper bounding $\tilde{b}$ by $k$ and slightly simplifying, we get the
 statement in the lemma.
\end{proof}

Combining \lemref{lem:recurse} and \lemref{lem:event}, and using Markov's
inequality, we get the following corollary:

\begin{lemma}\label{lem:combine}
Let confidence parameters $\beta,\gamma\in(0,1)$ be fixed. Suppose that
$m,\alpha$ are chosen such that $\alpha\leq c_1$ and
\[
\tilde{b}+c_2km\alpha^2\lambda^2+c_3k\sqrt{m\alpha^2\lambda^2\log(1/\beta)}\leq \frac{1}{2},
\]
where $c_1,c_2,c_3$ are certain positive numerical constants. Then with
probability at least $1-(\beta+\gamma)$, it holds that
\[
b_m \leq \frac{1}{\gamma}
\left(\left(1-c\alpha\lambda^2\right)^{m}+c'\alpha\right) \tilde{b}.
\]
for some positive numerical constants $c,c'$.
\end{lemma}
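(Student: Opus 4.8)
The plan is to chain the two preceding lemmas together with a single application of Markov's inequality followed by a union bound. First I would invoke \lemref{lem:event}: under the stated hypotheses on $\alpha$ and $m$ (namely $\alpha\le c_1$ and $\tilde b+c_2 km\alpha^2\lambda^2+c_3 k\sqrt{m\alpha^2\lambda^2\log(1/\beta)}\le\frac{1}{2}$), the event
\[
B=\left\{b_t\le \tfrac{1}{2}\ \text{for all}\ t=0,1,\ldots,m\right\}
\]
occurs with probability at least $1-\beta$. Crucially, this is exactly the event under which \lemref{lem:recurse} operates, so the two results compose directly, yielding the conditional bound
\[
\E[b_m\mid B]\ \le\ \left((1-c\alpha\lambda^2)^{m}+c'\alpha\right)\tilde b\ =:\ M,
\]
where $c,c'$ are the numerical constants named $c_2,c_3$ in \lemref{lem:recurse}.

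Next I would apply Markov's inequality to the conditional law of $b_m$ given $B$. Since $b_m=k-\norm{V_k^\top W_m}_F^2$ is a nonnegative random variable --- it lies in $[0,k]$ because $V_k$ and $W_m$ have orthonormal columns --- Markov's inequality gives $\Pr(b_m> M/\gamma\mid B)\le \gamma$. Finally, a union bound over the two failure events gives
\[
\Pr\!\left(b_m\le M/\gamma\right)\ \ge\ \Pr\!\left(\{b_m\le M/\gamma\}\cap B\right)\ =\ \Pr(B)-\Pr\!\left(\{b_m> M/\gamma\}\cap B\right)\ \ge\ (1-\beta)-\gamma\ =\ 1-(\beta+\gamma),
\]
which is precisely the claimed statement after substituting the value of $M$.

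There is no genuine obstacle here: the lemma is a bookkeeping combination of results already established. The only two points that need a moment of care are (i) verifying that the ``for all $t$'' conditioning event of \lemref{lem:recurse} coincides with the high-probability event $B$ supplied by \lemref{lem:event}, so that the conditional-expectation bound is legitimately available, and (ii) confirming the nonnegativity of $b_m$ so that Markov's inequality applies. The relabelling of the universal constants between the two lemmas is immaterial.
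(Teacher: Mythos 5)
Your proposal is correct and is exactly what the paper has in mind: the paper omits the proof, merely stating that the lemma follows "by combining Lemma~\ref{lem:recurse} and Lemma~\ref{lem:event} and using Markov's inequality," and your argument -- apply Lemma~\ref{lem:event} to get $\Pr(B)\ge 1-\beta$, apply Lemma~\ref{lem:recurse} to bound $\E[b_m\mid B]$, then Markov conditionally on $B$ plus a union bound -- fills in precisely that gap. The only tiny loose end, which you implicitly handle by relabelling, is that the constant $c_1$ in the statement should be taken as the smaller of the two $c_1$'s appearing in Lemmas~\ref{lem:recurse} and~\ref{lem:event}, since both lemmas are invoked simultaneously.
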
%

\subsubsection*{Part III: Analyzing the Entire Algorithm's Run}

Given the analysis in \lemref{lem:combine} for a single epoch, we are now
ready to prove our theorem. Let
\[
\tilde{b}_s = k-\norm{V_k^\top\tilde{W}_s}_F^2.
\]
By assumption, at the beginning of the first epoch, we have
$\tilde{b}_0=k-\norm{V_k^\top\tilde{W}_0}_F^2\leq \frac{1}{2}$. Therefore, by
\lemref{lem:combine}, for any $\beta,\gamma\in\left(0,\frac{1}{2}\right)$, if
we pick any
\begin{equation}\label{eq:condme}
\alpha\leq \min
\left\{c_1,\frac{1}{2c'}\gamma^2\right\}~~~~\text{and}~~~~
m\geq \frac{3\log(1/\gamma)}{c\alpha\lambda^2}
~~~~\text{such that}~~~~\frac{1}{2}+c_2km\alpha^2\lambda^2+c_3k\sqrt{m\alpha^2\lambda^2\log(1/\beta)}\leq \frac{1}{2},
\end{equation}
then we get with probability at least $1-(\beta+\gamma)$ that
\[
b_m~\leq~
\frac{1}{\gamma}\left(\left(1-c\alpha\lambda^2\right)^{\frac{3\log(1/\gamma)}{c\alpha\lambda^2}}
+\frac{1}{2}\gamma^2\right)\tilde{b}_0
\]
Using the inequality $(1-(1/x))^{ax}\leq \exp(-a)$, which holds for any $x>1$
and any $a$, and taking $x = 1/(c\alpha\lambda^2)$ and $a = 3\log(1/\gamma)$,
we can upper bound the above by
\begin{align*}
&\frac{1}{\gamma}\left(\exp\left(-3\log\left(\frac{1}{\gamma}\right)\right)+\frac{1}{2}\gamma^2\right)\tilde{b}_0\\
&=~ \frac{1}{\gamma}\left(\gamma^3+\frac{1}{2}\gamma^2\right)\tilde{b}_0 ~\leq~ \gamma\tilde{b}_0.
\end{align*}
Since $b_m$ equals the starting point $\tilde{b}_1$ for the next epoch, we
get that $\tilde{b}_1\leq \gamma\tilde{b}_0\leq \gamma\frac{1}{2}$. Again
applying \lemref{lem:combine}, and performing the same calculation we have
that with probability at least $1-(\beta+\gamma)$ over the next epoch,
$\tilde{b}_2~\leq~ \gamma \tilde{b}_1~\leq~ \gamma^2\tilde{b}_0$. Repeatedly
applying \lemref{lem:combine} and using a union bound, we get that after $T$
epochs, with probability at least $1-T(\beta+\gamma)$,
\[
k-\norm{V_k^\top\tilde{W}_T}_F^2~=~\tilde{b}_T ~\leq~ \gamma^T\tilde{b}_0 ~<~ \gamma^T.
\]
Therefore, for any desired accuracy parameter $\epsilon$, we simply need to
use $T=\left\lceil\frac{\log(1/\epsilon)}{\log(1/\gamma)}\right\rceil$
epochs, and get $k-\norm{V_k^\top\tilde{W}_s}_F^2\leq \epsilon$ with
probability at least
$1-T(\beta+\gamma)=1-\left\lceil\frac{\log(1/\epsilon)}{\log(1/\gamma)}\right\rceil(\beta+\gamma)$.

Using a confidence parameter $\delta$, we pick
$\beta=\gamma=\frac{\delta}{2}$, which ensures that the accuracy bound above
holds with probability at least
\[
1-\left\lceil\frac{\log(1/\epsilon)}{\log(2/\delta)}\right\rceil\delta
~\geq~
1-\left\lceil\frac{\log(1/\epsilon)}{\log(2)}\right\rceil\delta
~=~
1-\left\lceil\log_2\left(\frac{1}{\epsilon}\right)\right\rceil\delta.
\]
Substituting this choice of $\beta,\gamma$ into \eqref{eq:condme}, and
recalling that the step size $\eta$ equals $\alpha\lambda$, we get that
$k-\norm{V_k^\top\tilde{W}_T}_F^2\leq \epsilon$ with probability at least
$1-\lceil\log_2(1/\epsilon)\rceil\delta$, provided that
\[
\eta \leq c\delta^2\lambda~~~~,~~~~
m\geq \frac{c'\log(2/\delta)}{\eta \lambda}
~~~~,~~~~km\eta^2+k\sqrt{m\eta^2\log(2/\delta)}\leq c''
\]
for suitable positive constants $c,c',c''$.

To get the theorem statement, recall that the analysis we performed pertains
to data whose squared norm is bounded by $1$. By the reduction discussed at
the beginning of the proof,
 we can apply it to data with squared norm at most $r$, by replacing $\lambda$ with $\lambda/r$,
 and $\eta$ with $\eta r$, leading to the condition
\[
\eta \leq \frac{c\delta^2}{r^2}\lambda~~~~,~~~~
m\geq \frac{c'\log(2/\delta)}{\eta \lambda}
~~~~,~~~~km\eta^2r^2+rk\sqrt{m\eta^2\log(2/\delta)}\leq c''
\]
and establishing the theorem.

\subsection{Proof of \thmref{thm:burn}}\label{subsec:proofburn}

The proof relies mainly on the techniques and lemmas of
\secref{subsec:proofmain}, used to prove \thmref{thm:main}. As done in
\secref{subsec:proofmain}, we will assume without loss of generality that
$r=\max_i \norm{\bx_i}^2$ is at most $1$, and then transform the bound to a
bound for general $r$ (see the discussion at the beginning of
\subsecref{subsec:proofmainmain})

First, we extract the following result, which is essentially the first part
of \lemref{lem:recursealg} (for $k=1$):
\begin{lemma}\label{lem:recurstext}
    Let $A,\bw_t$ be as defined in Algorithm \ref{alg:algvec}, and suppose that
    $\eta \in \left[0,\frac{1}{23}\right]$. Then
    \[
      \E_{i_t}\left[1-\inner{\bv_1,\bw_{t+1}}^2\middle| \bw_t,\tilde{\bw}_{s-1}\right] \leq \left(1-c\eta\lambda\inner{\bv_1,\bw_t}^2\right)
      \left(1-\inner{\bv_1,\bw_{t}}^2\right)+c'\eta^2,
    \]
    for some positive numerical constants $c,c'$.
\end{lemma}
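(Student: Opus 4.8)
The plan is to recognize that Algorithm \ref{alg:algvec} is exactly the $k=1$ instance of Algorithm \ref{alg:algblock}, and then invoke the first bullet of \lemref{lem:recur} essentially verbatim. As in \subsecref{subsec:proofmainmain}, I first assume without loss of generality that $\max_i\norm{\bx_i}^2\leq 1$, so that $A$ has all eigenvalues in $[0,1]$ and eigengap $s_1-s_2=\lambda$; the bound for general $r$ follows at the end by the usual rescaling of $\eta$ and $\lambda$. Rewriting the stochastic update of lines 7--8 of Algorithm \ref{alg:algvec} as
\[
\bw'_{t+1} = (I+\eta A)\bw_t + \eta N~,\qquad \bw_{t+1}=\frac{1}{\norm{\bw'_{t+1}}}\bw'_{t+1}~,
\]
with $N = (\bx_{i_t}\bx_{i_t}^\top-A)(\bw_t-\tilde{\bw}_{s-1})$, we see that this matches precisely the template $W' = (I+\eta A)W+\eta N$, $W''=W'(W^{'\top}W')^{-1/2}$ of \lemref{lem:recur} with $k=1$, $W=\bw_t$, and $V_k=\bv_1$.

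Next I would verify the hypotheses of \lemref{lem:recur} on the noise term. Since $\E_{i_t}[\bx_{i_t}\bx_{i_t}^\top]=A$, the vector $N$ is zero-mean. Because $\bw_t$ and $\tilde{\bw}_{s-1}$ are unit vectors, $\norm{\bw_t-\tilde{\bw}_{s-1}}\leq 2$, and since $\norm{\bx_{i_t}\bx_{i_t}^\top}_{sp}\leq 1$ and $\norm{A}_{sp}\leq 1$, we get $\norm{N}\leq (1+1)\cdot 2 = 4$ with probability $1$. As $N$ is a vector, its Frobenius and spectral norms coincide, so we may take $\sigma^F_N=\sigma^{sp}_N=4$; note this bound is unconditional, so (unlike the second bullet of \lemref{lem:recur}) no closeness of $\bw_t$ to the optimum is required. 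With these values the constraint $\eta\in[0,\tfrac{1}{4\max\{1,\sigma^F_N\}}]=[0,\tfrac{1}{16}]$ is implied by $\eta\leq\tfrac{1}{23}$, and the constant $r_N=46(\sigma^F_N)^2\bigl(1+\tfrac{8}{3}(\tfrac14\sigma^{sp}_N+2)^2\bigr)=46\cdot 16\cdot\bigl(1+\tfrac83\cdot 3^2\bigr)$ is a fixed numerical constant.

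Finally, applying the first bullet of \lemref{lem:recur}, and using that for the unit vector $\bw_t$ one has $\norm{\bv_1^\top\bw_t}_F^2=\inner{\bv_1,\bw_t}^2$, gives
\[
\E_{i_t}\!\left[1-\inner{\bv_1,\bw_{t+1}}^2\right]\leq\left(1-\tfrac{4}{5}\eta\lambda\inner{\bv_1,\bw_t}^2\right)\left(1-\inner{\bv_1,\bw_t}^2\right)+\eta^2 r_N~,
\]
which is the claimed recurrence with $c=\tfrac45$ and $c'=r_N$; undoing the $r$-reduction completes the proof. I do not anticipate a genuine obstacle: the only points that need care are checking that the algorithm's update really does fit the abstract form of \lemref{lem:recur}, and that the noise magnitude is controlled \emph{without} appealing to any contraction of $\norm{\bw_t-\tilde{\bw}_{s-1}}$ — which is exactly why only the global, first bullet of \lemref{lem:recur} is invoked, and why the statement holds for any $\bw_t$ rather than only near an optimum.
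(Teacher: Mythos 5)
Your proposal is correct and follows essentially the same route as the paper: the paper extracts this lemma as the first bullet of Lemma~\ref{lem:recursealg} specialized to $k=1$, which is itself obtained by applying Lemma~\ref{lem:recur} with the noise bounds $\sigma^{sp}_N\le 4$ and $(\sigma^F_N)^2\le 32$; you short-circuit one step by plugging the unconditional bound $\sigma^F_N=\sigma^{sp}_N=4$ directly into Lemma~\ref{lem:recur}, which is both valid and slightly tighter. Your observations that the noise bound is unconditional (hence the constraint $\eta\le 1/23$ subsumes the $\eta\le 1/16$ requirement) and that no proximity of $\bw_t$ to the optimum is needed are exactly why the paper uses the first rather than the second bullet of Lemma~\ref{lem:recur} here.
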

Note that this bound holds regardless of what is $\tilde{\bw}_{s-1}$, and in
particular holds across different epochs of Algorithm \ref{alg:algvec}.
Therefore, it is enough to show that starting from some initial point
$\bw_0$, after sufficiently many stochastic updates as specified in line 6-10
of the algorithm (or in terms of the analysis, sufficiently many applications
of \lemref{lem:recurstext}), we end up with a point $\bw_T$ for which
$1-\inner{\bv_1,\bw_T}\leq \frac{1}{2}$, as required. Note that to simplify
the notation, we will use here a single running index
$\bw_0,\bw_1,\bw_2,\ldots,\bw_T$ (whereas in the algorithm we restarted the
indexing after every epoch).

The proof is based on martingale arguments, quite similar to the ones in
\subsecref{subsec:proofmainmain} but with slight changes. First, we let
\[
b_t = 1-\inner{\bv_1,\bw_t}^2
\]
to simplify notation. We note that $b_0=1-\inner{\bv_1,\bw_0}^2$ is assumed
fixed, whereas $b_1,b_2,\ldots$ are random variables based on the sampling
process. \lemref{lem:recursealg} tells us that if $\eta$ is sufficiently
small, and $b_t \leq 1-\xi$ for some $\xi\in (0,1)$, then
\begin{equation}\label{eq:bformburn}
\E\left[b_{t+1}\middle| b_t\right] ~\leq~
\left(1-c\eta\lambda\xi\right)b_t
+c'\eta^2.
\end{equation}
for some numerical constants $c,c'$.

\begin{lemma}\label{lem:recurseburn}
Let $B$ be the event that $b_{t}\leq 1-\xi$ for all $t=0,1,\ldots,T$. Then
for certain positive numerical constants $c_1,c_2,c_3$, if $\eta\leq
c_1\lambda$, then
\[
\E[b_{T}|B]\leq \left(\left(1-c_2\eta\lambda\xi\right)^{T}+c_3\frac{\eta}{\lambda\xi}\right).
\]
\end{lemma}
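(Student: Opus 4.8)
The plan is to mirror the argument used for \lemref{lem:recurse} in the multi-epoch analysis, the only structural difference being that the additive ``noise floor'' in the one-step recursion \eqref{eq:bformburn} is a genuine constant $c'\eta^2$, rather than a quantity proportional to $\tilde b$ that shrinks from epoch to epoch.

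First I would pass from the per-step bound \eqref{eq:bformburn} to a recursion for $\E[b_t\mid B]$. On the event $B$ we have $b_t\le 1-\xi$ for every $t$, and (taking $c_1$ small enough that $\eta\le c_1\lambda$ forces $\eta\le 1/23$ and forces $\eta$ small enough for \eqref{eq:bformburn}, which is possible since $\lambda\le\norm{A}_{sp}\le 1$ under the normalization $r\le1$) the hypotheses of \lemref{lem:recurstext} are satisfied at every step. Arguing exactly as in the proof of \lemref{lem:recurse} --- conditioned on $\bw_t$, $b_{t+1}$ is independent of $b_1,\dots,b_t$, so conditioning on $B$ amounts to conditioning on $b_{t+1}\le 1-\xi$, and conditioning a real random variable on lying below a threshold cannot increase its mean --- I obtain
\[
\E[b_{t+1}\mid \bw_t, B]\ \le\ \E[b_{t+1}\mid \bw_t]\ \le\ (1-c\eta\lambda\xi)\,b_t + c'\eta^2 .
\]
Taking expectation over $\bw_t$ conditioned on $B$ and using linearity then yields the scalar recursion $\E[b_{t+1}\mid B]\le (1-c\eta\lambda\xi)\,\E[b_t\mid B]+c'\eta^2$.

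Next I would unwind this recursion. Writing $a:=1-c\eta\lambda\xi$, which lies in $(0,1)$ provided $c_1$ is chosen small relative to $1/c$ (so that $c\eta\lambda\xi\le cc_1<1$), iterating from $t=0$ and summing the geometric series gives
\[
\E[b_T\mid B]\ \le\ a^T b_0 + c'\eta^2\sum_{i=0}^{T-1}a^i\ \le\ a^T b_0 + \frac{c'\eta^2}{c\eta\lambda\xi}\ =\ a^T b_0 + \frac{c'}{c}\cdot\frac{\eta}{\lambda\xi}.
\]
Since $b_0 = 1-\inner{\bv_1,\bw_0}^2\le 1$, this is at most $(1-c\eta\lambda\xi)^T + (c'/c)\,\eta/(\lambda\xi)$, which is precisely the claimed bound with $c_2=c$ and $c_3=c'/c$; the constant $c_1$ is simply whatever is needed for the assertions above.

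I do not expect a genuine obstacle: the computation is a routine geometric-sum unwinding, and the only delicate point is the conditioning-on-$B$ step, which is handled identically to the corresponding step in the proof of \lemref{lem:recurse}. The one conceptual remark worth making is that, unlike in the multi-epoch setting, the noise term $c'\eta^2$ does not shrink over time, so this lemma only shows that $\E[b_T\mid B]$ can be pushed below a fixed target (by taking $\eta$ small relative to $\lambda\xi$ and $T$ of order $\log(1/\gamma)/(\eta\lambda\xi)$); turning this into the statement ``$b_T\le\tfrac12$ with high probability'' then requires, in addition, a Hoeffding--Azuma bound on $\Pr(B)$, which is carried out in the steps following this lemma (paralleling \lemref{lem:event} and \lemref{lem:combine}).
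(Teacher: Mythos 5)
Your proof is correct and follows essentially the same path as the paper's: apply \eqref{eq:bformburn} on the event $B$ (using that conditioning a real random variable on lying below a threshold cannot increase its mean), take expectation over $b_t$ given $B$, unwind the geometric recursion, bound $b_0\le 1$, and sum the geometric series by its infinite extension. The closing remark about the noise floor $c'\eta^2$ not shrinking across epochs -- and hence the lemma only delivering a constant target rather than geometric decay -- is accurate and reflects exactly how \lemref{lem:recurseburn} is used downstream.
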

\begin{proof}
  Using \eqref{eq:bformburn}, we have for any $b_t$ satisfying event $B$ that
  \begin{align*}
  \E[b_{t+1}|b_t,B] = \E\left[b_{t+1}|b_t,b_{t+1}\leq 1-\xi\right]
  ~\leq~ \E[b_{t+1}|b_t] ~\leq~ \left(1-c\eta\lambda\xi\right)b_t
~+~c'\eta^2.
\end{align*}

Taking expectation over $b_t$ (conditioned on $B$), we get that
\begin{align*}
  \E[b_{t+1}|B] ~&\leq~ \E\left[\left(1-c\eta\lambda\xi\right)b_{t}
+c'\eta^2\middle| B\right]\\
&=\left(1-c\eta\lambda\xi\right)\E\left[b_{t}|B\right]
+c'\eta^2.
\end{align*}
Unwinding the recursion, we get
\begin{align*}
\E[b_{T}|B]~&\leq~\left(1-c\eta\lambda\xi\right)^{T}b_{0}+c'\eta^2\sum_{i=0}^{T-1}\left(1-c\eta\lambda\xi\right)^i\\
&\leq~\left(1-c\eta\lambda\xi\right)^{T}+c'\eta^2\sum_{i=0}^{\infty}\left(1-c\eta\lambda\xi\right)^i\\
&=~\left(1-c\eta\lambda\xi\right)^{T}+c'\eta^2\frac{1}{c\eta\lambda\xi}
~\leq~\left(1-c\eta\lambda\xi\right)^{T}+\frac{c'}{c}\frac{\eta}{\lambda\xi}.\\
\end{align*}
\end{proof}

We now turn to prove that the event $B$ assumed in \lemref{lem:recurse}
indeed holds with high probability:
\begin{lemma}\label{lem:eventburn}
  The following holds for certain positive numerical constants $c_1,c_2,c_3$:
  If $\eta\leq c_1\lambda$, then for any $\beta\in (0,1)$, if
  \begin{equation}\label{eq:eventburn}
  b_{0}+c_2 T\eta^2 +c_3\sqrt{T\eta^2\log(1/\beta)}\leq 1-\xi,
  \end{equation}
  then it holds with probability at least $1-\beta$ that
  \[
  b_t~\leq~ b_{0}+c_2 T\eta^2 +c_3\sqrt{T\eta^2\log(1/\beta)}~\leq~ 1-\xi
  \]
  for all $t=0,1,\ldots,T$.
\end{lemma}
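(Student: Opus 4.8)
The plan is to follow the proof of \lemref{lem:event} almost verbatim, with the single new wrinkle that the favorable drift estimate \eqref{eq:bformburn} is only guaranteed while $b_t\le 1-\xi$; I would deal with this by passing to a stopped process. Concretely, let $\Fcal_t=\sigma(\bw_0,\ldots,\bw_t)$ and let $\tau=\min\{t\ge 0:b_t>1-\xi\}$ (with $\tau=\infty$ if no such $t$ exists), noting that each event $\{\tau>t\}=\{b_0\le 1-\xi,\ldots,b_t\le 1-\xi\}$ is $\Fcal_t$-measurable. The hypothesis \eqref{eq:eventburn} in particular gives $b_0\le 1-\xi$, so $\tau\ge 1$.

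The two ingredients I would isolate first are: (i) a one-sided drift bound, namely that on the event $\{t<\tau\}$ the estimate \eqref{eq:bformburn} (after discarding the favorable $-c\eta\lambda\xi\,b_t$ term, which only helps) yields $\E[b_{t+1}\mid\Fcal_t]\le b_t+c'\eta^2$; and (ii) an unconditional bounded-differences estimate — applying \lemref{lem:knn} with $k=1$ (whose hypothesis $\eta<\tfrac13$ holds once we take the constant $c_1$ small enough that $\eta\le c_1\lambda\le c_1\le\tfrac1{12}$) gives $|b_{t+1}-b_t|=\bigl|\inner{\bv_1,\bw_{t+1}}^2-\inner{\bv_1,\bw_{t}}^2\bigr|\le \frac{12\eta}{1-3\eta}\le 16\eta$ with probability one. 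With these in hand I would define the stopped process
\[
Y_t \;=\; b_{t\wedge\tau}\;-\;c'(t\wedge\tau)\eta^2 ,
\]
and check that $(Y_t)_{t\ge 0}$ is a supermartingale: on $\{\tau\le t\}$ it is constant, while on $\{\tau>t\}$ (which forces $(t+1)\wedge\tau=t+1$) ingredient (i) gives $\E[Y_{t+1}\mid\Fcal_t]=\E[b_{t+1}\mid\Fcal_t]-c'(t+1)\eta^2\le b_t-c't\eta^2=Y_t$. Its increments are bounded in absolute value by $16\eta+c'\eta^2\le 17\eta$ for $\eta$ small.

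I would then apply the maximal Hoeffding--Azuma inequality \cite{hoeffding1963probability} to $Y_t-Y_0$ over $t=0,\ldots,T$: with probability at least $1-\beta$, $\max_{0\le t\le T}(Y_t-Y_0)\le c_3\eta\sqrt{T\log(1/\beta)}$, which, since $Y_0=b_0$ and $t\wedge\tau\le T$, rearranges to
\[
b_{t\wedge\tau}\;\le\; b_0+c_2 T\eta^2+c_3\sqrt{T\eta^2\log(1/\beta)}\qquad\text{for all }0\le t\le T
\]
(renaming the constant $c'$ as $c_2$ and noting $\eta\sqrt{T\log(1/\beta)}=\sqrt{T\eta^2\log(1/\beta)}$). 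To finish, under the hypothesis \eqref{eq:eventburn} the right-hand side is $\le 1-\xi$; hence if $\tau\le T$, taking $t=\tau$ yields $b_\tau=b_{\tau\wedge\tau}\le 1-\xi$, contradicting the definition of $\tau$, so in fact $\tau>T$, whence $b_t=b_{t\wedge\tau}$ for every $t\le T$ and the displayed bound gives exactly $b_t\le b_0+c_2T\eta^2+c_3\sqrt{T\eta^2\log(1/\beta)}\le 1-\xi$ for all $t=0,1,\ldots,T$, which is the claim. The only genuine obstacle is the stopping-time bookkeeping — invoking the drift bound only on $\{t<\tau\}$ and applying the concentration inequality to the stopped (hence honestly supermartingale) process $Y_t$ rather than to the raw sequence $b_t$ — but once that scaffolding is in place the computation is the same routine Azuma argument used for \lemref{lem:event}.
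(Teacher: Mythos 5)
Your proof is correct and follows the same basic approach as the paper's: a drift estimate that holds conditionally on $b_t\le 1-\xi$, the bounded-difference bound $|b_{t+1}-b_t|\le 16\eta$ from \lemref{lem:knn} with $k=1$, and a maximal Hoeffding--Azuma application. The one place where you do more than the paper is precisely the point you flag as the ``genuine obstacle'': the paper simply states that the Azuma conclusion holds ``as long as the expression above is less than $1-\xi$,'' leaving the reader to supply the reason one may legitimately invoke the conditional drift bound uniformly over $t=0,\ldots,T$. Your stopped supermartingale $Y_t=b_{t\wedge\tau}-c'(t\wedge\tau)\eta^2$, together with the contradiction argument showing $\tau>T$ under hypothesis \eqref{eq:eventburn}, makes that step explicit and airtight: the drift inequality is only ever used on $\{t<\tau\}$ where it is actually available, the concentration inequality is applied to an honest supermartingale, and the conclusion $b_t=b_{t\wedge\tau}$ for $t\le T$ is deduced rather than tacitly assumed. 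So this is the same proof strategy, but your write-up closes a small rigor gap that the paper glosses over; nothing in your argument contradicts or bypasses what the paper does.
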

\begin{proof}
 To prove the lemma, we analyze the stochastic process $b_{1},b_{2},\ldots,b_T$, and
 use a concentration of measure argument. First, we collect the following
 facts:
 \begin{itemize}
     \item \emph{$b_{0}\leq 1-\xi$}: This directly follows from the
         assumption stated in the lemma.
     \item \emph{$\E\left[b_{t+1}\middle| b_t\right]\leq b_t+ c'\eta^2$ for
         some constant $c'$}: By \eqref{eq:bformburn},
 \begin{align*}
 \E\left[b_{t+1}\middle| W_t\right] ~&\leq~
\left(1-c\eta\lambda\xi\right)b_t
+c'\eta^2
~\leq~ b_t+c'\eta^2.
 \end{align*}
    \item \emph{$|b_{t+1}-b_t|$ is bounded by $c\eta$ for some constant
        $c$}: Applying \lemref{lem:knn} for the case $k=1$, and assuming
        $\eta\leq 1/12$,
\[
 |b_{t+1}-b_{t}|~=~
 \left|\inner{\bv_1,\bw_{t+1}}^2-\inner{\bv,\bw_{t}}^2\right|
 ~\leq~ \frac{12\eta}{1-3\eta} \leq \frac{12 \eta}{3/4} ~=~ 16\eta.
\]
 \end{itemize}
 Armed with these facts, and using the maximal version of the Hoeffding-Azuma inequality \cite{hoeffding1963probability}, it follows that with probability at least
 $1-\beta$, it holds simultaneously for all $t=0,1,\ldots,T$ that
 \[
 b_t\leq b_{0}+c_2 T\eta^2+c_3\sqrt{T\eta^2\log(1/\beta)}
 \]
 for some constants $c_2,c_3$. If the expression is indeed less than $1-\xi$, then we
 get that $b_t\leq 1-\xi$ for all $t$, from which the lemma follows.
\end{proof}

Combining \lemref{lem:recurseburn} and \lemref{lem:eventburn}, and using
Markov's inequality, we get the following corollary:

\begin{lemma}\label{lem:combineburn}
Let confidence parameters $\beta,\gamma\in(0,1)$ be fixed. Then for some
positive numerical constants $c_1,c_2,c_3,c,c'$, if $\eta\leq c_1\lambda$ and
\[
b_{0}+c_2 T\eta^2+c_3\sqrt{T\eta^2\log(1/\beta)}\leq 1-\xi,
\]
then with probability at least $1-(\beta+\gamma)$, it holds that
\[
b_{T} \leq \frac{1}{\gamma}
\left(\left(1-c\eta\lambda\xi\right)^{T}+c'\frac{\eta}{\lambda\xi}\right).
\]
\end{lemma}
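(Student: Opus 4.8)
The plan is to assemble the two preceding lemmas exactly as the surrounding text promises: \lemref{lem:eventburn} controls the probability that the iterates never leave the region $b_t\leq 1-\xi$, \lemref{lem:recurseburn} controls the conditional expectation of $b_T$ on that event, and Markov's inequality together with a union bound convert these into the stated high-probability bound. Throughout, write $B$ for the event $\{b_t\leq 1-\xi\text{ for all }t=0,1,\ldots,T\}$, and observe that $b_T = 1-\inner{\bv_1,\bw_T}^2\geq 0$ always, since $\inner{\bv_1,\bw_T}^2\leq\norm{\bv_1}^2\norm{\bw_T}^2=1$; this nonnegativity is what licenses the use of Markov's inequality below.

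First I would invoke \lemref{lem:eventburn}, whose hypotheses (namely $\eta\leq c_1\lambda$ together with $b_0+c_2T\eta^2+c_3\sqrt{T\eta^2\log(1/\beta)}\leq 1-\xi$) are precisely those assumed here; it yields $\Pr(B)\geq 1-\beta$. Next, conditioning on $B$, \lemref{lem:recurseburn} (which requires only $\eta\leq c_1\lambda$) gives $\E[b_T\mid B]\leq M$, where I set $M := (1-c\eta\lambda\xi)^T + c'\frac{\eta}{\lambda\xi}$. Applying Markov's inequality to the nonnegative random variable $b_T$ under the conditional law given $B$ (with threshold $M/\gamma$),
\[
\Pr\!\left(b_T > \tfrac{1}{\gamma}M \,\middle|\, B\right) \;\leq\; \frac{\E[b_T\mid B]}{M/\gamma} \;=\; \frac{\gamma\,\E[b_T\mid B]}{M} \;\leq\; \gamma .
\]
Finally, by the law of total probability,
\[
\Pr\!\left(b_T > \tfrac{1}{\gamma}M\right) \;\leq\; \Pr\!\left(b_T > \tfrac{1}{\gamma}M \,\middle|\, B\right)\Pr(B) + \Pr(\overline{B}) \;\leq\; \gamma + \beta ,
\]
so $b_T\leq \frac{1}{\gamma}\big((1-c\eta\lambda\xi)^T + c'\frac{\eta}{\lambda\xi}\big)$ with probability at least $1-(\beta+\gamma)$, which is the claim; the constants $c_1,c_2,c_3$ are inherited from \lemref{lem:eventburn}, and $c,c'$ are the constants $c_2,c_3$ of \lemref{lem:recurseburn}.

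There is essentially no obstacle here — all the real work was done in \lemref{lem:recurstext}, \lemref{lem:recurseburn} and \lemref{lem:eventburn}. The only points that need a moment's care are (i) that Markov's inequality must be applied to the \emph{conditional} distribution of $b_T$ given $B$, using the conditional expectation bound of \lemref{lem:recurseburn}, rather than to the unconditional $b_T$, and (ii) the observation that $b_T\geq 0$, which is immediate. Matching up the numerical constants across the two lemmas is the only bookkeeping required.
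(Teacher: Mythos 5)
Your proof is correct and takes exactly the route the paper intends: the paper simply says ``Combining \lemref{lem:recurseburn} and \lemref{lem:eventburn}, and using Markov's inequality,'' without writing out the details, and your argument -- conditional Markov on the event $B$ from \lemref{lem:eventburn} using the conditional-expectation bound of \lemref{lem:recurseburn}, followed by the total-probability union bound -- is precisely that calculation.
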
%

We are now ready to prove our theorem. By \lemref{lem:combineburn}, for any
$\beta,\gamma\in\left(0,\frac{1}{2}\right)$ and any
\begin{align}
&\eta\leq \min
\left\{c_1,\frac{1}{2c'}\gamma^2\right\}\lambda\xi~~~~\text{and}~~~~
T\geq \frac{3\log(1/\gamma)}{c\eta\lambda\xi}\notag\\
&~~~~\text{such that}~~~~b_{0}+c_2 T\eta^2+c_3\sqrt{T\eta^2\log(1/\beta)}\leq 1-\xi,
\label{eq:condmeburn}
\end{align}
we get with probability at least $1-(\beta+\gamma)$ that
\[
b_{T}~\leq~
\frac{1}{\gamma}\left(\left(1-c\eta\lambda\xi\right)^{\frac{3\log(1/\gamma)}{c\eta\lambda\xi}}
+\frac{1}{2}\gamma^2\right).
\]
Using the inequality $(1-(1/x))^{ax}\leq \exp(-a)$, which holds for any $x>1$
and any $a$, and taking $x = 1/(c\eta\lambda\xi)$ and $a = 3\log(1/\gamma)$,
we can upper bound the above by
\[
\frac{1}{\gamma}\left(\exp\left(-3\log\left(\frac{1}{\gamma}\right)\right)+\frac{1}{2}\gamma^2\right)
~=~ \frac{1}{\gamma}\left(\gamma^3+\frac{1}{2}\gamma^2\right),
\]
and since we assume $\gamma< \frac{1}{2}$, this is at most $\frac{1}{2}$.
Overall, we got that with probability at least $1-\beta-\gamma$, $b_{T}\leq
\frac{1}{2}$, and therefore $1-\inner{\bv_1,\bw_T}^2\leq \frac{1}{2}$ as
required.

It remains to show that the parameter choices in \eqref{eq:condmeburn} can
indeed be satisfied. First, we fix $\xi=\frac{1}{2}\zeta$ (where we recall
that $0<\zeta\leq \inner{\bv_1,\bw_0}^2$), which trivially ensures that
$b_0=1-\inner{\bv_1,\bw_0}^2$ is at most $1-2\xi$. Moreover, suppose we pick
$\beta=\gamma$ in $(0,\exp(-1))$, and $\eta,T$ so that
\begin{equation}\label{eq:etaTburn}
\eta\leq \frac{c_*\gamma^2\lambda\xi^3}{\log^2(1/\gamma)}~~,~~ T = \left\lfloor\frac{3\log(1/\gamma)}{c'_*\eta\lambda\xi}\right\rfloor,
\end{equation}
where $c_*,c'_*$ are sufficiently small constants so that the bounds on
$\eta,T$ in \eqref{eq:condmeburn} are satisfied. This implies that the third
bound in \eqref{eq:condmeburn} is also satisfied, since by plugging in the
values / bounds of $T$ and $\eta$, and using the assumptions
$\gamma=\beta\leq \exp(-1)$ and $\xi\leq 1$, we have
\begin{align*}
  &b_{0}+c_2 T\eta^2+c_3\sqrt{T\eta^2\log(1/\gamma)}\\
  &\leq 1-2\xi+c_2\frac{3\log(1/\gamma)}{c'_*\lambda\xi}\eta+c_3\sqrt{\frac{3\log(1/\gamma)}{c'_*\lambda\xi}\eta\log(1/\gamma)}\\
  &\leq 1-2\xi+c_2\frac{3c_*\gamma^2\xi^2}{c'_*\log(1/\gamma)}+
  c_3\sqrt{\frac{3c_*\gamma^2\xi^2}{c'_*}}\\
  &\leq 1-2\xi+\left(\frac{3c_2 c_*}{c'_*}+c_3\sqrt{\frac{3c_*}{c'_*}}\right)\xi,
\end{align*}
which is less than $1-\xi$ if we pick $c_*$ sufficiently small compared to
$c'_*$.

To summarize, we get that for any $\gamma \in (0,\exp(-1))$, by picking
$\eta$ as in \eqref{eq:etaTburn}, we have that after $T$ iterations (where
$T$ is specified in \eqref{eq:etaTburn}), with probability at least
$1-2\gamma$, we get $\bw_T$ such that $1-\inner{\bv_1,\bw_T}\leq
\frac{1}{2}$. Substituting $\delta=2\gamma$ and $\zeta=2\xi$, we get that if
  \[
  \inner{\bv_1,\tilde{\bw}_0}^2\geq\zeta>0,
  \]
  and $\eta$ satisfies
  \[
  \eta\leq \frac{c_1\delta^2\lambda\zeta^3}{\log^2(2/\delta)}
  \]
  (for some universal constant $c_1$), then with probability at least $1-\delta$, after
  \[
  T ~=~ \left\lfloor\frac{c_2\log(2/\delta)}{\eta\lambda\zeta}\right\rfloor.
  \]
  stochastic iterations, we get a satisfactory point $\bw_T$.

As discussed at the beginning of the proof, this analysis is valid assuming
$r=\max_i \norm{\bx_i}^2\leq 1$. By the reduction discussed at the beginning
of \subsecref{subsec:proofmainmain}, we can get an analysis for any $r$ by
substituting $\lambda\rightarrow \lambda/r$ and $\eta\rightarrow \eta r$.
This means that we should pick $\eta$ satisfying
\[
\eta r\leq \frac{c_1\delta^2(\lambda/r)\zeta^3}{\log^2(2/\delta)}
~~~\Rightarrow~~~ \eta\leq \frac{c_1\delta^2\lambda\zeta^3}{r^2\log^2(2/\delta)},
\]
and getting the required point after
  \[
T ~=~ \left\lfloor\frac{c_2\log(2/\delta)}{(\eta
r)(\lambda/r)\zeta}\right\rfloor
~=~
\left\lfloor\frac{c_2\log(2/\delta)}{\eta\lambda\zeta}\right\rfloor
  \]
  iterations.

\subsection{Proof of \thmref{thm:convex}}\label{subsec:proofconvex}

For simplicity of notation, we drop the $_A$ subscript from $F_A$, and refer
simply to $F$.

We first prove the following two auxiliary lemmas:

\begin{lemma}\label{lem:hessian}
  If $A$ is a symmetric matrix, then the gradient of the function
  $F(\bw)=-\frac{\bw^\top A \bw}{\norm{\bw}^2}$ at some $\bw$ equals
  \[
  -\frac{2}{\norm{\bw}^2}\left(F(\bw)I+A\right)\bw,
  \]
  and its Hessian equals
  \[
  -\frac{1}{\norm{\bw}^2}\left(\left(I-\frac{4}{\norm{\bw}^2}\bw\bw^\top\right)\bigg(F(\bw)I+A\bigg)\right)^{\bot},
  \]
  where $B^\bot = B+B^\top$ (i.e., a matrix $B$ plus its transpose).
\end{lemma}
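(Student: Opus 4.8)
The plan is to obtain both formulas by direct differentiation, writing $F$ as a ratio of two elementary quadratics and applying the quotient and product rules, the second time taking care of the $\bw$-dependence hidden inside $F$ itself.

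First I would set $g(\bw)=\bw^\top A\bw$ and $h(\bw)=\norm{\bw}^2=\bw^\top\bw$, so that $F=-g/h$, $\nabla g=2A\bw$ (using that $A$ is symmetric) and $\nabla h=2\bw$. The quotient rule then gives
\[
\nabla F(\bw) = -\frac{h\,\nabla g-g\,\nabla h}{h^2} = -\frac{2A\bw}{\norm{\bw}^2}+\frac{2g\,\bw}{\norm{\bw}^4} = -\frac{2}{\norm{\bw}^2}\Big(A\bw+\tfrac{-g}{h}\,\bw\Big) = -\frac{2}{\norm{\bw}^2}\big(F(\bw)I+A\big)\bw,
\]
where the last equality uses $-g/h=F(\bw)$. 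This is the claimed gradient.

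Next I would differentiate the gradient field $\nabla F(\bw)=-2h^{-1}(A\bw+F\bw)$ once more; since this is the Jacobian of a gradient it is automatically symmetric and equals the Hessian. Splitting into the summands $-2h^{-1}A\bw$ and $-2h^{-1}F\bw$ and using the rule that the Jacobian of a product $\phi(\bw)\bv(\bw)$ equals $\bv(\nabla\phi)^\top+\phi\,(\nabla\bv)$, the only point requiring attention is the scalar prefactor $-2h^{-1}F$ of the second summand: its gradient is $4h^{-2}F\bw-2h^{-1}\nabla F$, and substituting the gradient formula just derived turns this into $4h^{-2}\big(A+2F I\big)\bw$. Collecting the two Jacobians I expect to arrive at
\[
\nabla^2 F(\bw) = -\frac{2}{\norm{\bw}^2}\big(F(\bw)I+A\big) + \frac{4}{\norm{\bw}^4}\Big(A\bw\bw^\top+\bw\bw^\top A+2F(\bw)\bw\bw^\top\Big).
\]

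Finally I would verify that this matches the compact symmetrized form in the statement: writing $M=\big(I-\tfrac{4}{\norm{\bw}^2}\bw\bw^\top\big)(F(\bw)I+A)$, expanding $M+M^\top$ (again using $A^\top=A$), and forming $-\tfrac{1}{\norm{\bw}^2}(M+M^\top)$ reproduces the displayed expression term by term, so $\nabla^2F(\bw)=-\tfrac{1}{\norm{\bw}^2}\big((I-\tfrac{4}{\norm{\bw}^2}\bw\bw^\top)(F(\bw)I+A)\big)^{\bot}$ as claimed. The only ``hard'' part is purely the bookkeeping in the second differentiation — correctly propagating the $\bw$-dependence of $F$ through the scalar coefficient and then recognizing that the resulting sum of terms is exactly the symmetrization $B^{\bot}=B+B^\top$ of that rank-structured product; there is no conceptual obstacle.
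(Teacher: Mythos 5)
Your proposal is correct and follows essentially the same route as the paper: direct differentiation via the quotient/product/chain rules, arriving at the expanded Hessian $-\frac{2}{\norm{\bw}^2}(F(\bw)I+A)+\frac{4}{\norm{\bw}^4}(A\bw\bw^\top+\bw\bw^\top A+2F(\bw)\bw\bw^\top)$ and then matching it to the symmetrized form. The only cosmetic difference is that you propagate the $\bw$-dependence through the scalar $F(\bw)$ and re-use the already-derived gradient formula, whereas the paper differentiates the fully expanded expression $\bw\frac{2}{\norm{\bw}^4}(\bw^\top A\bw)-\frac{2}{\norm{\bw}^2}A\bw$ term by term; both yield identical intermediate expressions.
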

\begin{proof}
  By the product and chain rules (using the fact that $\frac{1}{\norm{\bw}^2}$ is a composition of $\bw\mapsto \norm{\bw}^2$ and $z\mapsto
  \frac{1}{z}$), the gradient of $F(\bw)=-\frac{1}{\norm{\bw}^2}\left(\bw^\top A \bw\right)$
equals
  \begin{equation}\label{eq:gggg0}
  \bw\frac{2}{\norm{\bw}^4}\left(\bw^\top A\bw\right)-\left(A\bw\right)\frac{2}{\norm{\bw}^2},
  \end{equation}
  giving the gradient bound in the lemma statement after a few
  simplifications.

  Differentiating the vector-valued \eqref{eq:gggg0} with respect to $\bw$ (using the product and chain rules, and the fact that $\frac{1}{\norm{\bw}^4}$ is a composition
  of $\bw\mapsto \norm{\bw}^2$, $z\mapsto z^2$, and $z\mapsto \frac{1}{z}$),
  we get that the Hessian of $F$ equals
  \begin{align*}
    &I \frac{2}{\norm{\bw}^4}(\bw^\top A\bw)~+~\bw\left(-\frac{2}{\norm{\bw}^8}*2\norm{\bw}^2*2\bw\right)^\top\left(\bw^\top A\bw\right)~+~\bw\frac{2}{\norm{\bw^4}}\left(2A\bw\right)^\top\\
    &~~~~~~-A\frac{2}{\norm{\bw}^2}-\left(A\bw\right)\left(-\frac{2}{\norm{\bw}^4}*2\bw\right)^\top\\
    &~=~-\frac{2F(\bw)}{\norm{\bw}^2}I~+~\frac{8F(\bw)}{\norm{\bw}^4}\bw\bw^\top~+~\frac{4}{\norm{\bw}^4}\bw\bw^\top A
    ~-~\frac{2}{\norm{\bw}^2}A~+~\frac{4}{\norm{\bw}^4}A\bw\bw^\top\\
    &=-\frac{1}{\norm{\bw}^2}\left(2F(\bw)I-\frac{8F(\bw)}{\norm{\bw}^2}\bw\bw^\top-\frac{4}{\norm{\bw}^2}\bw\bw^\top A+2A-\frac{4}{\norm{\bw}^2}A\bw\bw^\top\right),
  \end{align*}
  which can be verified to equal the expression in the lemma statement (using
  the fact that $A,\bw\bw^\top$ and $I$ are all symmetric matrices, hence
  equal their transpose).
  \end{proof}

\begin{lemma}\label{lem:geom}
  Let $\bw_0,\bv_1$ be two unit vectors such that $\norm{\bw_0-\bv_1}\leq
  \epsilon<\frac{1}{2}$ (which implies $\inner{\bw_0,\bv_1}>0$).
  Let $\bv'_1$ be the intersection of the ray $\{a\bv_1:a\geq 0\}$ with the
  hyperplane $H_{\bw_0}=\{\bw:\inner{\bw,\bw_0}=1\}$. Then
  $\norm{\bv'_1-\bw_0}\leq \frac{5}{4}\epsilon$.
\end{lemma}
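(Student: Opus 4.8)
The plan is to reduce the claim to a one-variable computation in the inner product $c := \inner{\bv_1,\bw_0}$. First I would observe that $\bv'_1$, lying on the ray $\{a\bv_1 : a\ge 0\}$ and on $H_{\bw_0}$, must satisfy $\inner{a\bv_1,\bw_0}=1$, i.e. $a = 1/c$ and hence $\bv'_1 = \bv_1/c$. This is well-defined precisely because $c>0$, which follows from $\norm{\bw_0-\bv_1}^2 = 2-2c \le \epsilon^2 < \tfrac14$, giving $c > \tfrac78 > 0$; this is also the place where the hypothesis $\epsilon<\tfrac12$ is used.

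Next I would compute $\norm{\bv'_1-\bw_0}^2$ directly. Using $\norm{\bv_1}=\norm{\bw_0}=1$ one gets
\[
\norm{\tfrac{1}{c}\bv_1 - \bw_0}^2 \;=\; \tfrac{1}{c^2} - \tfrac{2}{c}\inner{\bv_1,\bw_0} + 1 \;=\; \tfrac{1}{c^2} - 1 \;=\; \tfrac{(1-c)(1+c)}{c^2}.
\]
It then remains to bound the three factors in terms of $\epsilon$: from $2-2c\le\epsilon^2$ we get $1-c \le \tfrac{\epsilon^2}{2}$ and $c \ge 1-\tfrac{\epsilon^2}{2} > \tfrac78$, while trivially $1+c\le 2$. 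Substituting yields $\norm{\bv'_1-\bw_0}^2 \le \dfrac{(\epsilon^2/2)\cdot 2}{(7/8)^2} = \dfrac{64}{49}\epsilon^2$, hence $\norm{\bv'_1-\bw_0}\le \tfrac{8}{7}\epsilon \le \tfrac{5}{4}\epsilon$, as claimed.

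I do not expect any genuine obstacle here: the argument is an elementary computation once $\bv'_1$ is written as $\bv_1/c$. The only points needing a little care are verifying $c>0$ so that $\bv'_1$ is actually well-defined, and keeping the constant bookkeeping tight enough (via $\epsilon<\tfrac12$, hence $c>\tfrac78$) so that the final constant lands below $\tfrac54$ rather than merely below some larger value.
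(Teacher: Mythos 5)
Your proof is correct. The key identification $\bv'_1 = \bv_1/c$ with $c=\inner{\bv_1,\bw_0}$ is the same starting point as in the paper, but after that you diverge: the paper bounds $\norm{\bv'_1-\bw_0}$ by the triangle inequality, splitting it as $\norm{\bv_1-\bw_0}+\norm{\bv'_1-\bv_1}\le\epsilon+(a-1)$ with $a=1/c=\frac{2}{2-\epsilon^2}$, whereas you compute the squared distance exactly, getting the identity $\norm{\bv'_1-\bw_0}^2=\frac{1}{c^2}-1=\frac{(1-c)(1+c)}{c^2}$ and then bound each factor. Your route is slightly cleaner and gives a tighter constant ($\frac{8}{7}\epsilon$ rather than $\frac{5}{4}\epsilon$); it also sidesteps a small wrinkle in the paper's argument. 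The paper as printed has a sign typo ($\frac{\epsilon^2}{2+\epsilon^2}$ where $\frac{\epsilon^2}{2-\epsilon^2}$ is meant), and with the corrected term the inequality $\epsilon+\frac{\epsilon^2}{2-\epsilon^2}\le\frac{5}{4}\epsilon$ actually fails for $\epsilon$ just below $\tfrac12$ (it requires roughly $\epsilon\le\sqrt{6}-2\approx 0.449$). Your exact computation avoids that looseness entirely, giving a uniform bound $\frac{8}{7}\epsilon<\frac{5}{4}\epsilon$ for all $\epsilon<\tfrac12$.
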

\begin{proof}
  See Figure \ref{fig:const} in the main text for a graphical illustration.

  Letting $\bv'_1=a\bv$, $a$ must satisfy $\inner{a\bv_1,\bw_0}=1$. Since $\bv_1,\bw_0$
  are unit vectors, this implies
  \[
  a ~=~ \frac{1}{\inner{\bv_1,\bw_0}} ~=~ \frac{2}{2-\norm{\bv_1-\bw_0}^2},
  \]
  and since $\norm{\bv_1-\bw_0}\leq \epsilon$, this means that
  \[
  a\in \left[1,\frac{2}{2-\epsilon^2}\right].
  \]
  Therefore,
  \[
  \norm{\bv'_1-\bw_0}~\leq~ \norm{\bv_1-\bw_0}+\norm{\bv'_1-\bv_1} ~\leq~ \epsilon+\norm{a\bv_1-\bv_1} ~\leq~ \epsilon+|a-1|
  ~\leq~ \epsilon+\frac{2}{2-\epsilon^2}-1 ~=~ \epsilon+\frac{\epsilon^2}{2+\epsilon^2},
  \]
  and since $\epsilon< \frac{1}{2}$, this is at most $\frac{5}{4}\epsilon$.
\end{proof}

We now turn to prove the theorem. Let $\nabla^2(\bw)$ denote the Hessian at
some point $\bw$. To show smoothness and strong convexity as stated in the
theorem, it is enough to fix some unit $\bw_0$ which is $\epsilon$-close to
the leading eigenvector $\bv_1$ (where $\epsilon$ is assumed to be
sufficiently small), and show that for any point $\bw$ on $H_{\bw_0}$ which
is $\Ocal(\epsilon)$ close to $\bw_0$, and any direction $\bg$ along
$H_{\bw_0}$ (i.e. any unit $\bg$ such that $\inner{\bg,\bw_0}=0$), it holds
that $\bg^\top \nabla^2(\bw)\bg\in [\lambda,20]$. This implies that the
second derivative in an $\Ocal(\epsilon)$ neighborhood of $\bw_0$ on
$H_{\bw_0}$ is always in $[\lambda,20]$, hence the function is both
$\lambda$-strongly convex in that neighborhood.

More formally, letting $\epsilon\in (0,1)$ be a small parameter to be chosen
later, consider any $\bw_0$ such that
\[
\norm{\bw_0}=1~~,~~\norm{\bw_0-\bv_1} \leq \epsilon,
\]
any $\bw$ such that
\[
\inner{\bw-\bw_0,\bw_0}=0~~,~~\norm{\bw-\bw_0}\leq 2\epsilon,
\]
and any $\bg$ such that
\[
\norm{\bg}=1~~,~~\inner{\bg,\bw_0}=0.
\]
Our goal is to show that for an appropriate $\epsilon$, we have $\bg^\top
\nabla^2(\bw)\bg\in [\lambda,20]$. Moreover, by \lemref{lem:geom}, the
neighborhood set $H_{\bw_0}\cap B_{\bw_0}(2\epsilon)$ would also contain a
point $a\bv_1$ for some $a$, which is a global optimum of $F$ due to its
scale-invariance. This would establish the theorem.

The easier part is to show the upper bound on $\bg^\top \nabla^2(\bw)\bg$.
Since $\bg$ is a unit vector, it is enough to bound the spectral norm of
$\nabla^2(\bw)$, which equals
\begin{align*}
  &\left\|\frac{1}{\norm{\bw}^2}\left(\left(I-\frac{4}{\norm{\bw}^2}\bw\bw^\top\right)\bigg(F(\bw)I+A\bigg)\right)^{\bot}\right\|_{sp}\\
  &\leq \frac{2}{\norm{\bw}^2}\left\|\left(I-\frac{4}{\norm{\bw}^2}\bw\bw^\top\right)\bigg(F(\bw)I+A\bigg)\right\|_{sp}\\
  &\leq \frac{2}{\norm{\bw}^2}\left\|I-\frac{4}{\norm{\bw}^2}\bw\bw^\top\right\|_{sp}\norm{F(\bw)I+A}_{sp}\\
  &\leq \frac{2}{\norm{\bw}^2}\left(\norm{I}_{sp}+\left\|\frac{4}{\norm{\bw}^2}\bw\bw^\top\right\|_{sp}\right)\left(\norm{F(\bw)I}_{sp}+\norm{A}_{sp}\right).
\end{align*}
Since the spectral norm of $A$ is $1$, and $\norm{\bw}^2\geq 1$ (as $\bw$
lies on a hyperplane $H_{\bw_0}$ tangent to a unit vector $\bw_0$), it is
easy to verify that this is at most $2(1+4)(1+1) = 20$ as required.

We now turn to lower bound $\bg^\top \nabla^2(\bw)\bg$, which by
\lemref{lem:hessian} equals
\[
-\frac{1}{\norm{\bw}^2}\bg^\top\left(\left(I-\frac{4}{\norm{\bw}^2}\bw\bw^\top\right)\bigg(F(\bw)I+A\bigg)\right)^{\bot}\bg.
\]
Since $\bg^\top B^\bot \bg = \bg^\top B \bg + \bg^\top B^\top \bg = 2\bg^\top
B \bg$, the above equals
\begin{equation}\label{eq:gggg1}
-\frac{2}{\norm{\bw}^2}\bg^\top\left(I-\frac{4}{\norm{\bw}^2}\bw\bw^\top\right)\bigg(F(\bw)I+A\bigg)\bg.
\end{equation}
Using the fact that $\bw=\bw_0+(\bw-\bw_0)$, and $\inner{\bg,\bw_0}=0$, we
get that $\inner{\bg,\bw} = \inner{\bg,\bw-\bw_0}$. Moreover, since $A$ is
positive semidefinite and has spectral norm of $1$, $F(\bw)=-\frac{\bw^\top
A\bw}{\norm{\bw}^2}\in [-1,0]$. Expanding \eqref{eq:gggg1} and plugging these
in, we get
\begin{align*}
  &-\frac{2}{\norm{\bw}^2}\left(F(\bw)\bg^\top\left(I-\frac{4}{\norm{\bw}^2}\bw\bw^\top\right)\bg+
  \bg^\top\left(I-\frac{4}{\norm{\bw}^2}\bw\bw^\top\right)A\bg\right)\\
  &=\frac{2}{\norm{\bw}^2}\left(-F(\bw)\norm{\bg}^2+\frac{4F(\bw)}{\norm{\bw}^2}\inner{\bg,\bw-\bw_0}^2~-~\bg^\top A \bg+\frac{4}{\norm{\bw}^2}\inner{\bg,\bw-\bw_0}\bw^\top A\bg\right)\\
  &\geq \frac{2}{\norm{\bw}^2}\left(-F(\bw)\norm{\bg}^2-\frac{4}{\norm{\bw}^2}\norm{\bg}^2\norm{\bw-\bw_0}^2~-~\bg^\top A \bg-\frac{4}{\norm{\bw}^2}\norm{\bg}\norm{\bw-\bw_0}\norm{\bw}\norm{A}_{sp}\norm{\bg}\right).
\end{align*}
Since $\norm{\bg}=1$, $\norm{A}_{sp}= 1$, $\norm{\bw-\bw_0}\leq 2\epsilon$,
and $\norm{\bw}^2=\norm{\bw_0}^2+\norm{\bw-\bw_0}^2$ is between $1$ and
$1+4\epsilon^2$, this is at least
\begin{equation}\label{eq:gggg2}
\frac{2}{\norm{\bw}^2}\left((-F(\bw))-16\epsilon^2-\bg^\top A
\bg-8\epsilon\sqrt{1+4\epsilon^2}\right) ~=~
\frac{2}{\norm{\bw}^2}\left(-F(\bw)-\bg^\top A
\bg-8\epsilon\left(2\epsilon+\sqrt{1+4\epsilon^2}\right)\right).
\end{equation}

Let us now analyze $-F(\bw)$ and $\bg^\top A \bg$ more carefully. The idea
will be to show that since we are close to the optimum, $-F(\bw)$ is very
close to $1$, and $\bg$ (which is orthogonal to the near-optimal $\bw_0$) is
such that $\bg^\top A \bg$ is strictly smaller than $1$. This would give us a
positive lower bound on \eqref{eq:gggg2}.
\begin{itemize}
\item By the triangle inequality and the assumptions
    $\norm{\bw_0-\bv_1}\leq \epsilon$, $\norm{\bw-\bw_0}\leq 2\epsilon$, we
    have $\norm{\bw-\bv_1}\leq 3\epsilon$. Also, we claim that $F(\cdot)$
    is $4$-Lipschitz outside the unit Euclidean ball (since the gradient of
    $F$ at any point with norm $\geq 1$, according to \lemref{lem:hessian},
    has norm at most $4$). Therefore, $|F(\bw)+1|=|F(\bw)-F(\bv_1)| \leq
    4\norm{\bw-\bv_1}\leq  12\epsilon$, so overall,
    \begin{equation}\label{eq:fbw}
    F(\bw)\leq -1+12\epsilon.
    \end{equation}
\item Since $\inner{\bw_0,\bg}=0$, and $\norm{\bw_0-\bv_1}\leq \epsilon$,
    it follows that
    \[
    |\inner{\bv_1,\bg}|~\leq~ |\inner{\bv_1-\bw_0,\bg}|+|\inner{\bw_0,\bg}|~\leq~ \norm{\bv_1-\bw_0}\norm{\bg}+0 ~\leq~ \epsilon.
    \]
    Letting $\bv_1,\ldots,\bv_d$ and $1=s_1> s_2\geq..\geq s_d\geq 0$ be
    the eigenvectors and eigenvalues of $A$ in decreasing order (and
    recalling that $s_2\leq s_1-\lambda = 1-\lambda$ for some eigengap
    $\lambda>0$), we get
    \begin{align}
    \bg^\top A \bg &= \sum_{i=1}^{d}s_i\inner{\bv_i,\bg}^2 ~\leq~ \inner{\bv_1,\bg}^2+(1-\lambda)\sum_{i=1}^{d}\inner{\bv_i,\bg}^2\notag\\
    &=~ \inner{\bv_1,\bg}^2+(1-\lambda)(1-\inner{\bv_1,\bg}^2)~=~ \lambda\inner{\bv_1,\bg}^2+(1-\lambda)\notag\\
    &\leq \lambda\epsilon^2+(1-\lambda)~=~ 1-(1-\epsilon^2)\lambda.\label{eq:gag}
    \end{align}
\end{itemize}
Plugging \eqref{eq:fbw} and \eqref{eq:gag} back into \eqref{eq:gggg2}, we get
a lower bound of
\begin{align*}
&\frac{2}{\norm{\bw}^2}\left(1-12\epsilon-\left(1-(1-\epsilon^2)\lambda\right)-8\epsilon\left(2\epsilon+\sqrt{1+4\epsilon^2}\right)\right)\\
&=\frac{2}{\norm{\bw}^2}\left((1-\epsilon^2)\lambda-8\epsilon\left(1.5+2\epsilon+\sqrt{1+4\epsilon^2}\right)\right)\\
&=\frac{2}{\norm{\bw}^2}\left(1-\epsilon^2-\frac{8\epsilon\left(1.5+2\epsilon+\sqrt{1+4\epsilon^2}\right)}{\lambda}\right)\lambda.
\end{align*}
Using the fact that $\sqrt{1+z^2}\leq 1+z$, this can be loosely lower bounded
by
\[
\frac{2}{\norm{\bw}^2}\left(1-\epsilon-\frac{8\epsilon\left(2.5+4\epsilon\right)}{\lambda}\right)\lambda.
\]
 Recalling that $\norm{\bw}^2=\norm{\bw_0}^2+\norm{\bw-\bw_0}^2$ is at most
$1+4\epsilon^2$, and picking $\epsilon$ sufficiently small compared to
$\lambda$, (say $\epsilon= \lambda/44$), we get that the above is at least
$\lambda$, which implies the required strong convexity condition.

To summarize, by picking $\epsilon=\lambda/44$, we have shown that the
function $F(\bw)$ is $\lambda$-strongly convex and $20$-smooth in a
neighborhood of size $2\epsilon=\frac{\lambda}{22}$ around $\bw_0$ on the
hyperplane $H_{\bw_0}$, provided that $\norm{\bw_0-\bv_1}\leq \epsilon =
\frac{\lambda}{44}$. By \lemref{lem:geom}, we are guaranteed that this
neighborhood contains $\bv_1$ up to some rescaling (which is immaterial for
our scale-invariant function $F$), hence by optimizing $F$ in that
neighborhood, we will get a globally optimal solution.


\subsubsection*{Acknowledgments}
This research is supported in part by an FP7 Marie Curie CIG grant, the Intel
ICRI-CI Institute, and Israel Science Foundation grant 425/13.

\bibliographystyle{plain}
\bibliography{mybib}

\end{document}